\documentclass[twoside]{article}

\PassOptionsToPackage{compatV3}{fancyhdr}
\usepackage[preprint]{aistats2026}

\usepackage[utf8]{inputenc}
\usepackage[T1]{fontenc}
\usepackage{hyperref}
\usepackage{url}
\usepackage{booktabs}
\usepackage{amsfonts}
\usepackage{nicefrac}
\usepackage{microtype}
\usepackage{xcolor}

\usepackage{amsmath}
\usepackage{amsthm}
\usepackage{enumerate}
\usepackage{graphicx}
\usepackage{caption}
\usepackage{subcaption}
\usepackage{dsfont}
\usepackage{amssymb}
\usepackage{cleveref}

\usepackage{mathtools}
\usepackage{tikz}

\usepackage[most]{tcolorbox} 

\usepackage{mathrsfs}

\newcommand{\End}{\mathrm{End}}
\newcommand{\Hom}{\mathrm{Hom}}
\newcommand{\erw}{\mathbb{E}}
\newcommand{\id}{\mathrm{id}}
\newcommand{\one }{\mathbf{1}}
\newcommand{\dd}{\mathrm{d}}

\newcommand{\otensor}{\otimes}

\newcommand{\sse}{\subseteq}

\newcommand{\calJ}{\mathcal{J}}
\newcommand{\calS}{\mathcal{S}}

\newcommand{\calX}{\mathcal{X}}
\newcommand{\calH}{\mathcal{H}}
\newcommand{\calY}{\mathcal{Y}}

\newcommand{\calL}{\mathcal{L}}

\newcommand{\calE}{\mathcal{E}}
\newcommand{\sprod}[1]{\langle #1 \rangle}

\newcommand{\C}{\mathbb{C}}
\newcommand{\Hh}{\mathbb{H}}
\newcommand{\F}{\mathbb{F}}

\newcommand{\R}{\mathbb{R}}
\newcommand{\N}{\mathbb{N}}

\newcommand{\SO}{\mathrm{SO}}
\renewcommand{\O}{\mathrm{O}}
\newcommand{\U}{\mathrm{U}}
\newcommand{\Sp}{\mathrm{Sp}}

\DeclareMathOperator{\Span}{span}

\DeclareMathOperator{\range}{range}

\DeclareMathOperator{\Herm}{Herm}
\DeclareMathOperator{\Sym}{Sym}

\newtheorem{example}{Example}
\newtheorem{assumption}{Assumption}

\newtheorem{remark}{Remark}
\newtheorem{lemma}{Lemma}
\newtheorem{theorem}{Theorem}
\newtheorem{infthm}[theorem]{Informal Theorem}
\newtheorem{cor}{Corollary}
\newtheorem{prop}{Proposition}

\newtheorem{definition}{Definition}

\let\svthefootnote\thefootnote
\newcommand\freefootnote[1]{%
  \let\thefootnote\relax%
  \footnotetext{\hspace{-1.5em}#1}%
  \let\thefootnote\svthefootnote%
}

\usepackage{tikz-cd}

\newif\ifJunk
\Junktrue
\usepackage{cancel}

 \usepackage[
    backend=biber,
    style=authoryear,
    maxcitenames=1,
    uniquelist=false
  ]{biblatex}
\newtcolorbox{researchquestion}{
    enhanced,
    boxrule=0pt,
    frame hidden,
    borderline west={2pt}{0pt}{black!70},
    colback=gray!5,
    sharp corners,
    fontupper=\itshape,
    before skip=1em,
    after skip=1em
}

\begin{document}

\runningtitle{Conservation Laws from Data Symmetry}

\twocolumn[

\aistatstitle{Conservation Laws from Data Symmetry in Neural Networks}

\aistatsauthor{ Jakob Galley * \And Vahid Shahverdi * \And  Axel Flinth * }

\aistatsaddress{Ume{\aa} University, Ume{\aa}, Sweden} ]

\begin{abstract}
 We explore whether intrinsic symmetries of the training data lead to conserved quantities during gradient-flow training of neural networks. Under the assumption that the loss function is analytic and non-polynomial, we prove that data symmetries generically do not induce any additional integrals of motion. For mean squared error (MSE) loss, on the other hand, there are situations in which data augmentation yields extra conserved quantities. We build a framework, utilizing \emph{tensorizable networks} to describe this phenomenon. Tensorizable networks are a family of architectures whose dependence on parameters and inputs can be separated using an intermediate representation. They include linear and polynomial networks, as well as Lightning Attention.
\end{abstract}

\section{INTRODUCTION}
When studying gradient-flow training of neural networks -- or more generally any dynamics -- it is useful to find quantities that remain constant along the trajectories. Such conservation laws, or \emph{integrals of motion}, constrain the optimization path and help explain initialization dependence, convergence, and implicit bias \parencite{NIPS2000_13168e6a, arora_implicit_2019, neyshabur_search_2015, soudry_implicit_2024, gunasekar_characterizing_2018,zhang_understanding_2021}. As a concrete example, much of the literature on linear and homogeneous models relies on quantities that are preserved during training  \parencite{saxe2013exact, du2018algorithmic, ji2018gradient,arora2019convergence}.

\begin{figure}[t]
    \centering
    \includegraphics[width=\columnwidth]{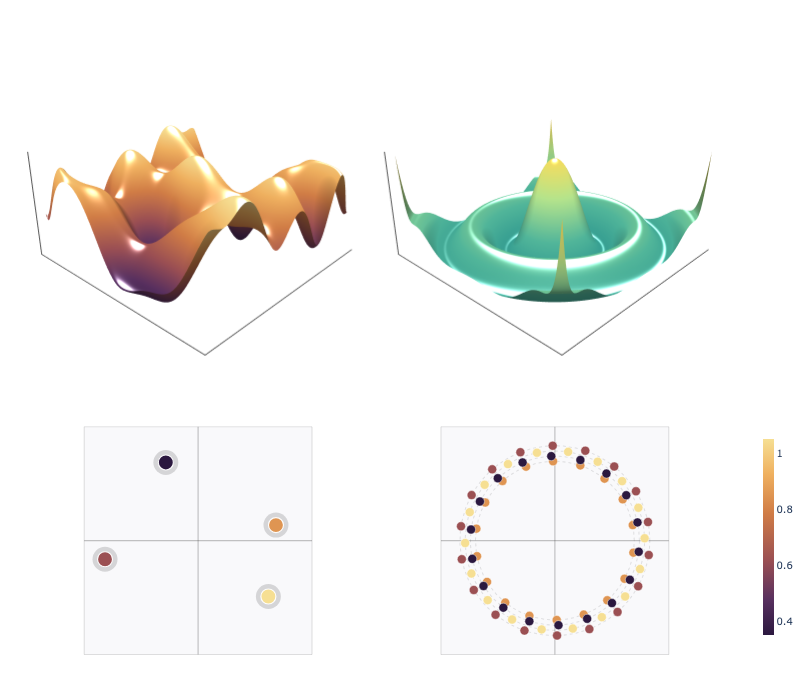}
    \caption{
    A display of how data symmetry can give rise to conservation laws. The top row shows the ordinary and group-augmented loss landscapes for a two-parameter neural network, with the corresponding training data shown below.
    }
    \label{fig:symmetric-loss-landscape}
\end{figure}

Historically, most analyses of this type study integrals of motion that are independent of the training data. This is in particular true for the seminal papers  \parencite{marcotte2023abide, marcotte2024keep}. In many applications, however, the training data is not arbitrary. It often carries inductive bias such as 
symmetry under permutations, rotations, reflections, or other group actions. Moreover, if the learning target is known to have a symmetry a priori -- meaning that it is equivariant with respect to a group action -- then the symmetry can be introduced into training through data augmentation  \parencite{chen2020group, lyle2005benefits, mei2021learning}. \freefootnote{\quad *Equal contribution.}

This motivates the main question of our work:
\begin{researchquestion}
Can symmetries in training data lead to new integrals of motion in gradient-flow training?
\end{researchquestion}

This question is related to the classical relation between symmetries and conservation laws. Noether's theorem \parencite{noether1918invariante} states that, under suitable assumptions, a continuous symmetry of a dynamical system gives rise to an integral of motion. Several works in this direction study symmetries of the parametrization, or of the learning dynamics, and derive restrictions on gradient paths \parencite{gluch2021noether,tanaka2021noether}. Related analyses connect continuous parameter symmetries to broken conservation laws, flat minima, and coordinates along low-loss valleys \parencite{kunin2020neural,zhao2022symmetries}.

However, in these prior works, the symmetry exists in the \emph{parameter space} $\Theta$ of the model -- transforming the parameters according to some group action does not change the function they parametrize. The symmetries we consider here are of a different nature. They are present in the \emph{data} -- transforming it with respect to some group action leaves it invariant.

We note that we are still inspired by Noether. In particular, our analysis for finding new integrals of motion boils down to showing that a symmetry in the data \emph{introduces} a new symmetry on the parameter space. This emergent parameter symmetry is far from obvious, as it is connected with both the geometry of the symmetry group and the network architecture itself.

\subsection{Summary of Results}
Our contribution is theoretical, yielding two main results that investigate when data symmetries do -- and do not -- generate new conservation laws.

The first result concerns cases when data symmetries do not induce any new integrals of motion. We assume that the parameters are initialized outside of exceptional regions where the model already exhibits equivariance-type degeneracies (discussed further in Section \ref{subsec:no_new_conservation_laws}). Since these degenerate loci have measure zero in the parameter space of non-equivariant networks, a standard random initialization will almost surely avoid them.

\begin{infthm}[Section \ref{subsec:no_new_conservation_laws}]
    Symmetric data does not yield any new integrals of motion for any analytic non-polynomial margin loss and finite symmetry group for almost all initialization.
\end{infthm}
Polynomial margin losses, such as the MSE loss, however, act differently. This case is analyzed here only for \emph{tensorizable networks}, i.e., those neural networks whose dependency on the input and on the parameters can be separated by a transformation (see Section \ref{subsec:tensor}). In such networks, a symmetry in the training set gives rise to a new symmetry in the parameter space. Most importantly, the newly induced symmetry in the parameter space may well be continuous, despite the data symmetry being discrete.

\begin{infthm}[Section \ref{subsec:more_integrals}]
\label{infthm:MSE_loss}
    For tensorizable networks trained with MSE loss, data symmetries can create new integrals of motion. 
\end{infthm}

\section{RELATED WORK}

Our framework is related to network identifiability, geometric deep learning, and data augmentation, which we briefly explain here.

\paragraph{Model Symmetry and Identifiability.}
A line of work close to ours studies parameter space symmetries -- weight transformations that leave the realized function invariant. In slightly more formal language, these operations preserve the \emph{fibers} of the neural network parametrization map. \emph{Identifiability} asks whether these fibers are exhausted by the natural symmetries of the model, such as permutations and rescalings \parencite{phuong2020functional,vlavcic2022neural}, or if it also allows for more exotic operations \parencite{grigsby2023hidden, grillo2026most}. Moreover, for certain architectures, such as polynomial and group convolutional networks, identifiability has recently been characterized using algebraic geometry \parencite{shahverdi2026learning,hendi2026geometry}. Our work can be phrased as the data symmetry enlarging the fibers of the training loss in a way that gives a parameter symmetry.

\paragraph{Equivariant Networks.} 
Our work also connects to \emph{geometric deep learning} \parencite{bronstein2021geometric}, which focuses on building equivariance directly into the neural network. Through this approach, one can design architectures that are equivariant to a given symmetry group. A standard recipe for this is to make every individual layer equivariant, leading to architectures like group convolutional networks \parencite{cohen2016group}. A recent work proves that, under certain identifiability assumptions, this layer-wise construction is essentially the only way to build equivariant models \parencite{shahverdi2026identifiable}.

While geometric deep learning requires a priori knowledge of a symmetry group to hardcode it into the architecture, our framework applies to general networks where integrals of motion emerge from symmetries in the training data -- even if that data symmetry is entirely unknown.

\paragraph{Data Augmentation and Network Training.}
The impact of data symmetries has been studied before in the context of data augmentation \parencite{chen2020group, lyle2005benefits, mei2021learning,chen2023implicit,duan2025understanding}. Particularly close to our setting is the comparison between equivariant training and training with augmented data. Under suitable assumptions on the architecture, the space of parameters corresponding to `canonical equivariant networks'  becomes invariant under augmented gradient flow, and regularization can even make the set attractive \parencite{nordenfors2025data,nordenfors2025optimization}. This can be viewed as a conservation-type statement on a special subset of parameters. Our goal is instead to characterize when data symmetry creates new integrals of motion on generic regions of the parameter space.

\section{BACKGROUND}
We introduce the notation and the main concepts used throughout the paper. 

Let $\mathcal X$ be the input space, $\mathcal Y$ the output space, and $\Theta$ the parameter space. We assume that these are finite-dimensional Euclidean spaces. A neural
network parametrization is a {smooth} map
\begin{equation}
\begin{aligned}
    f:\Theta \to
    \mathcal F(\mathcal X,\mathcal Y),\quad 
    \theta \mapsto f_\theta,
\end{aligned}
\end{equation}
where $\mathcal F(\mathcal X,\mathcal Y)$ denotes the space of maps from $\calX$ to $\calY$. Given a {compactly supported} distribution $\pi$ on $\calX \times \calY$ and a smooth loss $\ell:\mathcal Y\times\mathcal Y\to\mathbb R$, we define the empirical risk 
\begin{equation}
    \mathcal E_\pi(\theta) = \mathbb{E}_{(x,y)\sim \pi}( \ell(f_\theta(x),y)).
\end{equation}
For a single data point, we write $\mathcal E_{(x,y)}$ instead of $\mathcal E_{\delta_{(x,y)}}$, where $\delta_{(x,y)}$ is the Dirac distribution concentrated at $(x,y)\in\mathcal X\times\mathcal Y$.

We will consider training under gradient flow, defined by the following ordinary differential equation (ODE)
\begin{equation}
\label{eq:ode}
    \dot\theta = -\nabla \mathcal E_\pi(\theta).
\end{equation}

\subsection{Integrals of motion}
We are interested in functions of the parameters that remain constant along gradient flow. Since our goal is to understand conservation laws forced by the model class rather than by one particular dataset, we follow \parencite{marcotte2023abide} and use the following dataset-independent notion.

\begin{definition}
\label{def:integral_of_motion}
Let $\Omega\subseteq\Theta$ be an open set in Euclidean topology. A smooth function $I:\Omega\to\mathbb R$ is an \emph{integral of motion} if, for every {compactly supported} distribution $\pi$ and every solution of \eqref{eq:ode}, we have
\begin{equation} \label{eq:preserved}
    I(\theta(t))=I(\theta(0))
\end{equation}
for all $t$ for which $\theta(t)\in\Omega$.
\end{definition}

By the chain rule, $I$ is an integral of motion if and only if its gradient is everywhere orthogonal to the training dynamics
\begin{equation}
\label{eq:integral_grad}
    \left\langle \nabla I(\theta), \nabla \mathcal E_\pi(\theta) \right\rangle = 0
\end{equation}
for every distribution $\pi$. Since the risk $\mathcal E_\pi$ is an expectation over individual data points, linearity implies that \eqref{eq:integral_grad} holds if and only if $\langle \nabla I(\theta), \nabla \mathcal E_{(x,y)}(\theta) \rangle = 0$ for all individual data points $(x,y) \in \calX \times \calY$.

\subsection{Data symmetries}
The main focus of this paper is the case where the training data is not arbitrary, but constrained by a symmetry. Let $G$ be a compact group acting linearly on $\calX$ and $\calY$ through orthogonal representations $\rho_{\calX}$ and $\rho_{\calY}$ . A distribution $\pi$ on $\calX\times\calY$ is called symmetric under a group $G$ if applying the group action does not change its law, that is,
\begin{equation}
    (x,y)\sim\pi \quad\Longrightarrow\quad \left( \rho_{\calX}(g)x, \rho_{\calY}(g)y \right)\sim\pi
\end{equation}
for every $g \in G$. 

We say that $I$ is an \emph{integral of motion for symmetric data under $G$} if we have \eqref{eq:preserved} for every \emph{symmetric} distribution $\pi$ and solution \eqref{eq:ode}.

Moreover, this symmetry can be incorporated into the loss by averaging over the group. Let $\mu$ be the normalized Haar measure on $G$ \footnote{ 
A regular Borel measure on $G$ is a \emph{Haar measure} if it is invariant under left and right multiplication by elements of $G$, and \emph{normalized} if it is scaled so that $\mu(G)=1$.}. For any distribution $\pi$, we define the group-augmented risk
\begin{equation}
\label{eq:augmented_risk}
    \calE^G_\pi(\theta) = \erw_{\substack{(x,y)\sim\pi\\ g\sim\mu}} \big( \ell\left(f_\theta(\rho_{\calX}(g)x),
     \rho_{\calY}(g)y \right) \big).
\end{equation}
If $\pi$ is already symmetric under $G$, then the group averaging does not change the risk i.e., $\calE^G_\pi=\calE_\pi$. Conversely, for an arbitrary distribution $\pi$, the augmented risk is the ordinary risk associated with the symmetrized distribution obtained by averaging $\pi$ over the group. This gives the following equivalent formulation.

\begin{prop} \label{prop:augflow}
    A function $I$ is an integral of motion for symmetric data under $G$ if and only if for every distribution $\pi$ on $\calX\times\calY$, it is preserved by the augmented gradient flow
    \begin{equation}
        \dot{\theta}= - \nabla \calE_\pi^G(\theta).
    \end{equation}
\end{prop}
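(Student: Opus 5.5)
The proof rests on one identity: for any distribution $\pi$, the augmented risk is the ordinary risk of a symmetrized distribution,
\begin{equation*}
    \calE^G_\pi = \calE_{\pi^G}, \qquad \pi^G := \int_G (\rho_\calX(g)\times\rho_\calY(g))_\# \pi \, \dd\mu(g).
\end{equation*}
Two facts are needed about $\pi^G$.

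First, $\pi^G$ is again a compactly supported distribution. It is a Haar-average of pushforwards of a probability measure, so it is a probability measure. Its support lies in the image of the compact set $G\times\operatorname{supp}\pi$ under the continuous map $(g,x,y)\mapsto(\rho_\calX(g)x,\rho_\calY(g)y)$, which is compact.

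Second, $\pi^G$ is $G$-symmetric. Pushing $\pi^G$ forward by the action of $h\in G$ gives $\int_G (\rho(hg))_\#\pi\,\dd\mu(g)$. Left invariance of $\mu$ turns this back into $\pi^G$.

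The identity itself follows from Fubini, which applies because $\ell\circ f_\theta$ is continuous and the integration region is compact:
\begin{equation*}
    \calE_{\pi^G}(\theta)=\int_G\erw_{\pi}\,\ell(f_\theta(\rho_\calX(g)x),\rho_\calY(g)y)\,\dd\mu(g)=\calE^G_\pi(\theta).
\end{equation*}
Conversely, if $\pi$ is already symmetric, then each pushforward $(\rho(g))_\#\pi$ equals $\pi$, so $\pi^G=\pi$ and $\calE^G_\pi=\calE_\pi$.

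With this in hand, both directions are immediate.

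($\Rightarrow$) Suppose $I$ is an integral of motion for symmetric data, and let $\pi$ be arbitrary. The augmented flow $\dot\theta=-\nabla\calE^G_\pi(\theta)$ is exactly the ordinary gradient flow for $\pi^G$. Since $\pi^G$ is symmetric and compactly supported, $I$ is preserved along this flow.

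($\Leftarrow$) Suppose $I$ is preserved by every augmented flow, and let $\pi$ be symmetric. Then $\calE_\pi=\calE^G_\pi$, so the ordinary gradient flow for $\pi$ coincides with an augmented flow. Hence $I$ is preserved.

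The only step needing care is differentiating under the integral, which is required for $\nabla\calE^G_\pi=\nabla\calE_{\pi^G}$. In fact this is not needed separately. The identity $\calE^G_\pi=\calE_{\pi^G}$ holds pointwise in $\theta$, so the two functions are equal and therefore have the same gradient; it suffices that $\calE_{\pi^G}$ is differentiable. That holds by dominated convergence: the derivatives of $\ell(f_\theta(x),y)$ are continuous, hence bounded on compact sets of $(\theta,x,y)$.

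Optionally, one can also phrase the result through the criterion \eqref{eq:integral_grad}, namely $\langle\nabla I,\nabla\calE_\pi\rangle=0$ for all symmetric $\pi$. By the identity above, this is equivalent to the same condition for $\nabla\calE^G_\pi$ over all $\pi$.
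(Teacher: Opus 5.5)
Your proof is correct and takes the same route as the paper. You symmetrize $\pi$ to $\pi^G$, use $\calE^G_\pi=\calE_{\pi^G}$ (and $\calE^G_\pi=\calE_\pi$ for symmetric $\pi$), and read off both directions; beyond the paper, you also verify that $\pi^G$ is compactly supported and $G$-symmetric and justify the Fubini and differentiability steps, which the paper leaves implicit.
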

The detailed proof is given in Appendix \ref{app:otherproofs}.

By Proposition \ref{prop:augflow}, a \emph{new} integral of motion exists only if a function is conserved under the augmented risk $\mathcal E^G$ but not under the standard risk $\mathcal E$. Following the geometric condition in \eqref{eq:integral_grad}, this requires the augmented gradient flow to span a strictly smaller subspace of the parameter tangent space i.e.,
\begin{equation} \label{eq:span}
    \operatorname{span}_{(x,y)}
    \nabla\mathcal E^{G}_{(x,y)}(\theta)
    \subsetneq
    \operatorname{span}_{(x,y)}
    \nabla\mathcal E_{(x,y)}(\theta),
\end{equation}
where the span is taken over $(x,y)\in \calX \times \calY$. 
Only when \eqref{eq:span} holds may  there be functions whose gradients are orthogonal to all augmented, but not all original, directions.

\subsection{Tensorizable networks}
\label{subsec:tensor}
Here, we introduce a class of models for which the symmetry induced by augmentation can be studied through a lifted feature space. 

\begin{definition}
A model $f_\theta:\mathcal X\to\mathcal Y$ is called
\emph{tensorizable} if there exist a vector space $\mathcal H$, a parameter-independent map $T:\mathcal X\to\mathcal H$, and a {smooth} map $M: \Theta \to \mathrm{Hom}(\mathcal H,\mathcal Y)$ \footnote{Here, $\mathrm{Hom}(\mathcal H,\mathcal Y)$ denotes the vector space of linear maps from $\mathcal H$ to $\mathcal Y$.} such that
\begin{equation}
\label{eq:tensor_form}
    f_\theta(x) = M(\theta)T(x)
\end{equation}
for all $\theta\in\Theta$ and $x\in\mathcal X$.
\end{definition}

Thus the input is first lifted to a feature space $\calH$, after which the parameters act linearly on the lifted feature. The map $M(\theta)$ may still depend nonlinearly on the original parameters, as can $T(x)$ on the data.

Linear and polynomial models, which are among the core models in neuroalgebraic geometry \parencite{marchetti2025algebra}, are tensorizable.  More generally, if the output of a model depends on the input as a polynomial of degree at most $k$, one may take $T(x)$ to be the vector of all input monomials up to degree $k$,
\begin{equation}
    T(x) = (1,x,x^{\otimes 2},\ldots,x^{\otimes k}),
\end{equation}
while $M(\theta)$ collects the corresponding parameter-dependent coefficients. 

In the example below, we construct the tensorized form for Lightning Attention. 

\begin{example}
\label{ex:tensorized_lightning}
Let $\mathcal X=\mathbb R^{n\times d}$ and $\mathcal Y=\mathbb R^{n\times m}$ denote the input and output spaces. Here $n$ is the number of tokens, and $d,m$ are the input and output feature dimensions, respectively. Consider single-head, single-layer Lightning Attention without normalization,
\begin{equation}
    f_{\theta}(X)=XQK^\top X^\top X V,
\end{equation}
where $X\in\mathcal X$, and $\theta = (Q,K,V)$ with $Q,K \in \mathbb R^{d\times r}$ and $V\in \mathbb{R}^{d\times m}$.

We show that this network admits a tensorized form. To do this, let $x_i^\top $ be the rows of $X$ and $Z_X = X^\top X$. The lifted feature keeps track of each token together with the quadratic
feature covariance
\begin{equation}
     T(X) := \bigl( x_i\otimes Z_X\bigr)_{i=1}^n,
\end{equation}
which is an element of the  space 
\begin{equation}
    \mathcal H := \bigl(\mathbb R^d \otimes\operatorname{Sym}^2(\mathbb R^d)\bigr)^n.
\end{equation}
Here, $\operatorname{Sym}^2(\mathbb R^d)$ denotes the space of symmetric second-order tensors over $\mathbb R^d$, identified with the space of symmetric $d\times d$ matrices. 

Now, we construct the linear map $M(\theta)$. First, define $C_\theta$ on $\R^d\otimes \operatorname{Sym}^2(\R^d)$ through 
\begin{equation}
\begin{split}
    C_\theta &: \mathbb R^d\otimes \operatorname{Sym}^2(\mathbb R^d) \to \mathbb R^m, \\
    C_\theta(v\otimes S) &:= V^\top S KQ^\top v .
\end{split}
\end{equation}

Now define $M(\theta)$ through letting $C_\theta$ act component-wise. That is, the $i$-th component of $M(\theta)T(X)$ is
\begin{equation}
    V^\top Z_XKQ^\top x_i,
\end{equation}
which is the transpose of the $i$-th row of
$XQK^\top X^\top XV$. Hence, after identifying
$\mathbb R^n\otimes\mathbb R^m$ with
$\mathbb R^{n\times m}$, we have the tensorized form \eqref{eq:tensor_form}.
\end{example}

\section{RESULTS}
In this section, we provide our main results and state under what conditions data augmentation can or cannot create new integrals of motion.
\subsection{The absence of new conservation laws for analytic margin losses}
\label{subsec:no_new_conservation_laws}

We aim to show that for finite groups and analytic, non-polynomial margin losses, a reduction in the span of gradient directions (as in \eqref{eq:span}) generically \emph{does not} occur. Consequently, data symmetries \emph{do not} induce any new integral of motion, provided we avoid exceptional regions of the parameter space where the network is already equivariant.

Throughout this subsection, we focus on margin-type losses of the form
\begin{equation}
\label{eq:margin_loss_negative}
    \ell(\hat y,y) = \mathcal L(\langle \hat y,y\rangle),
\end{equation}
where $\mathcal L:\mathbb R\to\mathbb R$ is an analytic, non-polynomial function (such as the logistic or exponential loss).

Before stating the main theorem, we must exclude degenerate cases.  As discussed in the related work,  \parencite{nordenfors2025optimization} shows that the subspace of equivariant parameters is invariant under training on symmetric data. Specifically, if parameters are initialized such that $f_\theta(\rho_{\calX}(g)x) = \rho_{\calY}(g) f_\theta(x)$ for all $g \in G$, this equivariance is strictly preserved by the augmented gradient flow. While this yields a conservation statement on this specific locus, it is a property of the model's inherent symmetry rather than a \emph{new} conservation law emerging on a \emph{generic, open} subset of the parameter space.

There is one more subtlety, namely the flexibility of the choice of $\rho_\calY(g)$. If $\varepsilon :G \to \{\pm 1\}$ is equivariant (a \emph{character}), $\rho_\calY\cdot \varepsilon$ is also a valid representation, and the set of $\theta$ yielding the symmetry $f_\theta\circ \rho_\calX(g) = \rho_\calY^{\varepsilon}\circ f$ are preserved in the same manner as just described.

 To exclude these obstructions, define for fixed input $x$ and parameters $\theta$ the quantity
\begin{equation}
\label{eq:chi_negative}
    \chi^{x,\theta}(g) =
    \rho_{\calY}(g)^{-1} f_\theta(\rho_{\calX}(g)x).
\end{equation}
For equivariant networks with respect to $\rho_\calY$($\rho_\calY^\varepsilon$) all values of $\chi^{x,\theta}$ coincide (up to sign, respectively). 
To exclude these cases, we make the following assumptions:
 \begin{assumption}
    \label{ass:negres}
There exist a dense subset $\calX_{\mathrm{reg}}\subseteq\calX$ and an open subset $\Theta_{\mathrm{reg}}\subseteq\Theta$ such that, for every $x\in\calX_{\mathrm{reg}}$ and $\theta\in\Theta_{\mathrm{reg}}$, the map $\chi^{x,\theta}:G\to\calY$ has no zero values and is injective modulo signs. The latter means for any $g \neq h$
\begin{equation}
    \chi^{x,\theta}(g) \neq \pm \chi^{x,\theta}(h).
\end{equation}
\end{assumption}

We can now state the following theorem. 
 \begin{theorem}
\label{thm:no_new_integrals_nonpolynomial}
     Assume that $G$ is finite and acts orthogonally on $\calX$ and $\calY$. Let $\ell$ be a margin loss of the form \eqref{eq:margin_loss_negative}, where $\mathcal L$ is analytic but not a polynomial. Under Assumption~\ref{ass:negres},
     for every $\theta\in\Theta_{\mathrm{reg}}$, the span of the augmented gradients matches the span of the standard empirical gradients:
     \begin{equation}
\label{eq:negative_span_equality}
\begin{aligned}
    \operatorname{span}_{(x,y)} \nabla \mathcal E^G_{(x,y)}(\theta) = \operatorname{span}_{(x,y)} \nabla \mathcal E_{(x,y)}(\theta).
\end{aligned}
\end{equation}
Consequently, every integral of motion on $\Theta_{\mathrm{reg}}$ for symmetric data is already an integral of motion for arbitrary data. 
 \end{theorem}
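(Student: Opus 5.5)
The inclusion ``$\subseteq$'' in \eqref{eq:negative_span_equality} is immediate, so the work is the reverse inclusion, which I would get from a linear-independence argument in a scaling parameter of the label. Write $J(x)$ for the Jacobian of $\theta\mapsto f_\theta(x)$ at the fixed $\theta\in\Theta_{\mathrm{reg}}$. For a single point,
\[
\nabla\mathcal E_{(x,y)}(\theta)=\mathcal L'(\langle f_\theta(x),y\rangle)\,J(x)^\top y ,
\]
and
\[
\nabla\mathcal E^G_{(x,y)}(\theta)=\tfrac{1}{|G|}\sum_{g\in G}\nabla\mathcal E_{(\rho_\calX(g)x,\rho_\calY(g)y)}(\theta).
\]
Each augmented gradient is therefore an average of ordinary single-point gradients, which gives ``$\subseteq$''. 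Denote the right-hand span by $W$ and the left-hand (augmented) span by $W^G$. It remains to show that every $\nabla\mathcal E_{(x,y)}(\theta)$ lies in $W^G$.

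\emph{Rewriting the augmented gradient.} Since $\rho_\calY$ is orthogonal,
\[
\langle f_\theta(\rho_\calX(g)x),\rho_\calY(g)y\rangle=\langle \chi^{x,\theta}(g),y\rangle=:a_g(y).
\]
Fix $x\in\calX_{\mathrm{reg}}$ and replace $y$ by $sy$ with $s\in\R$. Setting $v_g:=J(\rho_\calX(g)x)^\top\rho_\calY(g)y$, this gives
\[
|G|\,\nabla\mathcal E^G_{(x,sy)}(\theta)= s\sum_{g\in G}\mathcal L'(s\,a_g)\,v_g .
\]
By Assumption~\ref{ass:negres}, the vectors $\chi^{x,\theta}(g)$ are nonzero and pairwise distinct up to sign. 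Hence the finitely many hyperplanes $\{y:\langle\chi(g),y\rangle=0\}$ and $\{y:\langle\chi(g)\mp\chi(h),y\rangle=0\}$ for $g\neq h$ are proper, and for $y$ in the dense complement of their union the numbers $a_g$ are nonzero with pairwise distinct absolute values.

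\emph{Key step: linear independence.} Take a linear functional $\varphi$ vanishing on $W^G$, and set $w_g=\varphi(v_g)$. Then
\[
\sum_g \mathcal L'(s a_g)\,w_g=0\qquad\text{for all } s\neq 0,
\]
and by continuity also at $s=0$. Expand $\mathcal L'$ in its Taylor series $\sum_k c_k t^k$ near $0$, and compare coefficients in $s$ (this uses analyticity). We get $c_k\sum_g a_g^k w_g=0$ for every $k$. Because $\mathcal L$ is not a polynomial, $c_k\neq0$ for infinitely many $k$, so infinitely many of these $k$ share one parity. For such $k$ we have $a_g^k=\sigma_g^k b_g^k$, where $\sigma_g=\operatorname{sgn}(a_g)$ and $b_g=|a_g|$ are distinct and positive. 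Let $g^*$ maximize $b_g$, divide the identity by $b_{g^*}^k$, and let $k\to\infty$ along this parity class. The limit gives $w_{g^*}=0$, and induction on the remaining terms gives $w_g=0$ for all $g$.

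In particular $\varphi(v_e)=0$, so $v_e=J(x)^\top y\in W^G$. Hence $\nabla\mathcal E_{(x,y)}(\theta)=\mathcal L'(a_e)v_e\in W^G$. This is the step I expect to be the main obstacle. Two points need care: the parity subtlety, which is exactly why the assumption is ``injective modulo signs''; and making sure that the identity in $s$ holds on a neighbourhood of $0$, so that the Taylor coefficients can be compared even when $\mathcal L$ is analytic only locally.

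\emph{Density and conclusion.} The map $(x,y)\mapsto\nabla\mathcal E_{(x,y)}(\theta)$ is continuous, and $W^G$ is a finite-dimensional, hence closed, subspace. The pairs $(x,y)$ handled above, namely $x\in\calX_{\mathrm{reg}}$ and $y$ generic, are dense in $\calX\times\calY$. So all ordinary single-point gradients lie in $W^G$, which proves \eqref{eq:negative_span_equality}.

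For the final claim, let $I$ be an integral of motion for symmetric data. By Proposition~\ref{prop:augflow} and the criterion \eqref{eq:integral_grad} applied to the augmented risk, $\nabla I(\theta)$ is orthogonal to $W^G$. By the span equality it is then orthogonal to $W$, so by \eqref{eq:integral_grad} $I$ is an integral of motion for arbitrary data on $\Theta_{\mathrm{reg}}$.
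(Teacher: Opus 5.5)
Your proof is correct and follows the paper's route: scale the label, read off the Taylor coefficients of $\mathcal L'$, and separate the orbit terms with the dominant-modulus argument. That argument is the paper's generalized Vandermonde lemma, which you run in dual form through an annihilating functional $\varphi$. The one real difference is that you let $y$ itself serve as the separating direction and reach non-generic $y$ by closedness of $W^G$; this replaces the paper's auxiliary vector $v$ and its directional-derivative step $DF_{tv}[y]$. Your restriction to one parity class is unnecessary, since pairwise distinct $|a_g|$ already make the limit argument work, and that distinctness, not parity, is what injectivity modulo signs supplies.
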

The proof is given in Appendix \ref{app:no_new_integrals_nonpolynomial}. The idea relies on the non-polynomial nature of the loss. More precisely, the augmented gradient is a finite sum over group orbits. Since $\mathcal L$ is analytic and non-polynomial, its Taylor series contains infinitely many nonzero terms. Under Assumption \ref{ass:negres}, these infinitely many varying powers act as a full-rank linear system, allowing us to isolate and separate the individual orbit gradients. Consequently, the group average does not collapse or remove any gradient directions.

\subsection{Conservation laws from lifted symmetries}
\label{subsec:more_integrals}

We now describe how  data augmentation can create new integrals of motion for a family of tensorizable networks trained with MSE loss. This case differs from the analytic non-polynomial losses discussed above. There, the Taylor expansion produces infinitely many algebraic conditions, which generically prevent the symmetry in the data from propagating to the parameter space. For MSE, however, the loss is determined only by its quadratic and linear terms. Consequently, these terms, after augmentation, impose only finitely many equivariance constraints on the lifted feature space; see \eqref{eq:Hs}. Thus, some continuous symmetries can survive the augmented MSE loss, and in our setting, they are captured by a, in general continuous, group $\mathsf H$ acting on the lifted feature space $\calH$. When this action can be pulled back to $\Theta$, new integrals of motion emerge.

Recall that for a tensorizable network, we have $f_\theta(x)=M(\theta)T(x)$. The MSE loss then lifts to the space of linear maps $M\in\Hom(\calH,\calY)$ via
\begin{equation}
    \calJ_\pi(M) = \mathbb{E}_\pi\left(\| MT(x)-y\|^2\right),
\end{equation}
where $\calJ_\pi$ relates to the risk $\calE_\pi$ through $\calE_\pi = \calJ_\pi \circ M$.
We also assume that the lift $T$ is compatible with the data symmetry i.e., there is an orthogonal representation $\rho_\calH$ of $G$ such that
\begin{equation}
\label{eq:Teq}
    T(\rho_\calX(g)x)=\rho_\calH(g)T(x),
    \qquad \forall g\in G.
\end{equation}

For MSE, the lifted augmented loss is determined by its quadratic and linear parts in $M$. After averaging over $G$, these parts become equivariant.  More precisely, the quadratic part belongs to $\Sym_G^2(\calH)$, the space of $G$-equivariant symmetric operators on $\calH$, while the linear part belongs to $\Hom_G(\calH,\calY)$, the space of $G$-equivariant linear maps from $\calH$ to $\calY$. The loss will consequently be invariant to any linear map on the lifted feature space that preserve all such equivariant quadratic and linear terms, i.e., to
\begin{equation}
\label{eq:Hs}
    \mathsf H = \bigg\{ B\in \operatorname{GL}(\calH)\bigg| \begin{tabular}{l} $BSB^\top =S$,  $S\in \Sym^2_G(\calH)$ \\
    $LB^\top = L$, $L \in \Hom_G(\calH,\calY)$ \end{tabular}  \bigg\}.
\end{equation}

\begin{prop}
\label{prop:Hs}
Under the equivariance assumption \eqref{eq:Teq}, the augmented lifted loss $\calJ_\pi^G$ is invariant under the right action of $\mathsf H$:
\begin{equation}
    \calJ_\pi^G(MB)=\calJ_\pi^G(M),
    \ \   M\in\Hom(\calH,\calY),\ B\in\mathsf H.
\end{equation}
\end{prop}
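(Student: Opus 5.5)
The plan is to expand the augmented lifted loss as a quadratic polynomial in $M$, show that its coefficients are $G$-equivariant, and then check directly that every term is unchanged under $M\mapsto MB$ for $B\in\mathsf H$.

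\emph{Expansion.} The lifted augmented loss is
\begin{equation*}
\calJ^G_\pi(M)=\erw_{(x,y)\sim\pi,\,g\sim\mu}\|MT(\rho_\calX(g)x)-\rho_\calY(g)y\|^2 .
\end{equation*}
Expanding the square gives three terms:
\begin{equation*}
\calJ^G_\pi(M)=\operatorname{tr}(M S M^\top)-2\operatorname{tr}(L M^\top)+c .
\end{equation*}
Here $c=\erw\|y\|^2$, using that $\rho_\calY$ is orthogonal. The coefficients are
\begin{equation*}
S=\erw_{\pi,\mu}\, T(\rho_\calX(g)x)T(\rho_\calX(g)x)^\top,\qquad L=\erw_{\pi,\mu}\,\rho_\calY(g)y\,T(\rho_\calX(g)x)^\top .
\end{equation*}
The data enter only through $S\in\Sym^2(\calH)$ and $L\in\Hom(\calH,\calY)$. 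Compact support of $\pi$, together with continuity of $T$ (implicit in the setup), makes these expectations finite.

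\emph{Equivariance of the coefficients.} By \eqref{eq:Teq}, $T(\rho_\calX(g)x)=\rho_\calH(g)T(x)$. Hence
\begin{equation*}
S=\erw_\mu\, \rho_\calH(g)\Sigma\rho_\calH(g)^\top,\qquad L=\erw_\mu\,\rho_\calY(g)C\rho_\calH(g)^\top,
\end{equation*}
where $\Sigma=\erw_\pi T(x)T(x)^\top$ and $C=\erw_\pi yT(x)^\top$. Left invariance of the Haar measure then gives, for any $h\in G$,
\begin{equation*}
\rho_\calH(h)S\rho_\calH(h)^\top=S,\qquad \rho_\calY(h)L\rho_\calH(h)^\top=L .
\end{equation*}
Since the representations are orthogonal, $\rho(h)^\top=\rho(h)^{-1}$, so these identities say exactly that $S$ commutes with $\rho_\calH$ and that $L\rho_\calH(h)=\rho_\calY(h)L$. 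Thus $S\in\Sym^2_G(\calH)$ and $L\in\Hom_G(\calH,\calY)$.

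\emph{Invariance.} Take $B\in\mathsf H$. Then
\begin{equation*}
\operatorname{tr}(MBSB^\top M^\top)=\operatorname{tr}(MSM^\top),\qquad \operatorname{tr}(L(MB)^\top)=\operatorname{tr}(LB^\top M^\top)=\operatorname{tr}(LM^\top),
\end{equation*}
by the defining relations $BSB^\top=S$ and $LB^\top=L$. The constant $c$ does not depend on $M$. Therefore $\calJ^G_\pi(MB)=\calJ^G_\pi(M)$.

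The main obstacle is the equivariance step. One has to make sure the transposes in the definition of $\mathsf H$ match the conventions used, namely that $\Sym_G^2$ means symmetric operators commuting with $\rho_\calH$ and $\Hom_G$ means maps intertwining $\rho_\calH$ and $\rho_\calY$. Orthogonality of all the representations is what makes these conventions line up. Everything else is bookkeeping.
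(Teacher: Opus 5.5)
Your proposal is correct and follows essentially the same route as the paper's proof. Both expand the augmented MSE into a quadratic term with coefficient $\overline{S}\in\Sym^2_G(\calH)$, a linear term with coefficient $\overline{L}\in\Hom_G(\calH,\calY)$ (equivariant by \eqref{eq:Teq} and Haar invariance), and an $M$-independent constant, and then apply $BSB^\top=S$ and $LB^\top=L$ term by term.
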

The proof is given in Appendix \ref{app:otherproofs}.

  The possible new symmetries are therefore controlled by $\mathsf H$, whose structure depends on $G$ as well as the representations $\rho_\calH$ and $\rho_\calY$. We briefly explain its structure in the following remark.
\begin{remark}
\label{rmk:H_structure}
Since the identity operator belongs to $\Sym^2_G(\calH)$, the condition $BSB^\top=S$ implies that every $B\in\mathsf H$ lies in $\O(\calH)$. Hence, for $B\in\mathsf H$, this condition is equivalent to the commutation relation $BS=SB$. For a fixed symmetric operator $S~\in~\Sym_G^2(\calH)$, any orthogonal matrix $B$ commuting with $S$ must preserve its eigenspaces. Thus, up to a change of basis, $B$ belongs to a product of orthogonal groups whose dimensions match the corresponding eigenvalue multiplicities of $S$.

However, our construction is data-independent. Hence, $B$ must simultaneously commute with all $S \in \Sym_G^2(\calH)$. Thus, $\mathsf H$ is the intersection of all such eigenspace-preserving subgroups. As explained in Appendix \ref{app:hchar}, the continuous symmetries that may survive this intersection form a product of orthogonal, unitary, or symplectic groups.
Finally, the linear constraints $LB^\top=L$ may remove some of these components, but continuous factors can remain, even when $G$ is finite.
\end{remark}

We now explain how such a lifted symmetry can produce a conservation law in parameter space. Suppose that the parameters contain a matrix block $P\in\Hom(E,V)$, and write $\theta=(P,\xi)$ for the remaining parameters. We say that the parametrization \emph{realizes} an $\O(V)$-symmetry inside $\mathsf H$ on $P$ if, for every $a\in \O(V)$, there exists $B_a\in\mathsf H$ such that
\begin{equation}
\label{eq:H-eq}
    M(aP,\xi)=M(P,\xi)B_a.
\end{equation}
This means that rotating the parameter block $P$ on the left is equivalent, after applying $M$, to acting on the lifted feature space by an element of $\mathsf H$. Combining \eqref{eq:H-eq} with Proposition \ref{prop:Hs} gives
\begin{equation}
\label{eq:invariant_block}
    \calE_\pi^G(aP,\xi)=\calE_\pi^G(P,\xi), \qquad a\in \O(V).
\end{equation}

In the next theorem, we see that if the symmetry is realized in that way, the range of $P$ is an integral of motion.

\begin{theorem}
\label{thm:realized_lifted_symmetry} 
Let $f_\theta(x)=M(\theta)T(x)$ be a tensorizable network trained with the augmented MSE loss. Assume that the parametrization realizes an $\O(V)$-symmetry inside $\mathsf H$ on the block $P\in\Hom(E,V)$, in the sense of \eqref{eq:H-eq}. Then, along the augmented gradient flow,
\begin{equation}
    \range P(t)=\range P(0).
\end{equation}
Consequently, every smooth real-valued function of the column space of $P$ is an integral of motion. 
\end{theorem}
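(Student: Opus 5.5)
Our plan is a Noether-type argument: we turn the invariance \eqref{eq:invariant_block} into an infinitesimal orthogonality relation for the gradient, and then show that this relation forces the left dynamics of $P$ to stay inside $\range P$.

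\textbf{Step 1: infinitesimal invariance.} Combining \eqref{eq:H-eq} with Proposition~\ref{prop:Hs}, we have $\calE^G_\pi(aP,\xi)=\calE^G_\pi(P,\xi)$ for all $a\in\O(V)$. Take $a=\exp(tA)$ with $A$ skew-symmetric on $V$ and differentiate at $t=0$. This gives
\begin{equation*}
\langle \nabla_P \calE^G_\pi(P,\xi), AP\rangle = \operatorname{tr}\big( (\nabla_P\calE^G_\pi)^\top A P\big)=0
\end{equation*}
for all skew $A$. Writing $G_P:=\nabla_P\calE^G_\pi\in\Hom(E,V)$, this says that $G_P P^\top\in\End(V)$ is orthogonal to every skew-symmetric matrix. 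Hence $G_P P^\top$ is symmetric, that is,
\begin{equation*}
G_P P^\top = P G_P^\top .
\end{equation*}

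\textbf{Step 2: the range is preserved.} Along the flow, $\dot P=-G_P$. Take any vector $w\in V$ with $w\perp\range P(t)$, so $P^\top w=0$. Applying the symmetry relation to $w$ gives
\begin{equation*}
P G_P^\top w = G_P P^\top w = 0 .
\end{equation*}
Therefore $w^\top G_P G_P^\top w = \|G_P^\top w\|^2$. We would like this to vanish, i.e.\ $G_P^\top w\in\ker P\cap\range P^\top=\{0\}$. However, the relation above only places $G_P^\top w$ in $\ker P$, not in $\range P^\top$, so this direct route does not close. A cleaner route is to consider the Gram-type quantity $PP^\top$. Its evolution is
\begin{equation*}
\frac{d}{dt}(PP^\top) = -(G_PP^\top+PG_P^\top) = -2G_PP^\top .
\end{equation*}
Write $\Pi=PP^\top$ and note that $\range\Pi=\range P$. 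Then for $w\in\ker\Pi$ we get
\begin{equation*}
\dot\Pi\, w = -2G_PP^\top w=0 .
\end{equation*}
By the symmetry of $G_PP^\top$, also $w^\top \dot\Pi=0$. The key identity to establish is therefore
\begin{equation*}
G_PP^\top = PG_P^\top = \Pi\, C \quad\text{for some } C,
\end{equation*}
that is, $\range(G_PP^\top)\subseteq\range P$. This holds because $G_PP^\top=PG_P^\top$ and the right-hand side has range contained in $\range P$. Hence $\dot\Pi = -2PG_P^\top$ is of the form $\Pi X + X^\top \Pi$-like with range in $\range P$ and kernel containing $\ker\Pi$. Concretely, $\dot\Pi=\Pi K+K^\top\Pi$ for a suitable (time-dependent, continuous) $K$, namely $K=-\Pi^+ P G_P^\top$ up to symmetrization on $\range P$. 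A linear ODE of this form, $\dot\Pi=\Pi K+K^\top\Pi$, is solved by $\Pi(t)=U(t)^\top\Pi(0)U(t)$, where $U$ is the invertible solution of $\dot U=KU$. By uniqueness this gives $\range\Pi(t)=U(t)^{\top}\range\Pi(0)$, which we then compare with $\range\Pi(0)$.

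\textbf{Expected obstacle and fallback.} This is the main obstacle. $\Pi^+$ is only continuous where the rank of $P$ is constant, and the transformation $U^\top$ need not fix the subspace. I would therefore instead prove the statement directly: $\dot P = -G_P = -\Pi^+\,\cdot$ (component) $+$ a part in $\ker P^\top$-direction. The invariant form to aim for is $\dot P = P\,N(t)$ for some continuous $N\in\End(E)$. Once that holds, $P(t)=P(0)\Phi(t)$ with $\Phi$ the fundamental matrix of $\dot\Phi=\Phi N$, which gives $\range P(t)\subseteq\range P(0)$, and the reverse-time argument yields equality. Establishing $G_P=PN$ requires showing that the component $(I-\Pi\Pi^+)G_P$ vanishes. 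For this I would use the stronger, finite-group-level invariance: \eqref{eq:invariant_block} holds for all $a\in\O(V)$, in particular for the reflections fixing $\range P$ pointwise. Such a reflection acts as the identity on $P$, so differentiating along the curve $a_t$ of rotations fixing $\range P$ but mixing $\ker P^\top$ directions with $\range P$ directions should kill the off-range component of $G_P$. Finally, smooth functions of the column space are constant because the column space is.
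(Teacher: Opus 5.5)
Your Step~1 relation $G_PP^\top=PG_P^\top$ cannot close the argument on its own, and not only for technical reasons. Holding at every point, it is exactly invariance under the identity component $\SO(V)$, and $\SO(V)$-invariance does not force the flow to preserve ranges. Take $E=V=\R^2$ and $F(P)=\det P$. Then $\nabla F(P)=\operatorname{adj}(P)^\top$, so $G_PP^\top=\det(P)\,I$ is symmetric everywhere. Yet at $P=\operatorname{diag}(1,0)$ the gradient is $\operatorname{diag}(0,1)$, and the flow leaves $\range P$ immediately.

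So the reflections are indispensable, as you suspect in your fallback, but the mechanism you describe there does not actually use them. An orthogonal map fixing $\range P$ pointwise also preserves $(\range P)^\perp=\ker P^\top$, so it cannot mix the two. Differentiating $t\mapsto\calE^G_\pi(a_tP,\xi)$ along a curve in the stabilizer of $P$ gives $0=0$, and along any other curve of rotations it only reproduces Step~1.

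The missing step is to keep a reflection $a=I-2\hat w\hat w^\top$ fixed, with $\hat w\perp\range P$ a unit vector, and differentiate the identity $\calE^G_\pi(aQ,\xi)=\calE^G_\pi(Q,\xi)$ in $Q$. This gives $\nabla_P\calE^G_\pi(aQ,\xi)=a\,\nabla_P\calE^G_\pi(Q,\xi)$. At $Q=P$, where $aP=P$, it yields $G_P=aG_P$, i.e.\ $\hat w^\top G_P=0$. Since this holds for every such $\hat w$, you get $\range G_P\subseteq\range P$.

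Your final step has a second gap. With $N=P^+G_P$ you only get continuity of $N(t)$ where the rank of $P(t)$ is constant, which is what you are trying to prove. So the fundamental-matrix argument is not justified as written.

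You can avoid $N$ altogether. By the inclusion above, the smooth vector field $-\nabla\calE^G_\pi$ is tangent to the linear subspace $\{(P,\xi):\range P\subseteq\range P(0)\}$. By uniqueness of solutions, the trajectory stays in it, giving $\range P(t)\subseteq\range P(0)$. The same argument applied to the backward flow started at $(P(t),\xi(t))$ gives the reverse inclusion.

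The paper instead uses the first fundamental theorem for $\O(V)$ to write $\calE^G_\pi(P,\xi)=\Phi(P^\top P,\xi)$ with $\Phi$ smooth. This gives $G_P=PS(P,\xi)$ with $S$ a continuous symmetric-matrix-valued function, and hence $P(t)=P(0)R(t)$ with $R(t)$ invertible.

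Your repaired route would be more elementary. It does not need the smoothness of $\Phi$, which the paper asserts but whose sketch only establishes that orbits are determined by $P^\top P$. As submitted, however, both decisive steps are missing: the reflection argument for $\range G_P\subseteq\range P$, and a valid way to pass from that inclusion to invariance of $\range P(t)$.
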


We provide the proof in Appendix \ref{app:finite_sample_lifted_symmetries}. 
The main idea is that, by applying the first fundamental theorem of the orthogonal group \parencite{weyl1946classical} to the invariance in \eqref{eq:invariant_block}, the augmented risk depends on $P$ only through $P^\top P$, i.e., $\calE_\pi^G(P,\xi)=\Phi(P^\top P,\xi)$ for some function $\Phi$. Consequently, the $P$-component of the gradient flow has the form $\dot P=PS(P,\xi)$ for a suitable matrix $S(P,\xi)$. 
The fundamental theory of ODEs then implies that $\range P$ is necessarily preserved along the gradient flow.

\begin{remark}
\label{rmk:new_integrals}
Theorem \ref{thm:realized_lifted_symmetry} gives a new integral of motion only when the symmetry is created by augmentation. This happens when $\mathsf H$ contains a continuous symmetry that is not already present for the ordinary loss (see Figure \ref{fig:symmetric-loss-landscape}), and when the parametrization realizes this symmetry on some parameter block $P$ through \eqref{eq:H-eq}. The resulting integral is non-trivial when the column space of $P$ can vary, for instance when $0<\operatorname{rank}P<\dim V$.
\end{remark}
\section{EXAMPLES}
\label{sec:examples}
We have stated Theorem \ref{thm:realized_lifted_symmetry} in a generally applicable but abstract way. In this section, we will present two exemplary applications of it.

\subsection{Example 1: A linear model} \label{ex:linearmodel}
Let $\calX=\R^3$ and $\calY=\R$, and consider the linear model
\begin{align}
    f_W(x) = Wx.
\end{align}
This is a tensorizable network with $\calH=\calX$ and $T$ equal to the identity. We let the cyclic group $C_3=\{e,r,r^2\}$ act trivially on $\calY$ and cyclically on the standard basis vectors $e_i\in \calX$ by
\begin{equation}
\rho_\calX(r) e_i = e_{(i+1)\, \mathrm{mod} \, 3}.    
\end{equation}
We first compute the augmented loss for a single, but arbitrary, data point $(x,y)$. Let $\mathbf{1}$
denote the constant one-vector, and define
\begin{equation} 
\Pi_{\mathbf 1}=\frac{1}{3}\mathbf 1\mathbf 1^\top, \quad
\Pi_{\one^\perp}=I-\frac{1}{3}\mathbf 1\mathbf 1^\top .
\end{equation}
A straightforward calculation shows that after averaging over the $C_3$-orbit of $x$, the quadratic and linear terms in the MSE loss become
\begin{equation}
\frac{1}{3}\sum_{k=0}^2 \rho_\calX(r)^k x x^\top \rho_\calX(r)^{-k} = \alpha \Pi_{\mathbf 1}+\beta \Pi_{\one^\perp},
\end{equation}
and
\begin{equation}
\frac{1}{3}\sum_{k=0}^2 y\bigl(\rho_\calX(r)^kx\bigr)^\top = \gamma \mathbf 1^\top ,
\end{equation}
respectively. Here $\alpha,\beta$ and $\gamma$ are real numbers depending on $(x,y)$. Hence, the augmented gradient flow has the form
\begin{equation}
\label{eq:grad_for_linear}
\dot W = -2W(\alpha \Pi_{\mathbf 1}+\beta\Pi_{1^\perp})+2\gamma\mathbf 1^\top .
\end{equation}
Multiplying \eqref{eq:grad_for_linear} by $\Pi_{\one^\perp}$, utilizing that $\Pi_{\one^\perp}\one=0$ and $\Pi_1\Pi_{\one^\perp}=0$, gives
\begin{equation}
\frac{d}{dt}(W\Pi_{\one^\perp})=-2\beta W\Pi_{\one^\perp}.
\end{equation}
The derivative of $W\Pi_{\one^\perp}$ being parallel to its position vector implies that its direction is conserved.  In particular, on the open set where $w_1+w_2-2w_3 \neq 0$, the function
\begin{equation}
\label{eq:integ_linear}
    I(W) = \frac{w_1-w_2}{w_1+w_2-2w_3}
\end{equation}
is an integral of motion. Since the augmented risk of a dataset is the expectation of the single-point augmented losses, \eqref{eq:integ_linear} holds after averaging, with $\beta$ replaced by its expectation. Hence, $I$ is an integral of motion for every dataset augmented by the $C_3$ action.

The above computation gives a direct way to obtain integral of motion for this example. However, we can also use the the general framework of Section \ref{subsec:more_integrals}.  To calculate $\mathsf{H}$, note that $\Pi_1$ is a symmetric, equivariant map. Hence, any $B\in \mathsf{H}$ commutes with it, and must therefore respect the decomposition
\begin{equation}
\calX=\Span\mathbf 1\oplus\Span\mathbf 1^\perp
\end{equation}
Applying the same argument to the equivariant map $\one^\top$ shows that $B$ must also fix $\Span \mathbf 1$. The remaining freedom is an arbitrary orthogonal transformation on $\Span \mathbf{1}^\perp$. Hence, $\mathsf{H} = \O(\Span \mathbf{1}^\perp)$, acting  orthogonally on $\Span \mathbf 1^\perp$ and fixing $\Span \mathbf 1$.

Finally, this symmetry is realized in parameter space by writing
\begin{equation}
    W = P^\top + \xi \mathbf{1}^\top, \quad P \in \Span \mathbf{1}^\perp,
\end{equation}
where $\xi \in \R$.  For $a\in \O(\Span \mathbf{1}^\perp)$, extended by the identity on $\Span \mathbf 1$, we have
\begin{equation}
    M(aP,\xi) = M(P,\xi)a^{-1}.
\end{equation}
Theorem \ref{thm:realized_lifted_symmetry} therefore implies that the range of $P$ is preserved. Since $P= (W \Pi)^\top$, that range corresponds to the direction of $W\Pi$ found above.

\subsection{Example 2: Lightning Attention}
\label{sec:Lightning_att}
We now turn to the tensorized representation of the single-head Lightning Attention introduced in Example \ref{ex:tensorized_lightning}. Let the group $G= \O(d)$ act on the input space from the right via $X \mapsto Xg$, and assume the labels are invariant to this action i.e., $\rho_\calY(g)Y = Y$.

We first determine the induced action on the lifted feature space $\calH$. Since the rows of $Xg$ are given by $(g^\top x_i)^\top$ and the feature covariance transforms as $g^\top Z_X g$, the tensorized map $T$ satisfies
\begin{equation}
    T(Xg) = \rho_{\calH}(g)T(X),
\end{equation}
where $\rho_{\calH}(g)(v\otensor S) = g^\top v \otensor g^\top Sg$.

Next, consider a simultaneous left-action of $\O(d)$ on the parameter space as follows
\begin{equation} \label{eq:hmap}
    h\theta = (hQ, hK, hV), \qquad h \in \O(d).
\end{equation}
By the construction of the parameter map $M$, this action is compatible with the induced action on $\calH$. Indeed, evaluating $C_{h\theta}$ yields
\begin{align}\label{eq:Ceq}
    C_{h\theta}(v \otensor S) &= (hV)^\top S(hK)(hQ)^\top v \nonumber \\ &= V^\top (h^\top Sh)KQ^\top (h^\top v) \\
    &= C_\theta \bigl( \rho_{\calH}(h) (v \otensor S) \bigr). \nonumber
\end{align}
This implies that $M(h\theta) = M(\theta)\rho_\calH(h)$. Combining this with the equivariance of $T$, we find that acting on the parameter is equivalent to acting on the input:
\begin{equation}
\begin{split}
    f_{h\theta}(X) &= M(h\theta)T(X) \\
    &= M(\theta)\rho_{\calH}(h)T(X) \\
    &= M(\theta)T(Xh) = f_\theta(Xh).
\end{split}
\end{equation}
With the above relation, we can now show that the augmented loss is invariant under the left $\O(d)$-action of the input
\begin{align}
    \calE_\pi^G(h\theta) &= \int_{\O(d)} \erw_\pi\left(\|{f_{h\theta}(Xg)-Y}\|^2\right) \dd \mu(g) \nonumber \\
    &= \int_{\O(d)} \erw_\pi\left(\|{f_{\theta}(Xgh)-Y}\|^2\right) \dd \mu(g) \\
    &= \int_{\O(d)} \erw_\pi\left(\|{f_{\theta}(Xg')-Y}\|^2\right) \dd \mu(g') \nonumber \\
    &= \calE_\pi^G(\theta) \nonumber.
\end{align}

Now let $P = \begin{bmatrix} Q & K & V \end{bmatrix} \in \R^{d \times (2r+m)}$ denote the concatenated parameter block. Combining the $\O(d)$-invariance of the loss and the argument in Section \ref{subsec:more_integrals} we deduce that the column space of $P$ is preserved, i.e., the range of $P$ is an integral of motion. 
\begin{remark}
 If the orthogonal group acts instead on the left (token dimension), then Lightning Attention is already equivariant with respect to this action. In that case, augmentation does not introduce an additional symmetry of the loss; it only reflects a symmetry already present in the architecture. Hence, we should not expect new integrals of motion from this token-side augmentation.
\end{remark}

\section{ILLUSTRATIVE EXPERIMENTS}
\label{sec:illustrative_experiments}
We end the paper by presenting two qualitative experiments for the conservation laws described in Section \ref{sec:examples}. These 
test the robustness of the results to two necessary deviations from theory in practice: using (full batch, fixed learning rate) gradient descent instead of gradient flow, and discretizing an infinite group average by finitely many samples. The integrals of motion should therefore not be expected to be \emph{perfectly} preserved, but  they should \emph{approximately}.

\subsection{The linear model}
We first consider the linear model from Section \ref{ex:linearmodel}. We draw four random data points in $\R^3$ with random scalar labels, and augment each point by its full $C_3$-orbit, giving twelve training points in total. We compare full-batch gradient descent on the standard and augmented MSE losses for six training runs with step size $1.5\cdot 10^{-2}$.

The true augmented dynamics preserves the direction of $W\Pi_{\one^\perp}$. As mentioned in Section \ref{ex:linearmodel}, this gives rise to the integral of motion $I(W)$ in \eqref{eq:integ_linear}.
In Figure \ref{fig:linear_integral_drift_new}, we draw the deviation $|I(W_t)-I(W_0)|$  for $150$ steps. For the augmented data, this quantity remains (almost) at numerical precision. In contrast, for the non-augmented case, it changes by several orders of magnitude.

\begin{figure}[t]
    \centering
    \includegraphics[width=\columnwidth]{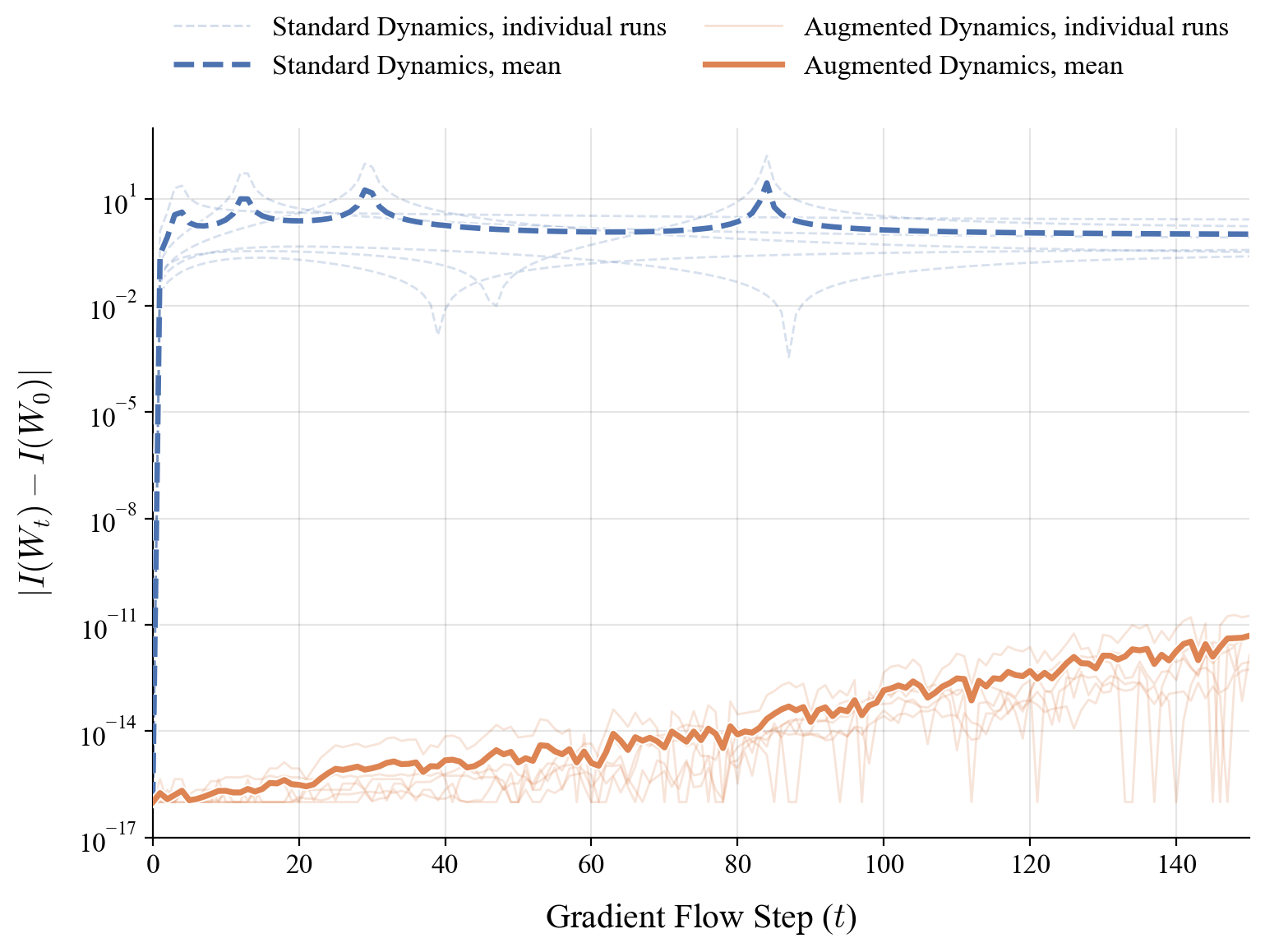}
    \caption{
    Evolution of the deviation $|I(W_t)-I(W_0)|$ over gradient descent steps for the linear model with $C_3$-augmentation. 
    }
    \label{fig:linear_integral_drift_new}
\end{figure}

\subsection{The Lightning Attention}
We next consider the single-head Lightning Attention model from Section \ref{sec:Lightning_att} with $d=8$, $n=5$, and $r=m=1$. The labels are generated from a randomly initialized teacher model on ten base inputs. We compare full-batch gradient descent on the standard MSE loss with full-batch gradient descent on an augmented MSE loss for the right $\O(d)$-action $X\mapsto Xg$. The Haar integral is approximated using $4096$ Haar-sampled orthogonal matrices, sampled by QR decomposition of Gaussian matrices; see \parencite{mezzadri2006generate}. We include both $g$ and $-g$ for each sampled matrix to reduce the finite-sample error of the group average. We use six full-rank initializations of $P=[Q\ K\ V]$ and step size $10^{-5}$.

The theory predicts that the column space of $P$ is preserved by the exact augmented gradient flow. We therefore track $\|\Pi_{\range(P_t)}-\Pi_{\range(P_0)}\|_F$, where $\Pi_{\range(P)}$ denotes the orthogonal projector onto $\range(P)$; this is, up to a constant factor, the chordal distance on the Grassmannian \parencite{conway1996packing}. Figure \ref{fig:lightning_projector_drift} shows that as expected, this quantity does not stay exactly zero in this finite Haar-sampling setting, but remains several orders of magnitude smaller than for the non-augmented dynamics. 

\begin{figure}[t]
    \centering
    \includegraphics[width=\columnwidth]{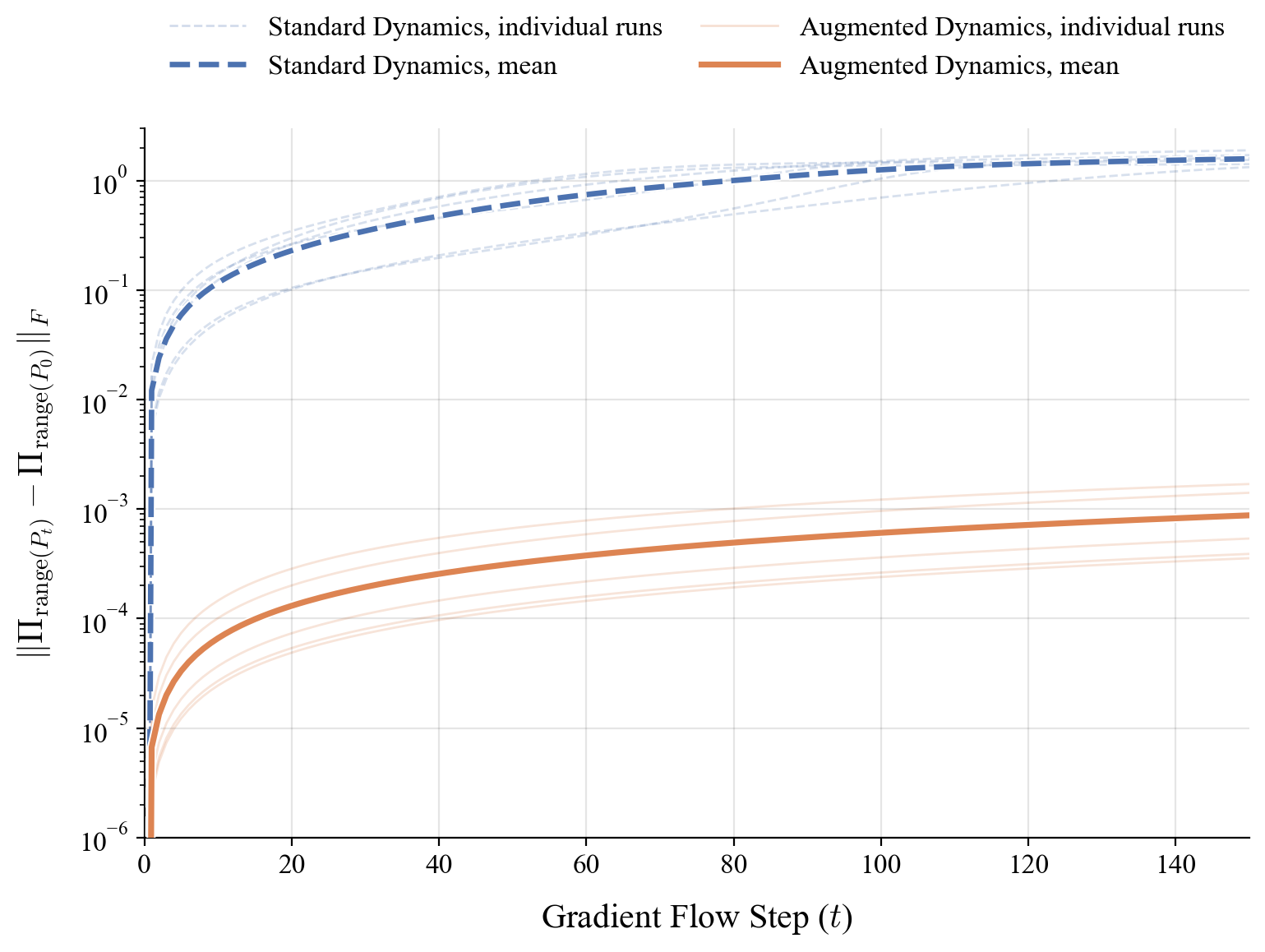}
    \caption{
    Evolution of the column-space distance $|\Pi_{\range(P_t)}-\Pi_{\range(P_0)}\|_F$ 
    over gradient descent steps for single-head Lightning Attention with right $\O(d)$-augmentation.
    }
    \label{fig:lightning_projector_drift}
\end{figure}
\section{CONCLUSION}
We investigated whether data symmetries can give rise to new conserved quantities under gradient-flow training. Our theory shows that the existence of such quantities depends on the loss function, the structure of the augmentation group, and the architecture.

We derived two main theoretical results. First, with a finite augmentation group and a pure analytic margin-type loss, new conserved quantities cannot arise, except in rare singular cases. 
The second result concerns MSE loss, which due to its polynomial nature behaves differently. For \emph{tensorizable} neural networks, we showed that the augmented loss may contain additional, possibly continuous, lifted symmetries, which can correspond to conserved quantities in parameter space. We verified our findings with small numerical experiments,  showing that the integrals of motion are approximately preserved also when the dynamics and infinite groups are discretized.

The limitations of our work are the needed technical assumptions. In the first result, the study is limited to analytic margin-type losses and smooth parametrizations.
For the second result, not all architectures are tensorizable, and not all $\mathsf{H}$-symmetries are realizable. We intend to address all these in future work.

There are several more open questions. First, we do not claim that the framework covers all possible new conserved quantities from symmetric data. On the other hand, there might be other realizations of the $\mathsf{H}$ group than the orthogonal one discussed here. Another important direction is to study more examples of tensorizable networks.

\section*{Acknowledgements}
This work was supported by the Wallenberg AI, Autonomous Systems and Software Program (WASP) funded by the Knut and Alice Wallenberg Foundation.

\section*{Declaration of AI-use}
 We used GPT 5.5 (Extended) to generate code for the numerical experiment (that was subsequently reviewed). 
 
\printbibliography

\newpage

\appendix 

\onecolumn

\section{PROOF OF THEOREM~\ref{thm:no_new_integrals_nonpolynomial}}
\label{app:no_new_integrals_nonpolynomial}
Our goal is to prove the span equality in \eqref{eq:negative_span_equality}. The inclusion $\operatorname{span} \nabla \calE^G \subseteq \operatorname{span} \nabla \calE$ is an immediate consequence of the augmented risk definition, as $\nabla \mathcal E^G$ is a convex combination of standard gradients over the group orbit. The challenge is proving the reverse inclusion i.e., showing that averaging over the group does not destroy any gradient directions. We do this by showing that the individual, non-averaged gradient directions can be linearly reconstructed from the augmented gradients.

To this end, let us fix $\theta\in\Theta_{\mathrm{reg}}$ and $x\in\calX_{\mathrm{reg}}$. Also, for any $g\in G$, we denote the network's prediction on the transformed input, pulled back to the original space, as 
\begin{equation}
\label{eq:ug_appendix}
    u_g := \rho_{\calY}(g)^{-1} f_\theta(\rho_{\calX}(g)x) \in\calY.
\end{equation}
We also define the corresponding Jacobian of this transformation with respect to the parameters
\begin{equation}
\label{eq:Jg_appendix}
    J_g := D_\theta \Bigl[ \rho_{\calY}(g)^{-1} f_\theta(\rho_{\calX}(g)x) \Bigr].
\end{equation}

Applying the chain rule to the margin loss \eqref{eq:margin_loss_negative}, the gradient for a single transformed data point is
\begin{equation}
\label{eq:single_orbit_gradient_appendix}
    \nabla_\theta \mathcal E_{(\rho_{\calX}(g)x,\rho_{\calY}(g)y)}(\theta) = \mathcal L'(\langle u_g,y\rangle) J_g^*y,
\end{equation}
where $J_g^*:\calY\to T_\theta\Theta$ denotes the Euclidean adjoint. Because of the orthogonality on the group action on $\calY$, the inner product inside the margin loss simplifies to $\langle f_\theta(\rho_{\calX}(g)x), \rho_{\calY}(g)y \rangle = \langle u_g,y\rangle$.

By averaging this over the finite group $G$, we get
\begin{equation}
\label{eq:augmented_gradient_appendix}
    \nabla_\theta \mathcal E^G_{(x,y)}(\theta) = \frac{1}{|G|} \sum_{g\in G} \mathcal L'(\langle u_g,y\rangle) J_g^*y.
\end{equation}

We now first verify that restricting our inputs to the dense regular set $\calX_{\mathrm{reg}}$ does not artificially restrict the gradient span.
\begin{lemma}
    \label{lem:dense_inputs_same_span}
Let $\calX_{\mathrm{reg}}\subseteq\calX$ be a dense set in Euclidean topology. Then, for every $\theta\in\Theta$,
\begin{equation}
\label{eq:span_loss_lemm}
\operatorname{span}_{x\in\calX_{\mathrm{reg}},\,y\in\calY} \nabla \mathcal E_{(x,y)}(\theta) = \operatorname{span}_{x\in\calX,\,y\in\calY} \nabla \mathcal E_{(x,y)}(\theta).
\end{equation}
\end{lemma}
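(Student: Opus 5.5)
The inclusion ``$\subseteq$'' in \eqref{eq:span_loss_lemm} is immediate, since the left-hand side is a span over a subset of the data points used on the right. For the reverse inclusion, we exploit that we are in finite dimensions. Fix $\theta\in\Theta$ and set
\begin{equation*}
V_{\mathrm{reg}} := \operatorname{span}_{x\in\calX_{\mathrm{reg}},\,y\in\calY} \nabla \mathcal E_{(x,y)}(\theta).
\end{equation*}
This is a linear subspace of the finite-dimensional space $T_\theta\Theta\cong\Theta$, and every linear subspace of a finite-dimensional Euclidean space is closed. It therefore suffices to show that for every $x\in\calX$ and $y\in\calY$, the vector $\nabla\mathcal E_{(x,y)}(\theta)$ is a limit of vectors in $V_{\mathrm{reg}}$.

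To see this, fix $(x,y)$ and use density of $\calX_{\mathrm{reg}}$ to choose a sequence $x_k\in\calX_{\mathrm{reg}}$ with $x_k\to x$. Each $\nabla\mathcal E_{(x_k,y)}(\theta)$ lies in $V_{\mathrm{reg}}$. It then remains to show
\begin{equation*}
\nabla\mathcal E_{(x_k,y)}(\theta)\to\nabla\mathcal E_{(x,y)}(\theta)\qquad (k\to\infty),
\end{equation*}
that is, continuity of $x\mapsto \nabla_\theta \ell(f_\theta(x),y)$ at fixed $\theta$ and $y$. By the chain rule,
\begin{equation*}
\nabla_\theta \ell(f_\theta(x),y) = \bigl(D_\theta f_\theta(x)\bigr)^* \nabla_{\hat y}\ell(f_\theta(x),y).
\end{equation*}
Since $\ell$ is smooth, this expression is continuous in $x$ as soon as $f_\theta(x)$ and $D_\theta f_\theta(x)$ are continuous in $x$.

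That last point is the only real obstacle. The paper assumes only that $\theta\mapsto f_\theta$ is smooth into $\mathcal F(\calX,\calY)$, so we need the joint map $(\theta,x)\mapsto f_\theta(x)$ to be $C^1$, or at least that $x\mapsto(f_\theta(x),D_\theta f_\theta(x))$ is continuous. I would state this as the natural reading of the standing smoothness assumption; it holds for all architectures considered, such as polynomial networks and Lightning Attention, where $f$ is jointly polynomial. With this continuity in hand, closedness of $V_{\mathrm{reg}}$ gives $\nabla\mathcal E_{(x,y)}(\theta)\in V_{\mathrm{reg}}$ for all $(x,y)$, which proves the reverse inclusion.
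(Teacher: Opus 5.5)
Your proof is correct and follows the paper's argument: the trivial inclusion, a sequence in $\calX_{\mathrm{reg}}$ converging to $x$, continuity of $x\mapsto\nabla\mathcal E_{(x,y)}(\theta)$, and closedness of finite-dimensional subspaces. Your remark that this continuity actually requires joint regularity of $(\theta,x)\mapsto f_\theta(x)$ is a fair sharpening of a point the paper simply attributes to ``the smoothness of the parametrization and the loss.''
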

\begin{proof}
    The inclusion $\subseteq$ is trivial. For the reverse, choose any $x_0\in\calX$ and $y\in\calY$. Because $\calX_{\mathrm{reg}}$ is dense, there exists a sequence $x_n\in\calX_{\mathrm{reg}}$ converging to $x_0$. By the smoothness of the parametrization and the loss, we have that $\nabla \mathcal E_{(x_n,y)}(\theta)$ converges to $\nabla \mathcal E_{(x_0,y)}(\theta)$. Since the span on the left-hand side of \eqref{eq:span_loss_lemm} is a subspace of the finite-dimensional tangent space $T_\theta\Theta$, it is Euclidean closed.  Therefore, it must contain the limit point $\nabla \mathcal E_{(x_0,y)}(\theta)$, which finishes the proof.
\end{proof}

Next, we show that under Assumption \ref{ass:negres}, the vectors $u_g$ are distinct modulo signs. We aim to find a projection vector $v$ that maintains this distinctness, allowing us to construct an invertible generalized Vandermonde matrix from the infinite Taylor series of $\mathcal{L}$.
\begin{lemma}
\label{lem:orbit_separation}
Let $\{u_g\}_{g\in G}\subseteq\calY$ satisfy $u_g\neq0$ and $u_g\neq\pm u_h$ for all $g\neq h$.  Then there exists a vector $v\in\calY$ such that for $\lambda_g=\langle u_g,v\rangle$, the scalars $|\lambda_g|$ are non-zero and pairwise distinct. Furthermore, let $A\sse \N$ be an arbitrary infinite subset. There then exist $n_1,\ldots,n_{|G|}\in A$ such that the matrix
\begin{equation}
\label{eq:vandermonde_appendix}
    \bigl(\lambda_g^{\,n_j}\bigr)_{1\leq j\leq |G|,\;g\in G}
\end{equation}
is invertible.
\end{lemma}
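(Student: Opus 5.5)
The plan has two independent parts: choose $v$ by a generic-position argument, then get invertibility from a determinant that is a generalized Vandermonde in the exponents.

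\textbf{Step 1: choosing $v$.} We need $\langle u_g,v\rangle\neq 0$ for all $g$, and $\langle u_g,v\rangle\neq\pm\langle u_h,v\rangle$ for all $g\neq h$. These conditions say $v$ avoids the hyperplanes $u_g^\perp$ and $(u_g\mp u_h)^\perp$. Each such hyperplane is proper, because $u_g\neq0$ and $u_g\mp u_h\neq 0$ by hypothesis. A finite union of proper subspaces of the finite-dimensional space $\calY$ has measure zero and so cannot cover $\calY$; any $v$ outside it works. With this $v$, the numbers $|\lambda_g|$ are nonzero and pairwise distinct.

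\textbf{Step 2: the matrix.} Set $N=|G|$ and order $G$ so that $|\lambda_{g_1}|>|\lambda_{g_2}|>\dots>|\lambda_{g_N}|>0$. I would choose the exponents inductively and very rapidly increasing in $A$, and show the determinant is dominated by its diagonal term $\prod_j \lambda_{g_j}^{n_j}$. Expanding,
\[
\det\bigl(\lambda_{g_k}^{n_j}\bigr)=\sum_{\sigma\in S_N}\operatorname{sgn}(\sigma)\prod_j\lambda_{g_{\sigma(j)}}^{n_j}.
\]
To handle all permutations uniformly, I would use the following induction. Expand along the last row $j=N$ and divide by $\lambda_{g_N}^{n_N}$. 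This gives the term $\lambda_{g_N}^{n_N}$ times the minor $D_{N-1}$ built from rows $1,\dots,N-1$ and columns $g_1,\dots,g_{N-1}$, plus terms of the form $\lambda_{g_k}^{n_N}\cdot(\text{other minor})$ with $k<N$.

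\textbf{Choice of the smallest value for the largest exponent.} This is slightly wrong as just stated: the terms with $k<N$ contain $|\lambda_{g_k}|>|\lambda_{g_N}|$, so large $n_N$ makes them \emph{dominate} rather than vanish. The fix is to reverse the ordering, pairing the largest exponent with the largest modulus. Order so that $|\lambda_{g_N}|$ is the maximum, and expand along the last row with $n_N$ chosen last. Then
\[
\det=\lambda_{g_N}^{n_N}D_{N-1}+\sum_{k<N}\pm\lambda_{g_k}^{n_N}D^{(k)},
\]
where the minors $D_{N-1}$ and $D^{(k)}$ depend only on $n_1,\dots,n_{N-1}$. By induction $D_{N-1}\neq0$; the base case is a $1\times1$ matrix $\lambda_{g_1}^{n_1}\neq0$. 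Dividing by $\lambda_{g_N}^{n_N}$, every ratio $(\lambda_{g_k}/\lambda_{g_N})^{n_N}$ with $k<N$ tends to $0$ as $n_N\to\infty$. Since $A$ is infinite, we can pick $n_N\in A$ larger than $n_{N-1}$ and large enough that the normalized determinant is within $|D_{N-1}|/2$ of $D_{N-1}$, hence nonzero. This yields distinct $n_1<\dots<n_N$ in $A$ with an invertible matrix.

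\textbf{Where the difficulty lies.} The main care point is exactly this pairing of exponent size with modulus. Signs of the $\lambda_g$ are irrelevant to the argument, since only moduli enter the limits. That is precisely why the hypothesis needs distinctness modulo $\pm$: it guarantees distinct moduli, which is what makes the dominance argument go through.
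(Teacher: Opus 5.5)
Your argument is correct, and your Step~1 is exactly the paper's hyperplane-avoidance argument. For the matrix you take a different, constructive route. You order the moduli increasingly and pick $n_1,n_2,\dots$ greedily in $A$. At each stage you expand the leading $m\times m$ minor along its last row, so that after dividing by $\lambda_{g_m}^{n_m}$ it equals the previous nonzero minor plus terms weighted by $(\lambda_{g_k}/\lambda_{g_m})^{n_m}\to 0$.

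The paper instead argues by contradiction on the dual side. If no $|G|$ exponents from $A$ gave an invertible matrix, the vectors $(\lambda_g^n)_{g\in G}$, $n\in A$, would lie in a proper subspace. That gives $c\neq 0$ with $\sum_g c_g\lambda_g^n=0$ for all $n\in A$. Dividing by $\lambda_h^n$, for the $h$ of largest modulus among those with $c_h\neq 0$, and letting $n\to\infty$ along $A$ forces $c_h=0$.

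Both proofs rest on the same fact: the term of largest modulus dominates along an infinite set of exponents. The paper applies it once to a single linear relation and avoids all cofactor bookkeeping. Yours yields an explicit recipe, in which every $n_m\in A$ beyond a threshold depending on $n_1,\dots,n_{m-1}$ works, and it produces increasing exponents.

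In a write-up, drop the discarded first attempt and state the ordering $|\lambda_{g_1}|<\dots<|\lambda_{g_N}|$ at the outset. The induction needs $|\lambda_{g_m}|$ to be maximal among $g_1,\dots,g_m$ at every stage, not just at $m=N$.
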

\begin{proof}
    For any $g\neq h$, the equality $|\langle u_g,v\rangle| = |\langle u_h,v\rangle|$ holds only if $v$ lies on one of the two hyperplanes defined by $\langle u_g-u_h,v\rangle=0$ or $\langle u_g+u_h,v\rangle=0$. Because $u_g\neq\pm u_h$, these are proper hyperplanes. Similarly, $\langle u_g,v\rangle=0$ defines a proper hyperplane since $u_g\neq0$. Now, note that a finite-dimensional vector space cannot be covered by a finite union of proper hyperplanes, thus we can simply choose $v$ outside this union, ensuring the $|\lambda_g|$ are distinct and non-zero.

    To prove the second claim, assume for contradiction that no such subset of exponents yields an invertible matrix. This implies that the vectors $\bigl(\lambda_g^n\bigr)_{g\in G}$ for $n\in A$ fail to span $\mathbb R^{|G|}$. Thus, there exists coefficients $c_g$, not all zero, such that $\sum_{g\in G} c_g\lambda_g^n=0$ for every $n\in A$. Let $h$ be an index where $c_h\neq0$ and $|\lambda_h|$ is maximal among all indices with non-zero coefficients. Note then that since $\lambda_h\neq \pm \lambda_g$ for all $g\neq h$, we must have $|\lambda_g|<|\lambda_h|$ for all $g\neq h$. Dividing the sum by $\lambda_h^n$ yields 
    \begin{equation}
    c_h + \sum_{g\neq h} c_g \left( \frac{\lambda_g}{\lambda_h} \right)^n = 0 \qquad \text{for every } n\in A .
\end{equation}
Because $A$ is infinite, it contains arbitrarily large integers. As we take $n\to\infty$ along $A$, the fractional terms vanish due to $|\lambda_g|/|\lambda_h|<1$ for $g\neq h$. This leaves $c_h=0$, contradicting our initial assumption.
\end{proof}

 We now by using the Taylor expansion of the non-polynomial loss, we can extract the individual orbit directions $J_h^*y$ from the augmented span.

 \begin{lemma}
\label{lem:orbit_terms_in_augmented_span}
Let $\theta\in\Theta_{\mathrm{reg}}$ and $x\in\calX_{\mathrm{reg}}$, then for every $h\in G$ and every $y\in\calY$, we have
\begin{equation}
    J_h^*y \in \operatorname{span}_{y'\in\calY} \nabla_\theta \mathcal E^G_{(x,y')}(\theta).
\end{equation}
\end{lemma}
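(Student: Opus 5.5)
Write $W:=\operatorname{span}_{y'\in\calY}\nabla_\theta\mathcal E^G_{(x,y')}(\theta)$. This is a finite-dimensional, hence closed, subspace of $T_\theta\Theta$. The plan is to feed scaled labels $y'=sv+\varepsilon y$ into \eqref{eq:augmented_gradient_appendix}. Here $v$ is the separating vector from Lemma~\ref{lem:orbit_separation} applied to $\{u_g\}$, which is legitimate since Assumption~\ref{ass:negres} gives $u_g\neq 0$ and $u_g\neq\pm u_h$. We then extract Taylor coefficients in $s$ and $\varepsilon$. Because $W$ is a closed subspace, every derivative in $s$ or $\varepsilon$ of a smooth curve lying in $W$ again lies in $W$. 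So it suffices to produce enough derivative identities.

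\emph{Step 1 (expansion).} Write $\mathcal L'(t)=\sum_k a_k t^k$ near $0$. Since $\mathcal L$ is not a polynomial, $\mathcal L'$ is not one either, so the set $A=\{k: a_k\neq0\}$ is infinite. For small $s$ and $\varepsilon$, using linearity of $J_g^*$ we get
\begin{equation*}
|G|\,\nabla\mathcal E^G_{(x,sv+\varepsilon y)}(\theta)=\sum_{g}\mathcal L'\bigl(s\lambda_g+\varepsilon\langle u_g,y\rangle\bigr)\,J_g^*(sv+\varepsilon y)\in W .
\end{equation*}

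\emph{Step 2 (coefficient extraction).} Differentiate once in $\varepsilon$ at $\varepsilon=0$. This gives
\begin{equation*}
\sum_g \mathcal L'(s\lambda_g)J_g^*y+s\sum_g\mathcal L''(s\lambda_g)\langle u_g,y\rangle J_g^*v\in W \quad\text{for all small } s.
\end{equation*}
Next take the $n$-th $s$-derivative at $s=0$. The coefficient of $s^n$ is
\begin{equation*}
a_n\sum_g\lambda_g^nJ_g^*y+(n)a_n\sum_g\lambda_g^{n-1}\langle u_g,y\rangle J_g^*v ,
\end{equation*}
where the second term comes from $\mathcal L''$ having Taylor coefficients $(k+1)a_{k+1}$.

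The cross term is the main obstacle. I would remove it by first treating $y=v$ itself. Differentiating $\sum_g\mathcal L'(s\lambda_g)\,sJ_g^*v\in W$ in $s$ gives $a_n\sum_g\lambda_g^nJ_g^*v\in W$ for every $n$. By Lemma~\ref{lem:orbit_separation} with exponent set $A$, the matrix $(\lambda_g^{n_j})$ is invertible for suitable $n_j\in A$. Inverting it yields $J_h^*v\in W$ for every $h$. With all $J_g^*v\in W$, the cross term above lies in $W$ and can be subtracted. This leaves $\sum_g\lambda_g^nJ_g^*y\in W$ for all $n\in A$.

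\emph{Step 3 (inversion).} Choose $n_1,\dots,n_{|G|}\in A$ from Lemma~\ref{lem:orbit_separation} so that $(\lambda_g^{n_j})$ is invertible. Multiplying by the inverse matrix expresses each $J_h^*y$ as a linear combination of the vectors $\sum_g\lambda_g^{n_j}J_g^*y\in W$, which proves the claim.

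One technicality should be checked carefully. The Taylor series of $\mathcal L'$ is taken at $0$, so everything is local in $s$ and $\varepsilon$, and only smoothness of $s\mapsto\nabla\mathcal E^G$ is used. If one prefers expanding at a point $t_0\neq0$, the argument needs $\mathcal L'$ to be non-polynomial, which analyticity guarantees on the whole domain by the identity theorem. In that case the exponent shift by $t_0$ must be handled instead, and expanding at $0$ avoids this.
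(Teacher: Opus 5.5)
Your proposal is correct and follows the paper's proof essentially step for step. As in the paper, you first plug in the scaled label $sv$ to get $\sum_g \lambda_g^n J_g^*v$ in the span for $n\in A$, and invert the Vandermonde-type matrix to obtain every $J_g^*v$. You then take the directional derivative in $y$ at $sv$ (your $\varepsilon$-derivative), discard the $\mathcal L''$ cross term because it is a combination of the $J_g^*v$, and invert again.
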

\begin{proof}
    Let $\mathcal S = \operatorname{span}_{y'\in\calY} \nabla_\theta \mathcal E^G_{(x,y')}(\theta)$, and define the function $F(y) = \nabla_\theta \mathcal E^G_{(x,y)}(\theta)$. By definition, $F(y)\in \mathcal S$ for all $y$. 

    Since $\mathcal L$ is analytic and non-polynomial, its derivative can locally be written 
    \begin{equation} \label{eq:infseries}
    \mathcal L'(t) = \sum_{n\geq0} a_n t^n,
    \end{equation}
where the Taylor series has an infinite number of non-zero coefficients. Let $A = \{n\geq0 \, \vert \, a_n\neq0\}$ be this infinite set of indices.

    By Assumption \ref{ass:negres}, our vectors $u_g$ satisfy the conditions of Lemma \ref{lem:orbit_separation}. Let $v$ be the vector guaranteed by that lemma, and let $\lambda_g=\langle u_g,v\rangle$. We claim that for every $g\in G$,
    \begin{equation}
        J^*_gv \in \calS
    \end{equation}
    To show this, substitute a scaled target $y=tv$ into our augmented gradient formula \eqref{eq:augmented_gradient_appendix} to get
\begin{equation}
    F(tv) = \frac{1}{|G|} \sum_{g\in G} t\,\mathcal L'(t\lambda_g)J_g^*v  \stackrel{\eqref{eq:infseries}}{=}   \sum_{n\geq 0}a_n t^{n+1} \left(\frac{1}{|G|}\sum_{g\in G}\lambda_g^n\,J_g^*v\right).
\end{equation}
The curve $t\mapsto F(tv)$ lies within the finite-dimensional subspace $\mathcal S$. Consequently, all Taylor coefficients of this curve evaluated at $t=0$ must also belong to $\mathcal S$. Extracting the coefficient for $t^{n+1}$ (where $n\in A$) yields that there exist $s_n\in \calS$, $n\in A$, with
\begin{equation}
\label{eq:Jgv_combinations_appendix}
    \sum_{g\in G} \lambda_g^n J_g^*v =s_n , \quad n \in A.
\end{equation}
But by Lemma \ref{lem:orbit_separation}, we can select a subset of exponents $B\sse A$ so that $(\lambda_g^n)_{n\in B,g\in G}$ has an inverse, say $(\mu_{g,n})_{g\in G, n\in B}$. Applying this matrix to \eqref{eq:Jgv_combinations_appendix} yields
\begin{align*}
    J_g^*v = \sum_{n\in B} \mu_{g,n}s_n \in \calS
\end{align*}
for every $g\in G$, which was the intermediate claim.   

To generalize this to an arbitrary direction $y\in\calY$,
we consider the directional derivative at $tv$ in direction $y$, $DF_{tv}[y]$. Since $F$ maps everywhere into $\mathcal{S}$, this derivative must also lie in $\mathcal S$. This basically means, we need to evaluate the derivative with respect to a scalar $s$ at $s=0$
\begin{equation}
    DF_{tv}[y] =\left. \frac{d}{ds}F(tv+sy) \right |_{s=0} = \left. \frac{d}{ds} \left ( \frac{1}{|G|} \sum_{g \in G} \calL'(\langle u_g,tv+sy\rangle) J_g^*(tv+sy)\right) \right |_{s=0}.
\end{equation}
Since $s$ appears in both the scalar and the vector component, we apply the product rule. That is, we first differentiate the linear vector component $J_g^*(tv+sy)$ with respect to $s$ yields $J_g^*y$; this multiplies the un-differentiated scalar component evaluated at $s=0$. Second, we apply the chain rule to the scalar component $\calL'(\langle u_g,tv+sy\rangle)$. That pulls out the inner product $\langle u_g,y\rangle$; this multiplies the un-differentiated vector component evaluated at $s=0$, which is $tJ_{g}^*v$. Adding these two parts yields
\begin{equation}
    D F_{tv}[y] = \frac{1}{|G|} \sum_{g\in G} \mathcal L'(t\lambda_g)J_g^*y + \frac{1}{|G|} \sum_{g\in G} t\,\mathcal L''(t\lambda_g) \langle u_g,y\rangle J_g^*v .
\end{equation}

Notice that the second summation consists entirely of terms proportional to $J_g^*v$, which we have already proven lie in $\mathcal S$. Therefore, to ensure the entire expression remains in $\mathcal S$, the first summation must independently belong to $\mathcal S$. Taking the $t^n$ coefficients (for $n\in A$) of this first sum gives
\begin{equation}
\label{eq:Jgy_combinations_appendix}
    \sum_{g\in G} \lambda_g^n J_g^*y \in \mathcal S \qquad \text{for every } n\in A .
\end{equation}
By applying the same argument as before, we get $J_h^*y \in \mathcal S$, completing the proof.
\end{proof}

We are now ready to finalize the proof of Theorem \ref{thm:no_new_integrals_nonpolynomial}.
\begin{proof}[Proof of Theorem \ref{thm:no_new_integrals_nonpolynomial}]
    As mentioned initially, the inclusion $\operatorname{span} \nabla \mathcal E^G \subseteq \operatorname{span} \nabla \mathcal E$ is trivial. To prove the reverse inclusion, Lemma \ref{lem:dense_inputs_same_span} allows us to restrict to $x\in\calX_{\mathrm{reg}}$. For any such $x$, evaluating the standard empirical gradient \eqref{eq:single_orbit_gradient_appendix} at the identity element $g=e$ gives
\begin{equation}
    \nabla \mathcal E_{(x,y)}(\theta) = \mathcal L'(\langle u_e,y\rangle)J_e^*y .
\end{equation}
By Lemma \ref{lem:orbit_terms_in_augmented_span}, the vector $J_e^*y$ is strictly contained within the augmented span $\operatorname{span}_{y'\in\calY} \nabla_\theta \mathcal E^G_{(x,y')}(\theta)$. Consequently, $\nabla \calE_{(x,y)}(\theta)$ is also contained in the augmented span, proving the strict span equality \eqref{eq:negative_span_equality}.

Finally, suppose $I:\Theta_{\mathrm{reg}}\to\mathbb R$ is an integral of motion for symmetric data. By \eqref{eq:integral_grad}, its gradient $\nabla I(\theta)$ must be orthogonal to the augmented span. Because the augmented span equals the standard empirical span, $\nabla I(\theta)$ is necessarily orthogonal to all ordinary gradient directions as well. Therefore, $I$ is an integral of motion for arbitrary data, and no new integral of motion emerges.
\end{proof}
\newpage
\section{PROOF OF THEOREM \ref{thm:realized_lifted_symmetry}}
\label{app:finite_sample_lifted_symmetries}

In this section, we prove Theorem~\ref{thm:realized_lifted_symmetry}. We recall that the augmented loss over the parameter space is $\calE_\pi^G(\theta)=\calJ_\pi^G(M(\theta))$ and the parameter $\theta$ admits the decomposition $(P,\xi)$ with $P \in \Hom(E,V)$. Moreover, for every $a \in \O(V)$, there exists $B_a \in \mathsf H$, such that $M(aP,\xi) = M(P,\xi)B_a$. 

\begin{proof}
    Let $a\in \O(V)$. Since $B_a\in\mathsf H$, by definition of $\mathsf H$, we have $\calJ_\pi^G(MB_a)=\calJ_\pi^G(M)$ for all $M\in\Hom(\calH,\calY)$. Applying this to $M=M(P,\xi)$, we have
\begin{align}
    \calE_\pi^G(aP,\xi)
    &=\calJ_\pi^G(M(aP,\xi))   \\
    &=\calJ_\pi^G(M(P,\xi)B_a) && \qquad (\text{Equivariance of $M$})\\
    &=\calJ_\pi^G(M(P,\xi))  && \qquad{(B_a\in \mathsf H)} \\
    &=\calE_\pi^G(P,\xi).
\end{align}
Hence the augmented MSE loss is invariant under the left action on $P$ by $\O(V)$. This implies that the function can be written as 
\begin{equation}
    \label{eq:Phi_PPT}
    \calE_\pi^G(P,\xi) = \Phi(P^* P, \xi)
\end{equation}
for some smooth function $\Phi$. This is essentially the first fundamental theorem for the orthogonal group (see e.g. \parencite{weyl1946classical}, or \parencite{villar2021scalars} for a treatise connected to geometric deep learning). We give a short argument here for convenience:

What needs to be shown is that the left $\O(V)$-orbit in $\Hom(E,V)$ are determined by the Gram matrix $P^* P$. First, it is clear that the $P^* P$ needs to be conserved on those orbits:  $(aP)^* aP = P^* a^* aP=P^* P$ for $a\in \O(V)$. On the other hand, if $P$ and $\widetilde{P} \in \Hom(E,V)$ satisfy $P^* P=\widetilde{P}^* \widetilde{P}$, they must have the same rank. Therefore, there exists a unique homomorphism $\phi: \range P \to \range \widetilde{P}$ with that sends the $i$:th column of $P$ to the $i$:th column of $\widetilde{P}$, i.e. $\phi p_i = \widetilde{p}_i$. Since 
\begin{equation}
    \sprod{p_i,p_j} = (P^* P)_{ij} = (\widetilde{P}^* \widetilde{P})_{ij} = \sprod{\widetilde{p}_i,\widetilde{p}_j} = \sprod{\phi p_i, \phi p_j}
\end{equation}
for every $i,j$, $\phi$ must be an isometry. Extending this isometry to an element $a\in \O(V)$, we get $\widetilde{P}=aP$.

 Next, we argue that \eqref{eq:Phi_PPT} implies that
 \begin{equation} \label{eq:gradient}
     \nabla_P\calE_\pi^G(P,\xi)=P S(P,\xi)
\end{equation}
for some map $S(P,\xi)$  with values in the space $\Sym^2(E)$ of symmetric operators on $E$. To see this, let us declutter the notation by dropping the dependence on $\xi$. 
 Differentiating \eqref{eq:Phi_PPT} with respect to $P$, the chain rule yields
\begin{equation}
    \sprod{ \nabla_P \calE_\pi^G(P,\xi), \Delta P} = \sprod{\nabla_{P^* P}\Phi(P^* P),P^* \Delta P + \Delta P^* P} \\
    = \sprod{P(\nabla _{P^* P}\Phi(P^* P) + \nabla _{P^* P}\Phi(P^* P)^* ),\Delta P }, 
\end{equation}
i.e., reintroducing $\xi$, defining $\Sigma_P=P^*P$ and $S(P,\xi) =\nabla _{\Sigma_P}\Phi(\Sigma_P,\xi) + \nabla _{\Sigma_P}\Phi(\Sigma_P,\xi)^* \in \Sym^2(E)$, we get \eqref{eq:gradient}.

Hence, the $P$ component of augmented gradient flow satisfies $\dot P=-P S(P,\xi)$. This immediately shows that $P(t)=P(0)R(t)$, where $R(t)$ is the solution of the following ODE problem
\begin{equation}
    \dot R(t) = -R(t)S(P(t),\xi(t)), \qquad R(0) = I_E.
\end{equation}
Due to the theory of linear ODEs, $R(t)$ is invertible for all $t$ for which the flow exists. Hence, $P(t) = P(0)R(t)$ for an invertible $R(t)$, which implies $\range P(t)=\range P(0)$ (because such right multiplication preserves the range).

\end{proof}

\newpage

\section{OTHER OMITTED PROOFS} \label{app:otherproofs}
\begin{proof}[Proof of Proposition \ref{prop:Hs}]
    The 'Pythagorean formula' on $\calY$ and the definition of the inner product on the spaces of operators implies that
    \begin{align*}
        \calJ_\pi(
        M) &= \erw_\pi\left(\sprod{MT(x),MT(x)}\right) - 2 \erw_\pi(\sprod{MT(x),y}) + \erw_\pi(\sprod{y,y}) \\
        &= \erw_\pi(\sprod{MT(x),MT(x)}) - 2 \erw_\pi(\sprod{MT(x),y}) + \erw_\pi(\sprod{y,y}) \\
        &= \erw_\pi\left(\sprod{M^*M,T(x)T(x)^*}\right) - 2 \erw_\pi(\sprod{M,yT(x)^*}) + \erw_\pi(\sprod{y,y}) \\
        & = \sprod{M^*M, \erw_\pi(T(x)T(x)^*)} - 2 \sprod{M,\erw_\pi(yT(x)^*)} + \erw_\pi \left(\|y\|^2\right).
    \end{align*}
    The term $\|y\|^2$ is invariant under the orthogonal action $\rho_\calY$, and thus not affected by taking the $G$-average. Now let us investigate how the symmetric operator $S(x) = \erw_\pi\left(T(x)T(x)^*\right)\in \Sym^2(\calH)$ and the operator $L(y,x)=\erw_\pi(yT(x)^*
    )\in \Hom(\calH,\calY)$ are affected by symmetrization over $G$. The equivariance of $T$ implies
    \begin{align*}
        \overline{S} &:= \int_GS(\rho_\calX(g)x) \, \dd \mu(g) =  \int_GT(\rho_\calX(g)x)T(\rho_\calX(g)x)^* \, \dd \mu(g) = \int_G\rho_\calH(g)T(x)T(x)^*\rho_\calH(g)^* \, \dd \mu(g) \\
        \overline{L} &:=  \int_G L(\rho_\calY(g)y,\rho_\calX(g)x) \, \dd \mu(g) =  \int_G\rho_\calY(g)yT(\rho_\calX(g)x)^* \, \dd \mu(g) = \int_G\rho_\calY(g)yT(x)^*\rho_\calH(g)^* \, \dd \mu(g). \\
    \end{align*}
    It is clear that $\overline{S}$ is still symmetric. Additionally, $\overline{S}$ and $\overline{L}$ are equivariant. This property of the Haar average is very well-known -- it follows from the invariance of the Haar measure. For the reader's convenience, let us carry it out for $\overline{L}$. For $k\in G$ arbitrary, we have
    \begin{align*}
        \overline{L}\rho_\calH(k) &= \int_G\rho_\calY(g)yT(x)^*\rho_\calH(g)^*\rho_\calH(k) \, \dd \mu(g) = \int_G\rho_\calY(g)yT(x)^*\rho_\calH(k^{-1}g)^* \, \dd \mu(g) = \lceil k^{-1}g=h\rceil \\
        &= \int_G\rho_\calY(kh)yT(x)^*\rho_\calH(h)^* \, \dd \mu(h) = \rho_\calY(k)\int_G\rho_\calY(h)yT(x)^*\rho_\calH(h)^* \, \dd \mu(h) = \rho_\calY(k)\overline{L}.
    \end{align*}
    We conclude that
    \begin{align*}
        \calJ_\pi^G(M) = \sprod{M^*M,\overline{S}} -2\sprod{M,\overline{L}} + \erw_\pi\left(\| y \|^2\right)
    \end{align*}
    for an $\overline{S}\in \Sym^2_G(\calH)$ and an $\overline{L}\in \Hom_G(\calH,\calY)$. 
    Therefore, if $B\in \mathsf H$, we obtain
    \begin{align*}
        \calJ_\pi^G(MB) =  \sprod{B^*M^*MB,\overline{S}} -2\sprod{MB,\overline{L}} + \erw_\pi \left(\|y\|^2\right) = 
     \sprod{M^*M,B\overline{S}B^*} -2\sprod{M,\overline{L}B^*} + \erw_\pi \left(\|y\|^2\right) = \calJ_\pi^G(M),
    \end{align*}
    which is the claim.
\end{proof}

\begin{proof}[Proof of Proposition \ref{prop:augflow}]
(i) Let $I$ be a function that is preserved under the flow of all group augmented risks, and $\pi$ be a symmetric representations. For such $\pi$, we have $\calE_\pi=\calE_\pi^G$. Therefore, $I$ is preserved under the flow generated by $\calE_\pi$. It follows that $I$ is an integral of motion for symmetric data. 

    (ii) Now let $I$ be an integral of motion for symmetric data and $\pi$ be arbitrary. Consider the symmetrization $\pi_G$ of $\pi$: if $g\sim \mu$ and $(x,y)\sim \pi$, $(\rho(g)x,\rho(g)y)\sim \pi_G$. Then, $\pi_G$ is symmetric, and hence, $I$ is preserved under the flow 
    \begin{equation}
        \dot{\theta} = -\nabla \calE_{\pi_G}(\theta) =- \nabla \calE_\pi^G,
    \end{equation}
    where the last line follows from  $\calE_{\pi_G}= \calE_\pi^G$. This was what to be shown.    
\end{proof}

\newpage
\section{\texorpdfstring{A COMPLETE CHARACTERIZATION OF $\mathsf H$}{A COMPLETE CHARACTERIZATION OF H}} \label{app:hchar}
As mentioned in the main paper, we here want to give a complete characterization of the group $\mathsf H$. We will see that its structure tightly connects to the representation $\rho_\calH$, and in particular how it reduces $\calH$ into \emph{irreps}. To make things convenient for reader, we will not assume deep prior knowledge of real representation theory,  but instead keep the exposition self-contained, and present the necessary math in quite some detail. Readers well-versed in real representation theory can proceed directly to Section \ref{subsec:OrthogonalStabilizers} should they wish.

To soften up the notation a bit, let us drop the subindex of the representation $\rho_\calH$. Also, let us notice that $\rho$ without loss of generality can be assumed to be an orthogonal representation, $\rho(g)\in \O(\calH)$ for all $g\in G$. Indeed, we can redefine our inner product as
\begin{equation*}
    \sprod{x, y}_G := \int_G\sprod{\rho(g)x, \rho(g)y} \ \dd\mu(g)
\end{equation*}
where $\mu$ is the normalized Haar measure of $G$.

\subsection{Maschke's Theorem and Schur's Lemma}
Let $W$ be a finite-dimensional Hilbert space on which a compact group $G$ acts through an orthogonal representation $\rho$. It is common to refer to $W$ itself as a representation, and we will do so. An invariant space, or \emph{subrepresentation} of $\rho$, is a subspace $W'\subseteq W$  with $\rho(g)W'\sse W'$ for all $g\in G$. This of course implies that $G$ acts on $W'$ through the same representation: Formally, this representation is the map $\rho|_{W'}: G\to GL(W')$ defined by  $\rho(g)|_{W'} = \rho|_{W'}(g)$.
A representation $(\rho, W)$ is called \emph{irreducible} if the only subrepresentations are trivial. That is, if the only invariant subspaces are $W$ and $\{0\}$.

A non-trivial irreducible subrepresentation is called an \emph{irrep}. These are heavily studied because  representations often decompose into a direct sum of irreps.
That this is the case in our setting, with a compact group $G$, is \emph{Maschke's~theorem}.

\begin{theorem}(Maschke's Theorem) \parencite[Sec. 1.2]{fulton2013representation}
    Let $G$ be a compact Hausdorff group and $W$ be a finite dimensional representation of $G$.
    Then, 
    the representation of $G$ factors into an orthogonal product of non trivial irreducible representations.
\end{theorem}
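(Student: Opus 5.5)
We argue by induction on $\dim W$, using orthogonal complements of invariant subspaces. If $\dim W = 0$ there is nothing to show (the empty sum). If $W$ is irreducible, we are done with a single summand. Otherwise, $W$ has a nontrivial proper invariant subspace $W'$.

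The key step is to show that the orthogonal complement $W'^{\perp}$ is again invariant. This is where the orthogonality of $\rho$ is used. As remarked above, we may assume $\rho$ is orthogonal by passing to the averaged inner product $\sprod{\cdot,\cdot}_G$. To verify it, take $w \in W'^{\perp}$, $w' \in W'$ and $g \in G$. Then
\begin{equation*}
\sprod{\rho(g)w, w'} = \sprod{w, \rho(g)^{*}w'} = \sprod{w, \rho(g^{-1})w'} = 0,
\end{equation*}
since $\rho(g^{-1})w' \in W'$ by invariance. Hence $\rho(g)W'^{\perp} \sse W'^{\perp}$, and $W = W' \oplus W'^{\perp}$ is an orthogonal decomposition into two invariant subspaces, each of strictly smaller dimension.

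The restrictions $\rho|_{W'}$ and $\rho|_{W'^{\perp}}$ are again orthogonal representations on finite-dimensional Hilbert spaces. The induction hypothesis therefore decomposes each of them into orthogonal direct sums of irreps. Concatenating the two decompositions gives an orthogonal decomposition of $W$, since $W' \perp W'^{\perp}$.

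The only genuinely nontrivial input is the existence of an invariant inner product. This is where compactness enters: the normalized Haar measure exists and is invariant, so $\sprod{\cdot,\cdot}_G$ is well defined and $G$-invariant. It is also positive definite, because the integrand $\sprod{\rho(g)x,\rho(g)x}$ is continuous in $g$ and strictly positive for $x \neq 0$, so its integral against the Haar measure (which has full support) is positive. I expect this positivity and continuity check to be the main point needing care. Once it is in place, the remaining argument is pure linear algebra, and the Hausdorff assumption is needed only to guarantee the Haar measure.
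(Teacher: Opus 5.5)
Your proof is correct and follows the same route as the paper. Both argue by induction on $\dim W$, show that the orthogonal complement of an invariant subspace is invariant via $\rho(g)^* = \rho(g^{-1})$, and concatenate the two decompositions; the only difference is that the paper takes the Haar-averaged invariant inner product for granted from its preceding remark, while you also check its positive definiteness.
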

Maschke's Theorem is usually stated for finite groups $G$. Let us therefore quickly present the standard proof to showcase that it also works for compact ones.
\begin{proof}
    Let $U\subseteq W$ be a non-trivial sub representation of $W$. We show that also $U^\perp$ is $G$-invariant.
    Suppose $u\in U$, $g\in G$ and $v\in U^\perp$.
    Then, because $U$ is a subrepresentation, $u':= \rho(g^{-1})u$ is an element of $U$.
    Thus,
    \begin{equation}
        0 = \sprod{u', v} = \sprod{\rho(g)u', \rho(g) v} = \sprod{u, \rho(g) v},
    \end{equation}
    where we used that the representation is orthogonal. Because $u \in U$, $g\in G$ and $v\in U^\perp$ can be chosen arbitrary, this implies that $U^\perp$ is $G$-invariant.
    In particular, $U^\perp$ is subrepresentation of $W$.

    We can now construct the decomposition through a  quick induction argument over the dimension of $W$.
    If $\dim W = 1$, $W$ is already irreducible.
    So suppose $\dim W = n+1\geq 2$ and that all subrepresentations of dimension less than or equal to $n$ factor into orthogonal subrepresentations.
    If $W$ is irreducible, then we are done, so assume that $W$ is reducible, i.e. there exists a non trivial subrepresentation $U$ of $W$.
    Then $U^\perp$ is also subrepresentation of $W$.
    Both have smaller dimensions than $W$ and thus factor into orthogonal products
    \begin{equation}
        U = \bigoplus_i U_i, \ U^\perp = \bigoplus_j V_j
    \end{equation}
    of some irreps $U_i$ and $V_j$ which are orthogonal to each other because $U$ and $V$ are orthogonal to each other.
    So $W = \bigoplus U_i \oplus \bigoplus_j V_j$ is a orthogonal factorisation into irreps.
\end{proof}
Applying Maschke to our space $W$ shows that we can decompose it into irreps
\begin{align} \label{eq:irreps}
    W = W_1 \oplus \dots \oplus W_r
\end{align}
for some $r\in \N$. We can naturally bundle the irreps into groups, based on whether they are isomorphic or not. If we call the non-isomorphic irreps $W_\mu$, and let the multiplicity of $W_\mu$ in \eqref{eq:irreps} be denoted $n_\mu$, we can write
\begin{align*}\label{eq:W_decomp}
    W = \bigoplus_\mu \underbrace{W_\mu \oplus \dots \oplus W_\mu}_{n_\mu \text{ times}} \sim \bigoplus_\mu W_\mu \otimes \R^{n_\mu} =: \bigoplus_\mu W^\mu.
\end{align*}
We can, and will, think of $W^\mu$ as a 'block' of isomorphic irreps.

The definition of $\mathsf H$ involves $\End_G(W)$, the set of equivariant homomorphisms on $W$. A very important lemma for characterizing them is Schur's Lemma.

\begin{theorem}(Schur's Lemma) \parencite[Lemma 1.7]{fulton2013representation}
    Let $W$ and $V$ be irreducible representations of $G$, and $\varphi$ a $G$-equivariant homomorphism between $W$ and $V$.
    Then $\varphi$ is either an isomorphism or the zero map.
\end{theorem}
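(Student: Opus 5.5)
The plan is to use the standard kernel/image argument for equivariant maps, with irreducibility doing all the work. Let $(\rho_W,W)$ and $(\rho_V,V)$ be the two irreducible representations, and let $\varphi:W\to V$ be linear with $\varphi\rho_W(g)=\rho_V(g)\varphi$ for all $g\in G$. The aim is to show that $\ker\varphi$ and $\range\varphi$ are subrepresentations of $W$ and $V$ respectively. Irreducibility then forces each of them to be trivial, and the conclusion follows by a short case split.

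The first step is invariance of the kernel. If $w\in\ker\varphi$ and $g\in G$, then $\varphi(\rho_W(g)w)=\rho_V(g)\varphi(w)=0$, so $\rho_W(g)\ker\varphi\sse\ker\varphi$. The second step is invariance of the image. If $v=\varphi(w)$, then $\rho_V(g)v=\varphi(\rho_W(g)w)\in\range\varphi$, so $\range\varphi$ is $G$-invariant. Both subspaces are therefore subrepresentations in the sense defined above.

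Irreducibility of $W$ now gives $\ker\varphi\in\{\{0\},W\}$, and irreducibility of $V$ gives $\range\varphi\in\{\{0\},V\}$. There are two cases.
\begin{itemize}
\item If $\ker\varphi=W$, then $\varphi=0$.
\item Otherwise $\ker\varphi=\{0\}$, so $\varphi$ is injective. Its range is then a nonzero subspace (assuming $W\neq\{0\}$, as irreps are nontrivial), so $\range\varphi=V$ and $\varphi$ is surjective. A linear bijection is an isomorphism, and its inverse is automatically equivariant.
\end{itemize}

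No step here is a real obstacle. The only point needing care is the convention that irreps are nonzero, which is what excludes the degenerate case $W=\{0\}$ in the second branch. Orthogonality of the representations and compactness of $G$ are not needed for this statement.
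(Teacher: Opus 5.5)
Your proof is correct and matches the source the paper relies on: the paper gives no proof of its own and cites Fulton--Harris, Lemma 1.7, whose proof of this part is exactly your observation that $\ker\varphi$ and the image of $\varphi$ are invariant subspaces. One small simplification: you do not need the convention that irreps are nonzero, since in your second branch $\ker\varphi=\{0\}\neq W$ already forces $W\neq\{0\}$, and the case $W=\{0\}$ falls under $\varphi=0$.
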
 

Schur's Lemma implies that $\End_G(W)$ consists of 'block-diagonal' operators, in the following sense. First, let us note that we can decompose any operator $T\in \End(W)$ into operators $T^{\lambda,\mu}$ between the blocks $W^\lambda \to W^\mu$. Those operators can further be decomposed into $n_\mu \cdot n_\mu$ blocks $T^{\lambda,\mu}_{i,j}$ between the individual copies of $W_\mu$ and $W_\mu$. In formulas,
\begin{align}
    \End(W) = \Hom(W, W) & \simeq \bigoplus_{\mu, \eta}\Hom(W^\mu, W^\eta) \simeq \bigoplus_{\mu, \eta}\Hom(W_\mu, W_\eta) \otimes_\R \Hom(\R^{n_\mu}, \R^{n_\eta}).
\end{align}
Now, it is not hard to convince oneself that an operator $T$ is equivariant if and only if each individual block $T^{\mu,\eta}_{i,j}$ is. Consequently,
\begin{align}
    \End_G(W) \simeq\bigoplus_{\mu, \eta}\Hom_G(W_\mu, W_\eta) \otimes_\R \Hom(\R^{n_\mu}, \R^{n_\eta}).
\end{align}
Here, we formally used that $G$ acts trivially on $\R^{n_\mu }$ -- that is, $G$ only acts in each individual blocks, and do not move them  around or mix them. Now, Schur tells us that $\Hom_G(W_\mu, W_\eta)$ is trivial for $\mu \neq \eta$, so that 
\begin{align}
    \End_G(W)\simeq \bigoplus_\mu \End_G(W_\mu)\otimes M_{n_\mu}(\R),
\end{align}
where $M_{n_\mu}$ denotes the space of real $(n_\mu\times n_\mu)$-matrices.
Let us decipher this last equation: It says that all $T\in \End_G(W)$ has the following 'nested block-diagonal form':
\begin{align*}
    T = \begin{bmatrix}
        T^{\mu_1} &    & \\
         & \ddots &  \\
         &  & T^{\mu_r}
    \end{bmatrix},  \quad \text{ with } T^{\mu} = \begin{bmatrix}
        T_{1,1}^{\mu} & \dots &  T_{1,n_\eta}^\mu \\
        \vdots & \ddots & \vdots \\
         T_{n_\mu,1}^{\mu} & \dots &  T_{n_\mu,n_\eta}^{\lambda}
    \end{bmatrix}, \ \text{ with }T_{k,\ell}^{\mu} \in \End_G(W_{\mu})
\end{align*}

For future reference, let us formulate this as a proposition.
\begin{prop}\label{prop:End_decomp}
    Let $G$ be a compact group acting on $W$.
    Then, 
    \begin{equation*}
       \End_G(W) \simeq \bigoplus_\mu \End_G(W_\mu)\otimes_\R M_{n_\mu}(\R)
    \end{equation*}
   where $W_\mu$ are the irreducible real representations of $G$, and $n_\mu$ their multiplicity in $W$.
\end{prop}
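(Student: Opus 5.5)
The plan is to make the block structure of $T$ precise by decomposing it along the isotypic decomposition $W=\bigoplus_\mu W^\mu$ with $W^\mu\simeq W_\mu\otimes\R^{n_\mu}$. Concretely, fix for each $\mu$ a model irrep $W_\mu$ and, for every copy $k=1,\dots,n_\mu$ of $W_\mu$ occurring in \eqref{eq:irreps}, a $G$-isomorphism $\iota^\mu_k:W_\mu\to W$ onto that copy. Write $\pi^\mu_k:W\to W_\mu$ for the corresponding equivariant projection, so that $\pi^\mu_k\iota^\eta_\ell=\delta_{\mu\eta}\delta_{k\ell}\,\id$ and $\sum_{\mu,k}\iota^\mu_k\pi^\mu_k=\id_W$. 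The orthogonality supplied by Maschke's theorem lets us take $\pi^\mu_k=(\iota^\mu_k)^*$ after normalization, though only the algebraic relations are needed.

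For $T\in\End(W)$ I would define the blocks
\begin{equation*}
T^{\mu,\eta}_{k,\ell}:=\pi^\eta_k\, T\,\iota^\mu_\ell\in\Hom(W_\mu,W_\eta),
\end{equation*}
and reconstruct $T=\sum \iota^\eta_k T^{\mu,\eta}_{k,\ell}\pi^\mu_\ell$. This gives a linear isomorphism
\begin{equation*}
\End(W)\simeq\bigoplus_{\mu,\eta}\Hom(W_\mu,W_\eta)\otimes M_{n_\eta\times n_\mu}(\R).
\end{equation*}

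The key step is to show that $T$ is equivariant if and only if every block is. One direction is immediate: because $\iota$ and $\pi$ are equivariant, each block of an equivariant $T$ is a composition of equivariant maps. For the converse, if all blocks are equivariant, then the reconstruction formula expresses $T$ as a sum of compositions of equivariant maps, so $T$ is equivariant.

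Next I apply Schur's Lemma. For $\mu\neq\eta$ the irreps are non-isomorphic, so every equivariant map between them is zero and $\Hom_G(W_\mu,W_\eta)=0$. Only the diagonal blocks $\mu=\eta$ survive, and there the entries $T^{\mu,\mu}_{k,\ell}$ range freely over $\End_G(W_\mu)$. Arranging them as an $n_\mu\times n_\mu$ matrix identifies the $\mu$-block with $\End_G(W_\mu)\otimes_\R M_{n_\mu}(\R)$. Summing over $\mu$ gives the claimed isomorphism.

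The main obstacle is justifying that this identification is canonical enough. The naive version would only split $W$ into irreps via \eqref{eq:irreps}; what the argument needs is the stronger fact that the copies belonging to one isomorphism class can all be identified with a single fixed model $W_\mu$ through the isomorphisms $\iota^\mu_k$, while different classes remain non-isomorphic. I would handle this by choosing the $\iota^\mu_k$ explicitly, as above, rather than appealing to the uniqueness of the isotypic decomposition. The tensor identification itself is then elementary, because the block entries take values in $\End_G(W_\mu)$, which is a real division algebra, and $M_{n_\mu}(\R)$ merely indexes them. Since the statement only asks for an isomorphism of vector spaces, no further compatibility needs checking, although multiplicativity would in fact follow directly from $\pi^\mu_k\iota^\eta_\ell=\delta\,\id$.
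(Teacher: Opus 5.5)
Your proposal is correct and follows essentially the same route as the paper. Both arguments split $\End(W)$ into blocks along the isotypic decomposition $W\simeq\bigoplus_\mu W_\mu\otimes\R^{n_\mu}$, observe that $T$ is equivariant if and only if every block is, and use Schur's Lemma to kill the off-diagonal blocks. Your explicit equivariant inclusions and projections $\iota^\mu_k,\pi^\mu_k$ (with $\pi^\mu_k\iota^\eta_\ell=\delta_{\mu\eta}\delta_{k\ell}\,\id$ and $\sum_{\mu,k}\iota^\mu_k\pi^\mu_k=\id_W$) rigorously justify the block-equivariance step, which the paper leaves as ``not hard to convince oneself''.
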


\subsection{Equivariant endomorphisms for an irreducible representation}\label{subsec:EquivEndIrreps}
Proposition \ref{prop:End_decomp} shows that the analysis of equivariant  endomorphisms on $W$ reduces to the analysis of endomorphisms of an irrep. To do the latter is the purpose of this section. To unclutter the notation, let us in this section write $W$ instead of $W_\mu$, always let $W$ be be a finite-dimensional irrep, and write $D$ instead of $\End_G(W)$.

It will provide very useful to view $W$ a \emph{$D$-module}. $D$-modules are essentially vector spaces in which it is clear what it means to multiply a vector not only with scalars, but with elements in a \emph{ring} $D$. More formally, $W$ is an abelian group $(W, +)$ equipped with a 'scalar' multiplication  $\cdot : D \times W \to W$ defined by application of the endomorphisms, i.e. $\varphi \cdot x = \varphi(x)$. This definition fulfills the abstract axioms for a $D$-module: 
\begin{align*}
        \varphi\cdot(x+y) &= \varphi\cdot x + \varphi\cdot y \quad &\text{(distributivity I)}&& (\varphi+\xi)\cdot x &= \varphi\cdot x + \xi\cdot x \quad &\text{(distributivity II)}  \\
        (\varphi\xi)\cdot x &=\varphi\cdot(\xi x) & \text{(associativity)} &&        I\cdot x &= x &\text{(compatibility with 1)}.
\end{align*}
Here, note that $I\in D$ is the identity. To make it clear, we will denote $W$ when viewed as a $D$-module by $DW$. 

Schur's Lemma implies that every element in $D$ is invertible. Hence, $D$ is a \emph{division ring}, a ring in which each non-zero element has a multiplicative inverse. Division rings behave like fields, just without commutativity in the multiplication. $D$ being a division ring implies that $DW$ has a basis - a set of vectors in $W$ which generate every element in $W$ as a $D$-linear combination, and which are linearly independent over $D$. Let us give the quick proof that said basis is finite.

\begin{prop}
    $DW$ has a finite basis.
\end{prop}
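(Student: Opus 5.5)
The plan is to exploit that $D$ contains the real scalars and that $W$ is finite-dimensional over $\R$, and then to build a basis greedily. First, every real scalar multiple $cI$ of the identity commutes with $\rho(g)$, so $\R I\subseteq D$. Moreover, $D\subseteq\End(W)$, so $D$ is a finite-dimensional real vector space. Consequently, any $D$-submodule of $W$ (a subset closed under addition and $D$-multiplication) is in particular a real subspace of $W$, and the $D$-span $D x_1+\dots+D x_k$ of finitely many vectors is a real subspace as well.

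Next, I would construct the basis inductively. If $W=\{0\}$, the empty set is a basis. Otherwise pick $x_1\neq0$ and set $U_1=Dx_1$. Having chosen $x_1,\dots,x_k$, linearly independent over $D$, with $U_k=Dx_1+\dots+Dx_k$: if $U_k=W$ we stop; otherwise pick $x_{k+1}\in W\setminus U_k$. The key verification is that $x_1,\dots,x_{k+1}$ remain $D$-linearly independent. Suppose $\sum_{i\le k+1}\varphi_i x_i=0$. If $\varphi_{k+1}\neq0$, it is invertible in $D$ by Schur's Lemma (the division ring property noted above). Then $x_{k+1}=-\varphi_{k+1}^{-1}\sum_{i\le k}\varphi_i x_i\in U_k$, which is a contradiction. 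Hence $\varphi_{k+1}=0$, and the remaining coefficients vanish by the induction hypothesis.

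Finally, I would show termination, which is the only real obstacle. Since $x_{k+1}\notin U_k$ and $x_{k+1}\in U_{k+1}$, the chain $U_1\subsetneq U_2\subsetneq\cdots$ is strictly increasing as real subspaces of $W$. Each step therefore raises the real dimension by at least one, so the process stops after at most $\dim_\R W$ steps. The final $x_1,\dots,x_k$ then generate $W$ over $D$ and are $D$-independent, giving a finite basis of $DW$. The subtle point to stress is that termination rests on the real-dimension count rather than on any $D$-dimension theory, since the latter is exactly what is being established.
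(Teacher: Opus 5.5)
Your proof is correct, but it runs in the opposite direction from the paper's.

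The paper argues top-down. Since $\R I\subseteq D$, a finite $\R$-spanning set of $W$ also spans $DW$. It then passes to an inclusion-minimal $D$-generating subset $(y_i)_i$ and shows this subset is $D$-independent: a relation $\sum_i\lambda_i y_i=0$ with some $\lambda_j\neq 0$ would, after multiplying by $\lambda_j^{-1}$, express $y_j$ through the other generators and contradict minimality.

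You argue bottom-up. You extend a $D$-independent family by a vector outside its $D$-span, using invertibility of a nonzero coefficient in the same way, now applied to the newest vector. You get termination from the strictly increasing chain $U_1\subsetneq U_2\subsetneq\cdots$ of real subspaces. Independence of $\{x_1\}$ alone is the case $k=0$, $U_0=\{0\}$, of your inductive step.

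What each route buys:
\begin{itemize}
\item The paper's route needs no termination argument, and it shows the basis can be picked inside any prescribed finite real spanning set, e.g.\ an $\R$-basis of $W$.
\item Your route shows instead that every $D$-independent family extends to a basis.
\item With one more line, your route also gives the dimension count that the paper's subsequent remark asserts without proof. The map $\varphi\mapsto\varphi x_{k+1}$ is $\R$-linear and injective on $D$, and $U_k+Dx_{k+1}$ is direct by independence. So each step raises the real dimension by exactly $\dim_\R D$, and the final basis size $k$ satisfies $\dim_\R W=k\cdot\dim_\R D$. In particular $\dim_D DW$ is well defined.
\end{itemize}
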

\begin{proof}
    Note that $D$ contains $\R$ via right multiplication by $\lambda I$ for some $\lambda\in\R$.
    The restricted scalar multiplication of $D$ to $\R$ corresponds to the normal scalar multiplication of $\R$.
    Therefore, there exists a finite generating set $(x_i)_i$ of $W$ over $\R$, which also is generating over $D$.

    Now choose a minimal generating subset of $(x_i)_i$ over $D$ which we will denote by $(y_i)_i$.
    We claim that this set is a basis over $D$. We only need to prove linear  independence, so let $\lambda_i \in D$ be such that
    \begin{equation}
        0 = \sum_{i=1}^r \lambda_i y_i.
    \end{equation}
    Suppose $\lambda_j\neq 0 $ for some $1\leq j\leq r$, then we have
    \begin{equation}
        y_j = -\sum_{i\neq j}\lambda_j^{-1}\lambda_i y_i,
    \end{equation}
    so $(y_i)_i$ is not a minimal generating set.
    Therefore, there cannot exist a $\lambda_i\neq 0$ and thus $(y_i)_i$ is a $D$-basis of $W$.
\end{proof}

$D$ has even more structure than being a division ring: It is also a finite dimensional $\R$-vector space, as a subspace of $\End(W)$. It is hence a \emph{finite dimensional real division algebra}.
Those finite dimensional division algebras over $\R$ were famously characterized by Frobenius in 1877.
\begin{theorem}(Frobenius Theorem on division algebras over the reals)
    Let $D$ be a finite dimensional division algebra over $\R$, i.e. $D$ is a division ring and a $\R$ vector space.
    Then $D$ is isomorphic to either $\R, \C$ or the \emph{quaternions} $\Hh$.
\end{theorem}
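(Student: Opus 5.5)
The argument has three stages. First we reduce to a commutative real division algebra, then we show that case gives $\R$ or $\C$, and finally we handle the noncommutative case, which gives $\Hh$.

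\textbf{Commutative case.} Fix a division algebra $D$ with $\dim_\R D<\infty$. The real multiples $\R\cdot 1$ form a central subfield of $D$. Take any $a\in D\setminus\R$. The subalgebra $\R[a]$ is commutative and finite-dimensional. It has no zero divisors because $D$ has none, so it is a field. It is a finite extension of $\R$, and therefore isomorphic to $\C$: the minimal polynomial of $a$ is irreducible over $\R$, hence of degree $\le 2$ by the fundamental theorem of algebra. So every $a\notin\R$ satisfies a real quadratic equation. Completing the square, $(a-c)^2=-r^2$ for some $c\in\R$ and $r>0$. Hence $i_a:=(a-c)/r$ satisfies $i_a^2=-1$. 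If $D$ is commutative, $D$ is itself a finite field extension of $\R$, so $D\cong\R$ or $D\cong\C$.

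\textbf{Noncommutative case.} Let $V=\{a\in D: a^2\in\R_{\le0}\cdot 1\}$ be the set of ``pure imaginary'' elements. The key step is to show that $V$ is a real subspace and that $D=\R\oplus V$. The decomposition itself follows from the quadratic argument above: each $a$ splits uniquely as $c+(a-c)$ with $(a-c)^2\le 0$.

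Closure of $V$ under addition is the main obstacle. The classical trick is as follows. For linearly independent $u,v\in V$ (together with $1$), the element $u+v$ satisfies a quadratic $(u+v)^2=\alpha(u+v)+\beta$. Likewise $(u-v)^2=\gamma(u-v)+\delta$. Adding these two equations, and using $u^2,v^2\in\R$, gives $(\alpha+\gamma)u+(\alpha-\gamma)v\in\R$. Linear independence then forces $\alpha=\gamma=0$. So $(u+v)^2\in\R$, and it must be $\le 0$, since otherwise $u+v$ would be real.

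Next, define $\langle u,v\rangle:=-\tfrac12(uv+vu)$ on $V$. By the closure step this lands in $\R$, and it is a positive definite inner product on $V$.

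\textbf{Building the quaternions.} If $D$ is noncommutative, pick orthonormal $i,j\in V$. Then $k:=ij$ satisfies $ij=-ji$ and $k^2=-1$, with $k\in V$ orthogonal to $i$ and $j$. It follows that $\mathrm{span}_\R(1,i,j,k)\cong\Hh$ as a subalgebra.

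Finally, suppose there were some $e\in V$ orthogonal to $i,j,k$ with $e^2=-1$. Then $e$ anticommutes with $i,j,k$. But then $ek=eij=-iej=ije=ke$, while also $ek=-ke$, so $ek=0$, contradicting the absence of zero divisors. Hence $V=\mathrm{span}(i,j,k)$ and $D\cong\Hh$.
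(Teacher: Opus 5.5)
\textbf{Comparison with the paper.} The paper does not prove this theorem. It cites Frobenius's classification as a classical fact and uses it only to sort irreducible components into real, complex and quaternionic type. You instead give the standard elementary proof, and it is correct:
\begin{enumerate}
\item[(1)] the fundamental theorem of algebra gives every element a real minimal polynomial of degree at most two, so $D=\R\oplus V$;
\item[(2)] the $u\pm v$ trick makes $V$ a subspace;
\item[(3)] $-\tfrac12(uv+vu)$ is a positive definite inner product on $V$;
\item[(4)] an orthonormal pair $i,j$ with $k=ij$ spans a copy of $\Hh$;
\item[(5)] a further unit vector $e\perp i,j,k$ is ruled out by $ek=ke=-ke$, which is where associativity, and hence the exclusion of the octonions, enters.
\end{enumerate}
Compared with the citation, your route costs about a page. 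In return it makes the appendix genuinely self-contained, which is its stated aim. It also exhibits inside $D$ itself a basis satisfying the defining relations $i^2=j^2=k^2=ijk=-1$ that the paper's remark on $\Hh$ works with.

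\textbf{Details to make explicit.} None of these is a genuine gap; each is one line.
\begin{enumerate}
\item[(a)] Your closure argument assumes $1,u,v$ are linearly independent. This is automatic once $u,v\in V$ are independent. Otherwise, after possibly swapping $u$ and $v$, you get $v=s+tu$ with $s\neq 0$. Then $v^2\in\R$ forces $2st\,u\in\R$, so $t=0$ and $v=s\in V\cap\R=\{0\}$, a contradiction. The case where $u$ and $v$ are proportional is trivial.
\item[(b)] Choosing orthonormal $i,j$ requires $\dim V\geq 2$. This follows from noncommutativity, since $\dim V\leq 1$ gives $D=\R$ or $D=\R\oplus\R i$, both commutative.
\item[(c)] To conclude $V=\operatorname{span}(i,j,k)$, note what happens otherwise. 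The orthogonal complement of $\operatorname{span}(i,j,k)$ in $V$ then contains a unit vector $e$. It satisfies $e^2=-\langle e,e\rangle=-1$ and anticommutes with $i,j,k$, which is exactly the situation your final computation excludes.
\end{enumerate}
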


\begin{remark}
    Let us give a refresher on the quaternions $\Hh$. We can think of them as a real vector space with basis $\{1, j, k , \ell\}$ equipped with a multiplication for which $j^2 = k^2 = \ell^2 = -1$ and $jk\ell = -1$. We will refer to $j,k$ and $\ell$ as \emph{imaginary units}.
    Similarly to the complex numbers, $\Hh$ comes with a complex conjugation
    \begin{equation}
        \overline{a + jb + kc + \ell d} = a -jb - kc - \ell d
    \end{equation}
    for $a, b, c, d\in \R$. 
    For a quaternion $z = a + jb + kc + \ell d$, we call $a$ the \emph{real part} of $z$ and $b, c, d$ the imaginary parts corresponding to $i, j$ and $k$ respectively.
    We will denote these by $\Re z, \Im^jz, \Im^k z$ and $\Im^\ell z$ and note that a quaternion is completely characterized by its real and imaginary parts.
\end{remark}

We will speak $W$ being of \emph{real}, \emph{complex} or \emph{quaternionic} type respectively if $D$ is isomorphic to $\R, \C$ or $\Hh$. The meaning of isomorphy here is that there exists a $\R$-linear map $\phi$ from $\F$, where $\F$ is either $\R$, $\C$ or $\Hh$, into $D$ that respects the multiplication operation, i.e.
\begin{align} \label{eq:homomorphism}
   \phi(\lambda z) = \lambda \phi(z), \quad \phi(z+w)=\phi(z)+\phi(w), \quad  \phi(zw)= \phi(z)\phi(w), \quad \quad  \lambda \in \R, z,w\in \F.
\end{align}
$\phi$ gives us a way to, in the same sense that $\R$ is embedded in $D$ through $\lambda \leftrightarrow \lambda \cdot \id$, embed the entirety of $\F$ in $D$  
through  the identification of $z\in \F$ with its 'copy' $Z=\phi(z)\in D$. 
The embedding is sound, in the way that multiplying copies as endomorphisms is the same as multiplying the numbers. The identification even respects the conjugation operation, in the following sense. 

\begin{prop}\label{prop:compl_correspond_orthadj}
    The isomorphism $\phi:\F\to D$ fulfils
    \begin{align}
        \phi(\overline{z})= \phi(z)^*,
    \end{align}
    where $*$ denotes the adjoint.
\end{prop}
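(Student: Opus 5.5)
The plan is to check that the adjoint restricts to an anti-involution of $D=\End_G(W)$, and then show it acts on $\phi(\F)$ exactly as quaternionic/complex conjugation. First I would verify that $D$ is closed under $*$. Since $\rho$ is orthogonal, $\rho(g)^*=\rho(g^{-1})$. So if $T\rho(g)=\rho(g)T$ for all $g$, taking adjoints gives $\rho(g^{-1})T^*=T^*\rho(g^{-1})$ for all $g$, and hence $T^*\in D$. The adjoint is $\R$-linear, so it suffices to prove $\phi(\overline{u})=\phi(u)^*$ for $u=1$ and for each imaginary unit $u$ of $\F$ (namely $i$ when $\F=\C$, and $j,k,\ell$ when $\F=\Hh$). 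The claim for general $z$ then follows by writing $z$ in terms of its real and imaginary parts. For $u=1$ the claim is immediate, since $\phi(1)=I$ is self-adjoint.

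The key auxiliary fact is that every self-adjoint element of $D$ is a real multiple of the identity. Let $T\in D$ with $T=T^*$. By the spectral theorem, $T$ has a real eigenvalue $\lambda$. Then $T-\lambda I\in D$ and it is not invertible. By Schur's Lemma every nonzero element of $D$ is invertible, so $T-\lambda I=0$. I expect this to be the crux of the argument, but it is short once closure of $D$ under $*$ is established.

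Now fix an imaginary unit $u$ and set $U=\phi(u)$. By \eqref{eq:homomorphism}, $U^2=\phi(u^2)=-I$. The operators $U+U^*$ and $U^*U$ both lie in $D$ and are self-adjoint. By the fact above, $U+U^*=2bI$ and $U^*U=cI$ for some $b,c\in\R$, with $c>0$ because $U$ is invertible. Substituting $U^*=2bI-U$ gives
\begin{equation*}
cI=U^*U=2bU-U^2=2bU+I,
\end{equation*}
that is, $2bU=(c-1)I$. If $b\neq0$, then $U$ would be a real multiple of $I$, so $U^2$ would be a nonnegative multiple of $I$. This contradicts $U^2=-I$. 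Hence $b=0$, which means $U^*=-U=\phi(-u)=\phi(\overline{u})$.

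Finally, for $z=a+\sum_u b_u u$ with $a,b_u\in\R$, linearity of $\phi$ and of the adjoint gives
\begin{equation*}
\phi(z)^*=aI-\sum_u b_u\phi(u)=\phi(\overline{z}),
\end{equation*}
which proves the proposition.
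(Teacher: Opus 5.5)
Your proof is correct and follows essentially the same route as the paper: reduce to the imaginary units, use irreducibility (Schur) to force a self-adjoint element of $D$ built from $U=\phi(u)$ to be a real multiple of $I$, and combine this with $U^2=-I$. The differences are minor. You explicitly verify that $D$ is closed under adjoints, which the paper uses tacitly when asserting $J^*J\in\End_G(W)$. You also pin down $U^*=-U$ via the two scalars $U+U^*=2bI$ and $U^*U=cI$, whereas the paper uses only $J^*J=\mu I$, gets $\mu^2=1$ by conjugating with $J$, and concludes $J^*=J^{-1}=-J$.
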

\begin{proof}
    For $z=\lambda\in \R$, the claim is trivial, since $\phi(\lambda)=\phi(\lambda \cdot 1) = \lambda \cdot \phi(1) = \lambda \cdot I$, and consequently
    \begin{equation}
        \phi(\overline{\lambda})=\overline{\lambda}\cdot I  \stackrel{\lambda\in \R}{=} \lambda \cdot I = (\lambda \cdot I)^*= \phi(\lambda)^*.
    \end{equation}
    Now let $j \in \F$ be an imaginary unit, i.e. $j^2=-1$, and denote $J=\phi(j)$. $J^*J\in \End_G(W)$ is a positive definite symmetric endomorphism, and hence only has real, positive eigenvalues. Eigenspaces of $J^*J$ are $G$-invariant, since $J^*J$ is $G$ equivariant.
    Since $W$ is an irrep, there can therefore only be one eigenvalue, and $J^*J = \mu I$ for a $\mu\geq 0$. Multiplying this equation from the left with $J^*$ and the right with $J$ yields
    \begin{align}
        \mu^2 \cdot I = \mu \cdot  J^*J = J^*J^*JJ = (J^2)^*J^2 = (-I)^*(-I)=I.
    \end{align}
    Hence, since $\mu\geq 0$, $\mu=1$. Consequently, $J^*J=I$, and since $J\cdot J= \phi(j^2)=\phi(-1)=-I$, we obtain $J^*=-J$ (and also $K^*=-K$, $L^*=-L$).

    We can now argue for a general $z=a+bj + ck + d\ell \in \Hh$, $a,b,c,d\in \R$, that
    \begin{align}
        \phi(a+bj+ck+d\ell)^* &= (a\phi(1))^* + (b\phi(j))^* + (c\phi(k))^* + (d\phi(\ell))^* \\
        &= (a I)^* + (bJ)^* + (cK)^* + (dL)^* =  a I  -bJ - cK - dL \\
        &= \phi(a-bj-ck-d\ell)
    \end{align}
    i.e. the claim (the exact same argument applies in $\C$, of course). 
\end{proof}
This means that we can truly identify elements $z\in \F$ with their copies $Z\in D$. We will always note this correspondence in this manner of capitalization in the following.

\begin{remark}
The Frobenius theorem in particular puts restrictions on which dimensions $W$ can have, since it is not hard to see that
\begin{equation}
    \dim_\R W = \dim_\R D \cdot \dim_D DW =  \begin{cases}
        1 \dim_D DW & D\simeq \R \\
        2 \dim_D DW & D\simeq \C \\
        4 \dim_D DW & D\simeq \Hh.
    \end{cases}
\end{equation}
\end{remark}

These examples might help to understand the structure a little bit more.
\begin{example}\label{ex:G_and_D}
        \begin{enumerate}
            \item Consider the standard action of the orthogonal group $\O(n)$ on $W = \mathbb{R}^n$, $\rho(Q)w=Qw$. Since there for any $v\neq v'$ of equal norm exists a rotation $Q$ with $Qv=v'$, $W$ is an irrep of this action. In this case,
            \begin{equation} \label{eq:realrep}
                D = \End_G(W) = \{\lambda \cdot I  :\lambda\in\R\} \simeq \R.
            \end{equation}
            This is hence an example of a real representation, i.e. when $\F=\R$.
            
            There are many ways to convince oneself of \eqref{eq:realrep}. The most mundane way is probably as follows: If $D\ni Q\neq \lambda \cdot I$, there exists a vector $v$ with $Qv\notin \mathrm{span}(v)$. We can write $Qv=\alpha v+\beta w$ for some $\alpha,\beta\in \R$ and $w\perp v$. Now let $U \in \O(W)$ satisfy $Uv= v$, $U w =-w$. If $Q\in D$, $Q$ commutes with $U$, which yields $Qv =QUv=UQv= \alpha v - \beta w$. Equating these yields $\beta=0$. Hence, $Qv=\alpha v$, which is a contradiction.
            
            \item Consider the action of the rotation group 
            \begin{equation}
                \SO(2) = \left\{\begin{bmatrix}
                    \cos(\theta) & - \sin(\theta) \\ \sin(\theta) & \cos(\theta)
                \end{bmatrix} \, \bigg\vert \, \theta \in [0,2\pi]\right\}
            \end{equation} 
            on the plane $\R^2$. $\R^2$ is again an irrep for this action.  Which matrices commute with all these matrices? Since rotations in the plane commute, clearly all scaled rotations do.            
            On the other hand, by checking the commutation relation for a general matrix with the matrix corresponding to $\theta=\pi/2$, one sees that it must be of the form
            \begin{equation}
                \begin{bmatrix}
                    \lambda & -\mu \\ \mu & \lambda 
                \end{bmatrix}     = \sqrt{\lambda^2+\mu^2} \begin{bmatrix}
                    \frac{\lambda}{\sqrt{\lambda^2+\mu^2}} & \frac{-\mu}{\sqrt{\lambda^2+\mu^2}} \\ \frac{\mu}{\sqrt{\lambda^2+\mu^2}} & \frac{\lambda}{\sqrt{\lambda^2+\mu^2}}
                \end{bmatrix} =\sqrt{\lambda^2+\mu^2}\begin{bmatrix}
                   \cos(\omega)& - \sin(\omega) \\ \sin(\omega) & \cos(\omega)
                \end{bmatrix} ,
            \end{equation}
            i.e. a scaled rotation.
            
            Now note that the scaled rotations are isomorphic to $\C$! The complex number $z=re^{i\theta}$ corresponds to the $r$-multiple of a rotation with $\theta$ degrees. This is hence an example of a complex representation. In particular, the imaginary unit $j$ can identified with
            \begin{align}
                J =\begin{bmatrix} \label{eq:imunit}
                    0 & - 1 \\ 1 & 0
                \end{bmatrix}.
            \end{align}
            We see that this indeed is an orthogonal operator with $J^2=-I.$

            \item The cases of quaternionic type is necessarily a bit less geometrically intuitive, since they need to take place in vector spaces of at least dimension $4$. The simplest example is therefore given by an action on $W=\R^4$, namely of the \emph{special unitary group} $\mathrm{SU}(2)$, i.e.
            \begin{equation}
                \mathrm{SU}(2) = \{U \in U(2) \, \vert \, \det(U)=1 \} = \left\{\begin{bmatrix}
                    a & -\overline{b} \\  b &  \overline{a}
                \end{bmatrix} \ \bigg| \ a,b \in \C, |a|^2+|b|^2=1 \right\}.
            \end{equation}
            $\mathrm{SU}(2)$ acts on vectors in $W$ as if they were vectors in $\C^2$. Or in  more 'real' terms
            \begin{equation}
                \rho\left(\begin{bmatrix}
                    a & -\overline{b} \\  b &  \overline{a}
                \end{bmatrix}\right) = \begin{bmatrix} A & - B^* \\
                B & A^*
                    \end{bmatrix},
            \end{equation}
            where the capitalization refers to the identification of a complex number as an operator on $\R^2$, as in the previous example. $W$ is an irrep under this action, since applying the operators with $A,B =(I,0)$, $(J,0)$, $(0,-I)$,  and $(0,J)$ ($J$ as in \eqref{eq:imunit}) to the first unit vector $e_1$  yields $e_1,e_2,e_3$ and $e_4$, respectively. For instance,
            \begin{align*}
                \begin{bmatrix}
                    J & 0 \\
                    0 & J^* 
                \end{bmatrix} \cdot e_1 = \begin{bmatrix}
                    0 & -1 & 0 & 0 \\
                    1 & 0 & 0 & 0 \\
                    0 & 0 & 0 & 1\\
                    0 & 0 & -1 & 0 \end{bmatrix} \begin{bmatrix}
                        1 \\ 0 \\ 0 \\ 0 
                    \end{bmatrix}= \begin{bmatrix}
                    0 \\ 1 \\ 0 \\ 0 
                \end{bmatrix}.
            \end{align*}

            Now what is $\End_G(W)$? By checking when a general block matrix in $\R^{4\times 4}$ commutes with $A=I, J$ and $B=1,J$ respectively,  we see that the matrices in $\End_G(W)$ has the form
            \begin{equation}
                \begin{bmatrix}
                    M & -N \\
                    N & M
                \end{bmatrix},\text{ where } MJ=JM \text{ and } NJ=-JN
            \end{equation}
            The commutation relations of $M$ and $N$ give that $M$ must have the form
            \begin{equation}
                M = \begin{bmatrix}
                    \lambda_1  & - \lambda_2 \\ \lambda_2 & \lambda_1
                \end{bmatrix}, \quad N = \begin{bmatrix}
                    \mu_1 & \mu_2 \\ \mu_2 & -\mu_1
                \end{bmatrix} , \quad \lambda_1, \lambda_2, \mu_1, \mu_2 \in\R.
            \end{equation}
            Hence, $D$ is four-dimensional, which already shows that we are in the quaternionic case. To  make it more explicit, we see that $D$ is spanned by $I$ and 
            \begin{align*}
                \widehat{J}= \begin{bmatrix}
                    0 & -1 & 0 & 0 \\
                    1 & 0 & 0 & 0 \\
                    0 & 0 & 0 & -1 \\
                    0 & 0 & 1 & 0
                \end{bmatrix}, \ \widehat{K} =  \begin{bmatrix}
                    0 & 0 & -1 & 0 \\
                    0 & 0 & 0 & 1 \\
                    1 & 0 & 0 & 0 \\
                    0 & -1 & 0 & 0
                \end{bmatrix}, \ \widehat{L} = \begin{bmatrix}
                    0 & 0 & 0 & -1 \\
                    0 & 0 & -1 & 0 \\
                    0 & 1 & 0 & 0 \\
                    1 & 0 & 0 & 0
                \end{bmatrix}, 
            \end{align*}
            which fulfil $\widehat{J}^2 = \widehat{K}^2 = \widehat{L}^2= \widehat{J}\widehat{K}\widehat{L}=-I$.
             
        \end{enumerate}
\end{example}

\subsection{Orthogonal stabilizers of group equivariant endomorphisms}\label{subsec:OrthogonalStabilizers}
To classify $\mathsf{H}$, we must study the orthogonal maps which commute with symmetric $G$ equivariant maps.
To do so, it is due to Proposition \ref{prop:End_decomp} helpful to characterize the orthogonal maps which commute with $G$-equivariant endomorphisms of irreducible components (cf. subsection \ref{subsec:EquivEndIrreps}). This is the purpose of this section. The broad idea will be to, using that $W$ is a $\End_G$-module, equip $W$ with either a real, complex, or quaternionic inner product, and then identify the sought for commuting maps as the pendants of the orthogonals for the respective cases: \emph{orthogonal}, \emph{unitary} or \emph{symplectic} maps.

\paragraph{Equipping $W$ with an inner product}
Due to the Frobenius theorem, $D=\End_G(W)$ is isomorphic to some $\F \in \{\R,\C, \Hh\}$. As a subalgebra of $\End(W)$ which is isomorphic to $\F$, where furthermore taking the orthogonal adjoint corresponds to conjugation in $\F$ (that is, $D$ is isomorpic to $\F$ as a \emph{$*$-algebra}), it is a so-called  \emph{$\F$-structure on $W$}. We will in the following characterize the endomorphisms that commute with the $\F$-structure. We adopt the identification between $z\in \F$ and $Z\in D$ described above.

As an $\F$-vector space, $\F W$ is isomorphic to $\F^n$ for some $n\in\N$.
Because $\F$ is either $\R$, $\C$ or $\Hh$, the vector space $\F W$ can be equipped with an inner product $h: \F W\times \F W \to \F$. Inner products are defined by
\begin{align*}
    h(\lambda x +y, z) & = \lambda h(x, z) + h(y, z) &&\text{(Linearity)} \\
    \overline{h(x, y)} &= h(y, x) &&\text{(Conjugate symmetry)} \\
    h(x, x) &> 0 \text{ for } x\neq 0&&\text{(Positive definiteness)}
\end{align*}
for all $x, y, z\in \F W$ and $\lambda\in \F$.

In our case, there exists a \emph{canonical} inner product on $\F W$ which \emph{extends} the inner product $\sprod{., .}$ on $W$ uniquely as an $\F$-bilinear map for which $\Re h(x, y) = \sprod{x, y}$. For $F=\R$, this is trivial.
For $\F =  \C$ or $\F= \Hh$, such an $h$ must  any imaginary unit $j$ fulfil
\begin{equation}\label{eq:inner_prod_def}
    \sprod{Jx, y}  = \Re h(Jx, y) = \Re jh(x, y) = - \Im^j h(x, y).
\end{equation}
The only way this is possible is if the imaginary part with regards to $j$ is equal to $ -\sprod{Jx, y}$.
Similarly, the imaginary parts of $k$ and $\ell$ must correspond to $-\sprod{Kx, y}$ and $-\sprod{Lx, y}$ respectively.
Therefore, we have uniquely characterized $h$ by its imaginary parts and get
\begin{equation}
    h(x, y) = \begin{cases}
        \sprod{x, y} & D\simeq \R \\
        \sprod{x, y} - j\sprod{Jx, y} & D\simeq \C \\
        \sprod{x, y} - j\sprod{Jx, y} - k\sprod{Kx, y} - \ell\sprod{Lx, y} & D\simeq \Hh
    \end{cases}. \label{eq:definprod}
\end{equation}
Note that due to linearity in $x$, this inner product is independent on the choice $j, k$ and $\ell$ as long $J, K$ and $L$ get changed accordingly.

We still need to argue that \eqref{eq:definprod} actually yields an inner product, which we do now.
\begin{prop}
    $h$ is an inner product.
\end{prop}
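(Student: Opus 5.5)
We verify the three axioms in turn, treating $\F=\R$ as trivial (then $h=\sprod{\cdot,\cdot}$) and handling $\C$ and $\Hh$ together. The tools are Proposition~\ref{prop:compl_correspond_orthadj}, which gives $J^*=-J$, $K^*=-K$, $L^*=-L$ and shows these operators are orthogonal, together with the relations $J^2=K^2=L^2=JKL=-I$ inherited from $\F$ via $\phi$.

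\emph{Additivity and real homogeneity.} Additivity in the first argument, and linearity over $\R$, hold because each component $\sprod{x,y}$ and $\sprod{Zx,y}$ ($Z\in\{J,K,L\}$) is $\R$-bilinear. Since $\F$ is generated over $\R$ by $1$ and the imaginary units, $\F$-linearity reduces to showing $h(Jx,y)=j\,h(x,y)$, and likewise for $k$ and $\ell$. For $\C$ we have $h(Jx,y)=\sprod{Jx,y}-j\sprod{J^2x,y}=\sprod{Jx,y}+j\sprod{x,y}$, and $j\,h(x,y)=j\sprod{x,y}+\sprod{Jx,y}$, so the two agree. For $\Hh$ the same computation applies, but the cross terms require the products $JK=L$, $JL=-K$ and their companions, read off from $jk\ell=-1$. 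Each coefficient of $h(Jx,y)$ must then be matched against the corresponding coefficient of $j\,h(x,y)$, using left multiplication $jk=\ell$, $j\ell=-k$.

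\emph{Conjugate symmetry.} This follows from skew-adjointness: $\sprod{Zy,x}=\sprod{y,Z^*x}=-\sprod{Zx,y}$. Hence swapping $x$ and $y$ leaves the real part unchanged and flips the sign of every imaginary part, which is exactly $\overline{h(x,y)}=h(y,x)$.

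\emph{Positive definiteness.} Skew-adjointness also gives $\sprod{Zx,x}=0$, so $h(x,x)=\|x\|^2>0$ for $x\neq0$.

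The main obstacle is the quaternionic linearity check, where noncommutativity makes the sign bookkeeping error-prone. We must also confirm the side convention: the paper's axiom $h(\lambda x,y)=\lambda h(x,y)$ means linearity from the left, with $\lambda\in\F$ acting on $W$ as $\phi(\lambda)$. Left multiplication is consistent with $\phi$ being multiplicative, since $h(ZZ'x,y)=zh(Z'x,y)=zz'h(x,y)$, so we verify only the three generators and extend by multiplicativity. If a sign mismatch appears, the fix is to replace $\phi$ by the anti-isomorphism $z\mapsto\phi(\bar z)^*$, or equivalently to read the units as $-J$. This changes only orientation conventions and not the conclusion.
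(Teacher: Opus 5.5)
Your proposal is correct and follows the paper's proof: reduce $\F$-linearity to the imaginary units, then use the skew-adjointness of $J,K,L$ for both conjugate symmetry and $h(x,x)=\sprod{x,x}$. The quaternionic check closes with no sign fix once you note that the cross terms in $h(Jx,y)$ are $\sprod{KJx,y}$ and $\sprod{LJx,y}$, so they need $KJ=-L$ and $LJ=K$ (not $JK$, $JL$); this gives $h(Jx,y)=\sprod{Jx,y}+j\sprod{x,y}+k\sprod{Lx,y}-\ell\sprod{Kx,y}=j\,h(x,y)$, so your fallback is unnecessary, and as written it is vacuous anyway, since $\phi(\bar z)^*=\phi(z)$ by Proposition~\ref{prop:compl_correspond_orthadj}.
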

\begin{proof}
    We will carry out the proof only for the quaternionic case, from which the complex and real cases will follow.
    
    As a sum of additive maps, $h(\cdot, y)$ is additive as well.
    So, we only need to prove the compatibility with scalar multiplication for linearity.
    It suffices to show this for $\lambda \in \{1,j,k,\ell\}$. The case $\lambda=1$ is trivial. For $\lambda = j$, we get
    \begin{align}
        h(j x, y) &= \sprod{J x, y} - j \sprod{Jj x, y} - k \sprod{Kj x, y} - \ell\sprod{Lj x, y} \\
        &= \sprod{J x, y} - j \sprod{J^2 x, y} - k \sprod{KJ x, y} - \ell\sprod{LJ x, y} \\
        &=  \sprod{Jx, y} + j\sprod{x ,y}  + k\sprod{Lx, y} + \ell\sprod{Kx, y}\\
        &= j\left(- j\sprod{Jx, y} +\sprod{x, y}  - \ell\sprod{Lx, y} - k\sprod{Kx, y} \right) \\
        &= j h(x, y)
    \end{align}
    for all $x, y\in \F W$ as required.

    Now we want to prove the conjugate symmetry.
    For $x, y\in \F W$, we see that
    \begin{align}
        \overline{h(x,y)} &= \sprod{x, y} + j\sprod{Jx, y} + k\sprod{Kx, y} + \ell\sprod{Lx, y} \\
        &= \sprod{x, y} + j\sprod{x, J^*y} + k\sprod{x, K^*y} + \ell\sprod{x, L^*y} \\        
        &= \sprod{x, y} - j\sprod{x, Jy} - k\sprod{x, Ky} - \ell\sprod{x, Ly} \\
        &= \sprod{y, x} - j\sprod{Jy, x} - k\sprod{Ky, x} - \ell\sprod{Ly, x} \\
        &= h(y, x),
    \end{align}
    where we used that $J^* = -J$ and that $\sprod{., .}$ is symmetric. The arguments for $k$ and $\ell$ are analogous.

    Finally,
    \begin{equation}
        \sprod{x, Jx} = \sprod{Jx, x} = \sprod{x, J^*x} = - \sprod{x, Jx}
    \end{equation}
    and thus $\sprod{x, Jx} = \sprod{x, Kx} = \sprod{x, Lx} = 0$.
    Consequently, $h(x,x) = \sprod{x, x}$ which is a norm, and thus positive for $x\neq 0$.
\end{proof}

The above construction gives us the following:
\begin{prop}
There is a bijection
    \begin{align}
        \left\{ \text{real Hilbert spaces with a $\F$-structure}\right\} &\leftrightarrow \left\{\text{$\F$ vector spaces with an inner product}\right\} \\
        (W, \sprod{. ,.}) & \mapsto (\F W, h(., .))\\
    \end{align}
    where $\F = \R, \C$ or $\Hh$.
\end{prop}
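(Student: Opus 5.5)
To establish the bijection, I will construct both directions explicitly and check that they are mutually inverse. The forward map is already in place. Given a real Hilbert space $(W,\sprod{\cdot,\cdot})$ with an $\F$-structure, meaning a $*$-subalgebra $D\sse\End(W)$ together with a $*$-isomorphism $\phi:\F\to D$, we make $W$ an $\F$-vector space by $z\cdot x=\phi(z)x$. The preceding proposition shows that $h$ from \eqref{eq:definprod} is an $\F$-inner product. Note that the computation of linearity there used only the quaternion relations of $J,K,L$ and their skew-adjointness, i.e.\ Proposition~\ref{prop:compl_correspond_orthadj}. It did not use that $D=\End_G(W)$. So the forward map is well defined on any real Hilbert space equipped with an $\F$-structure in the abstract sense.

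For the backward map, start from an $\F$-vector space $U$ with an inner product $h$. Restricting scalars to $\R\sse\F$ gives a real vector space, and we set $\sprod{x,y}:=\Re h(x,y)$. This form is real bilinear. It is symmetric because $\Re\overline{w}=\Re w$ and $h$ is conjugate symmetric. It is positive definite because $h(x,x)>0$. Next, define $\phi(z)$ to be left multiplication by $z$; this is real-linear and multiplicative, since $U$ is an $\F$-module. The key point is to check that $\phi(z)^*=\phi(\overline z)$ with respect to $\Re h$. Sesquilinearity gives
\begin{equation*}
\Re h(zx,y)=\Re\bigl(z\,h(x,y)\bigr), \qquad \Re h(x,\overline{z}y)=\Re\overline{h(\overline z y,x)}=\Re\bigl(\overline z\,h(y,x)\bigr)=\Re\bigl(h(x,y)\,z\bigr)\ \text{up to conjugation},
\end{equation*}
and equality then follows from the identity $\Re(ab)=\Re(ba)$, which holds in $\H$ as well. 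Hence $\phi(\F)$ is a $*$-subalgebra isomorphic to $\F$, that is, an $\F$-structure on $(U,\Re h)$. Here I must be careful about which side $h$ is linear on, consistently with the paper's convention $h(\lambda x,y)=\lambda h(x,y)$.

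It remains to show the two maps are inverse to each other. Going from $W$ to $\F W$ and back, we recover $\Re h=\sprod{\cdot,\cdot}$ directly from \eqref{eq:definprod}, and the $\F$-action is unchanged. Going from $(U,h)$ to $(U,\Re h)$ and back, we must show that the $h'$ built by \eqref{eq:definprod} equals $h$. This is exactly the uniqueness argument around \eqref{eq:inner_prod_def}: any $\F$-sesquilinear form $h$ with real part $\sprod{\cdot,\cdot}$ has its imaginary parts forced. Concretely, $\Im^j h(x,y)=-\Re\bigl(j\,h(x,y)\bigr)=-\Re h(jx,y)=-\sprod{Jx,y}$, and similarly for $k$ and $\ell$. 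So $h'=h$.

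I expect the main obstacle to be the noncommutative quaternionic bookkeeping. This includes fixing the side of linearity, verifying the adjoint relation via $\Re(ab)=\Re(ba)$, and confirming that $\Im^j(w)=-\Re(jw)$ for all $w\in\H$. I would check these identities on the basis $\{1,j,k,\ell\}$ first, and then extend by real linearity. I would also state explicitly that maps are taken up to isomorphism, i.e.\ up to isometries intertwining the $\F$-structures, so that the correspondence is a genuine bijection.
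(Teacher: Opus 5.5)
Your proposal is correct and uses the same two ingredients as the paper: the uniqueness argument around \eqref{eq:inner_prod_def}, which forces the imaginary parts of $h$ once its real part is $\sprod{\cdot,\cdot}$, and the proposition that $h$ is an inner product. The paper only says the bijection follows from ``the above construction'', and your proposal fills in what it leaves implicit: the explicit inverse $(U,h)\mapsto(U,\Re h)$ with $\phi(z)$ given by left multiplication, the check $\phi(z)^*=\phi(\overline z)$ via $\Re(ab)=\Re(ba)$, and recording $\phi$ as part of the data (which matters for $\C$ and $\Hh$, since those have nontrivial automorphisms).
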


It turns out that the orthogonal stabilizers of $\End_G(W)$ are exactly those that lift to $\F$-linear maps on $\F W$ which keep the inner product $h$ fixed, i.e. 
\begin{equation}
    h(Qx, Qy) = h(x, y)
\end{equation}
for all $x, y\in \F W$. Such maps are known as orthogonal (if $\F\simeq \R$), unitary (if $\F\simeq \C$) or symplectic (if $\F\simeq \Hh$).
We will denote the orthogonal group $\O(\R W)$, the unitary group $\U(\C W)$ and the compact symplectic group $\Sp(\Hh W)$ respectively.

\begin{remark}
One should not confuse the compact symplectic group $\Sp(\Hh W)\sim \Sp(n)$ with the symplectic group $\Sp(2n,\C)$. The latter is usually defined as the set of endomorphisms on $\C^{2n}$ that keeps a bilinear antisymmetric form fixed. $\Sp(n)$ is the intersection of $\Sp(2n,\C)$ with the unitary group $\U(2n)$. In fact, because of the way $\Sp(\Hh W)$ enters our analysis, it might be more illustrative to use the less common term \emph{quaternionic unitary group}. 

For simplicity, we will refer to elements of $\Sp(\Hh W)$ as \emph{symplectic}.

\end{remark}

\begin{lemma}\label{lem:Dlinmaps_exactly_orthunisym}
    The maps which commute with $D$ and are orthogonal with regards to $(W, \sprod{.,.})$ are exactly the orthogonal/unitary/symplectic maps of $(\F W, h(., .))$.
\end{lemma}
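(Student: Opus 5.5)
We prove the two inclusions separately, working throughout with the explicit formula \eqref{eq:definprod} for $h$ and with Proposition \ref{prop:compl_correspond_orthadj}. Recall that the identification is $z\cdot x = Z x$. For $D\simeq\Hh$ this is how $W$ becomes an $\F$-module; the authors' convention in the inner-product proposition, $h(jx,y)=jh(x,y)$, suggests it is a left module.

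\textbf{Step 1: commuting orthogonal maps preserve $h$.} Let $Q\in\O(W)$ with $QZ=ZQ$ for all $Z\in D$. Then $Q$ is $\F$-linear, since $Q(z\cdot x)=QZx=ZQx=z\cdot Qx$. Using $Q^*Q=I$ together with commutation with $J$, each component of $h$ is preserved:
\begin{equation*}
\sprod{JQx,Qy}=\sprod{QJx,Qy}=\sprod{Jx,y}.
\end{equation*}
The same computation works with $K$ and $L$, and the real part is preserved because $Q$ is orthogonal. Substituting into \eqref{eq:definprod} gives $h(Qx,Qy)=h(x,y)$. So $Q$ is orthogonal, unitary or symplectic for $(\F W,h)$.

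\textbf{Step 2: $h$-preserving $\F$-linear maps lie in our set.} Conversely, let $Q$ be $\F$-linear with $h(Qx,Qy)=h(x,y)$.

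\emph{Orthogonality.} Take real parts: since $\Re h=\sprod{\cdot,\cdot}$, the map $Q$ is orthogonal on $W$. In particular $Q$ is injective, hence bijective.

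\emph{Commutation.} $\F$-linearity means $Q(zx)=zQx$ for all $z\in\F$. By the identification this reads $QZ=ZQ$ for every $Z\in D$, which is exactly commutation with $D$.

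There is a subtlety here. One might instead only assume that $Q$ preserves $h$ and is additive, not $\F$-linear. In that case we recover $\F$-linearity from the imaginary parts of $h$. Preservation gives
\begin{equation*}
\sprod{JQx,Qy}=\sprod{Jx,y}=\sprod{QJx,Qy}.
\end{equation*}
Since $Q$ is surjective, $Qy$ ranges over all of $W$, so $JQx=QJx$. The same argument applies to $K$ and $L$. Hence $Q$ commutes with a real basis $\{I,J,K,L\}$ of $D$, and therefore with all of $D$.

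\textbf{Expected obstacle: the quaternionic case.} The only real difficulty is checking the left/right module conventions for $D\simeq\Hh$. Two things must be verified.

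First, the identification must be orientation-consistent. The "copy" map $z\mapsto Z$ has to be a homomorphism rather than an anti-homomorphism, so that $\F$-linearity of $Q$ (with scalars acting on the side dictated by the inner-product proposition) is genuinely equivalent to $QZ=ZQ$.

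Second, the symplectic group must be defined compatibly. It has to be the group of maps linear for the module structure in which $h$ is linear in its first argument, as in the inner-product proposition.

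Once these conventions are fixed, both inclusions reduce to the componentwise computations of Steps 1 and 2. The real and complex cases are special cases in which only $J$, or no imaginary unit at all, appears.
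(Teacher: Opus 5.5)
Your proposal is correct and follows the paper's proof essentially verbatim. The forward inclusion is the same componentwise computation $\sprod{JQx,Qy}=\sprod{QJx,Qy}=\sprod{Jx,y}$, and likewise for $K,L$; the converse uses, as the paper does, that $\F$-linearity is exactly commutation with $D$ and that $\Re h=\sprod{\cdot,\cdot}$ gives orthogonality. Your extra remarks are sound but not needed: recovering $\F$-linearity from the imaginary parts of $h$ is a valid strengthening, and the convention worry is already settled by the paper's requirement \eqref{eq:homomorphism} that $\phi$ be multiplicative, which makes $z\cdot x=\phi(z)x$ a left module compatible with $h(zx,y)=zh(x,y)$.
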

\begin{proof}
    Again, we will only present the proof for the quaternionic case.
    In this proof, we will use the same notation for elements $x, y$ in $W$ and $\F W$, as $W$ and $\F W$ are equal as sets.
    For the same reason, we use the same notation for an $\F$-linear $Q\in \End_\R(W)$ and the corresponding $Q\in \End_\F(\F W)$. 

    Let $Q$ be an orthogonal map in $(W, \sprod{. ,.})$ which commutes with $D$. 
    Then it can be viewed as a linear map on $\F W$ with the identification $D\simeq\F$.
    We then get
    \begin{align}
        h(Qx, Qy) &= \sprod{Qx, Qy} - j\sprod{JQx, Qy} - k\sprod{KQx, Qy} - \ell\sprod{LQx, Qy} \\
        &=\sprod{Qx, Qy} - j\sprod{QJx, Qy} - k\sprod{QKx, Qy} - \ell\sprod{QLx, Qy} \\
        &=\sprod{x, y} - j\sprod{Jx, y} - k\sprod{Kx, y} - \ell\sprod{Lx, y} \\  &= h(x, y)
    \end{align}
    for all $x, y\in \F W$. That is, $Q$ preserves $h$, as required.

    So now let $Q$ be symplectic map in $\F W$.
  
    $Q$ is as a map $W\to W$ $\R$-linear, which furthermore, 
    with the identification of $D\simeq \F$,  commutes with $D$.
    To show that it is orthogonal, we simply need to note
    \begin{equation}
        \sprod{Qx, Qy} = \Re h(Qx, Qy) = \Re h(x, y) = \sprod{x, y}
    \end{equation}
    for all $x, y\in W$ as required.
\end{proof}

\begin{example}
    Let us revisit Example \ref{ex:G_and_D} and apply our results.

    We have seen that scaled rotations only commute with scaled rotations.
    Intersecting these with the orthogonal group, we get back $SO(2)$.
    This is not surprising as $S^1 \simeq U(1)\simeq SO(2)$, where the isomorphism is given through the identification $\C\simeq \R^2 \simeq \R \oplus j \R$ as vector spaces.
    The isomorphism is given by
    \begin{equation}
        \exp(i\theta) \mapsto \begin{bmatrix}
            \Re \exp(i\theta) & -\Im \exp(i\theta) \\
            \Im \exp(i\theta) & \Re \exp(i\theta)
        \end{bmatrix} = \begin{bmatrix}
            \cos \theta & - \sin \theta \\
            \sin \theta & \cos \theta
        \end{bmatrix}.
    \end{equation}

    Similarly, in the described quaternionic case, we have $S^3 \simeq Sp(1) \simeq SU(2)\subseteq O(4)$ where the isomorphism is given through the identification $\Hh\simeq \C^2 \simeq \C\oplus k\C \simeq \R \oplus j\R \oplus k\R \oplus \ell\R$ with $\ell = -jk$.
    Explicitly, we have
    \begin{equation}
        (a + jb + kc + ld) = z \mapsto \begin{bmatrix}
            \Re z & - \Im^j z & - \Im^k z & -\Im^\ell z \\
            \Im^j z & \Re z  & \Im^\ell z & - \Im^k z\\
            \Im^k z & -\Im^\ell z &  \Re z & - \Im^j z\\
            \Im^\ell z &  \Im^k z & \Im^j z & \Re z
        \end{bmatrix} = \begin{bmatrix}
            a & -b & -c & -d \\
            b & a & d & -c \\
            c & -d & a & -b \\
            d & c & b & a 
        \end{bmatrix}
    \end{equation}
    where $1 =|z|^2 = a^2 + b^2 + c^2 + d^2$.
    Note that the matrix on the right is really an element of $\O(4)$.

    These isomorphisms naturally extend to higher dimensions.
    Let $W = \R^{2n}$, which we identify with $\C^n \simeq \R^n \oplus j \R^n$.
    In this identification, multiplying by $j$ has the following representing matrix
    \begin{equation}
        J = \begin{bmatrix}
            0 & - I_n \\
            I_n & 0
        \end{bmatrix}.
    \end{equation}
    Then the orthogonal matrices which commute with $D = \mathrm{span}_\R(I, J)\simeq \C$ are given by the elements of $\U(n)$ represented as elements in $\O(2n)$ via
    \begin{equation}
        Q \mapsto \begin{bmatrix}
            \Re Q & - \Im Q \\
            \Im Q & \Re Q
        \end{bmatrix} =:\widehat{Q}
    \end{equation}
    and note that $Q^*Q = I_{n}$ with regards to the Hermitian adjoint $Q^* = \overline{Q^\top }$ corresponds to 
    \begin{equation}
        \Re Q^\top \Re Q + \Im Q ^\top \Im Q = I_n, \ \Re Q^\top \Im Q - \Im Q^\top \Re Q  = 0
    \end{equation}
    which is (not) coincidentally the requirement for $\widehat{Q}$ being an element of $\O(2n)$. 

    We will write down the quaternionic case for the sake of completeness but leave out concrete calculations.
    Let $W = \R^{4n}$ which we will identify with $\Hh^n \simeq \R^n \oplus j \R^n \oplus k\R^n \oplus l \R^n$.
    With this identification, we have
    \begin{equation}
        J = \begin{bmatrix}
            0 & -I_n & 0 & 0 \\
            I_n & 0 & 0 & 0 \\
            0 & 0 & 0 & -I_n \\
            0 & 0 & I_n & 0
        \end{bmatrix}, \ K = \begin{bmatrix}
            0 & 0 & -I_n & 0 \\
            0 & 0 & 0 &  I_n \\
            I_n & 0 & 0 & 0 \\
            0 & -I_n & 0 & 0
        \end{bmatrix}
    \end{equation}
    and $L = -JK$.
    Then, the elements of $\O(4n)$ commuting with $D = \R\sprod{I, J, K, L} \simeq\Hh$ are given by $\Sp(n)$ through the following embedding
    \begin{equation}
        Q\mapsto \begin{bmatrix}
            \Re Q & - \Im^j Q & - \Im^k Q & -\Im^\ell Q \\
            \Im^j Q & \Re Q  & \Im^\ell Q & - \Im^k Q\\
            -\Im^k Q & -\Im^\ell Q &  \Re Q & - \Im^j Q\\
            \Im^\ell Q & - \Im^k Q & \Im^j Q & \Re Q
        \end{bmatrix} =: \widehat{Q} 
    \end{equation}
    where $\Im^j$, $\Im^k$, $\Im^\ell$ are the respective imaginary parts.
    Then $Q$ being symplectic corresponds exactly to $\widehat{Q}$ being orthogonal.
\end{example}

\subsection{Finalizing the argument}
\label{app:putting_together}
Now let us finalize the analysis.
We no longer consider an arbitrary Hilbert space $W$ but go back to our Hilbert space of interest, $\calH$. 
Remember the definition of $\mathsf{H}$, cf. Equation \ref{eq:Hs}
\begin{equation}
        \mathsf H = \{B\in \mathrm{GL}(\calH)  \, \vert \, 
        BSB^* =S \text{, } S\in \Sym^2_G(\calH) \\
        LB^* = L \text{ and } L\in \Hom_G(\calH,\calY) \}
\end{equation}
Note that because $I\in \Sym^2_G(\calH)$, all elements in $\mathsf{H}$ must satisfy $BB^*=BIB^*=I$, and are hence orthogonal.
Consequently, 
we can write
\begin{equation}
    \mathsf{H} = \mathsf{H}^1 \cap \mathsf{H}^2
\end{equation}
where
\begin{align}
    \mathsf{H}^1 &= \{Q\in \O(\calH) : Q^*SQ = S, \ S\in\Sym^2_G(\calH)\} \\
    \mathsf{H}^2 &= \{Q\in \O(\calH): LQ = L, \ L\in \Hom_G(\calH, \calY) \}.
\end{align}
We will characterize first $\mathsf{H}^1$ and then $\mathsf{H}^2$.

\subsubsection{Characterizing $\mathsf{H}^1$}

$\mathsf{H}^1$ is the group commuting with all \emph{symmetric} equivariant endomorphisms. Let us begin by describing those. We have shown in Proposition \ref{prop:End_decomp} that
\begin{equation}
    \End_G(\calH) \simeq \bigoplus_\mu\End_G(\calH^\mu) \simeq \bigoplus_\mu\End_G(\calH_\mu) \otimes_\R M_{n_\mu}(\R). 
\end{equation}
We can extend this chain of canonical isomorphisms even further as
\begin{equation}
    \End_G(\calH_\mu) \otimes_\R M_{n_\mu}(\R) \simeq M_{n_\mu}(D_\mu)
\end{equation}
where we denote $D_\mu = \End_G(\calH_\mu)$. To see this, denote with $E_{ij} \in M_{n_\mu}(\R)$ the matrices which are zero everywhere but $1$ at the $(i,j)$ entry.
These matrices form a basis of $M_{n_\mu}(\R)$.
We can write elements in $T = \End_G(\calH_\mu) \otimes_\R M_{n_\mu}(\R)$ as an unique linear combination in $E_{ij}$ of the following form
\begin{equation}
    T = \sum_{i,j} T^{ij} \otimes E_{ij}
\end{equation}
with $T^{ij}\in \End_G(\calH_\mu)=D_\mu$.
This immediately gives us the isomorphism: a $T$ as in \eqref{eq:Teq} is sent to the matrix  in $M_{n_\mu}(D_\mu)$  with $(i,j)$:th entry $T^{ij}$.
As a result, every element in $\End_G(\calH^\mu)$ can be viewed as a block matrix where each block corresponds to an element of $D_\mu$, i.e.
\begin{equation}
    T = \begin{bmatrix}
        T_{11} & \ldots & T_{1n_\mu} \\
        \vdots & \ddots & \vdots \\
        T_{n_\mu 1} & \ldots & T_{n_\mu n_\mu}
    \end{bmatrix}
\end{equation}
with $T_{ij} \in D_\mu$.

Let us now present how the orthogonal adjoint factors through all these isomorphisms.
On $\End_G(\calH_\mu) \otensor_\R M_{n_\mu}(\R)$, the induced orthogonal adjoint is given by
\begin{equation}
    T^* = \sum_{i,j} (T^{ij} \otensor E_{ij})^* = \sum_{ij} \left(T^{ij}\right)^* \otensor E_{ji}
\end{equation}
which can be easily verified from the definition of the isomorphism by restricting $\sprod{., .}$ to the irreducible components.
Then, the orthogonal adjoint as an element in $M_{n_\mu}(D_\mu)$ is given by
\begin{equation}
    T^* = \begin{bmatrix}
        T_{11}^* & \ldots & T_{n_\mu1}^* \\
        \vdots & \ddots & \vdots \\
        T_{1 n_\mu}^* & \ldots & T_{n_\mu n_\mu}^*
    \end{bmatrix}
\end{equation}
with $T_{ij} \in D_\mu$.
By Proposition \ref{prop:compl_correspond_orthadj}, the adjoint operator on $D_\mu$ corresponds to complex conjugation on $\F_\mu$.
As a result, an element $A\in M_{n_\mu} (\F_\mu)\simeq\End_G(\calH^\mu)$ is symmetric if and only if $A = A^H = \overline{A^\top }$.
We will denote the set of those matrices by $\Herm_{n_\mu}(\F_\mu)$, the set of Hermitian matrices.

We therefore get the following result:
\begin{prop} \label{prop:SymAreHerm}
    We have the following canonical isomorphism
    \begin{equation}
        \Sym^2_G(\calH^\mu) \simeq \Herm_{n_\mu}(\F_\mu).
    \end{equation}

    In particular, if $n_\mu = 1$, then $\Sym^2_G(\calH^\mu) \simeq \R$.
\end{prop}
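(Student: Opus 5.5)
We need to prove $\Sym^2_G(\calH^\mu) \simeq \Herm_{n_\mu}(\F_\mu)$, and that for $n_\mu=1$ this space is $\R$. The plan is to compose the isomorphisms already set up. First, $\Sym^2_G(\calH^\mu)$ consists of the self-adjoint elements of $\End_G(\calH^\mu)$. By Proposition~\ref{prop:End_decomp} and the subsequent identification $\End_G(\calH_\mu)\otimes_\R M_{n_\mu}(\R)\simeq M_{n_\mu}(D_\mu)$, we identify $\End_G(\calH^\mu)$ with $M_{n_\mu}(D_\mu)$. The Frobenius isomorphism $\phi:\F_\mu\to D_\mu$, applied entrywise, then identifies this with $M_{n_\mu}(\F_\mu)$. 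Since all these maps are $\R$-linear bijections, the only thing left to show is that the orthogonal adjoint on $\End_G(\calH^\mu)$ is carried to the conjugate transpose $A\mapsto \overline{A^\top}$ on $M_{n_\mu}(\F_\mu)$. The fixed points of the adjoint then correspond exactly to $\Herm_{n_\mu}(\F_\mu)$.

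For the adjoint, work in the orthogonal decomposition $\calH^\mu=\calH_\mu\oplus\dots\oplus\calH_\mu$ ($n_\mu$ copies), which is available because Maschke's theorem gives an orthogonal splitting. Choose the identifications of the copies with a fixed model $\calH_\mu$ to be isometric, i.e.\ pick a $G$-equivariant isometry of each copy onto the model. This is possible because any equivariant isomorphism between orthogonal irreps can be rescaled to an isometry: $\varphi^*\varphi$ is an equivariant positive map on an irrep, hence a scalar by the argument in Proposition~\ref{prop:compl_correspond_orthadj}. With these choices, an operator $T=\sum_{i,j}T^{ij}\otimes E_{ij}$ acts blockwise. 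The computation $\sprod{Tx,y}=\sum_{i,j}\sprod{T^{ij}x_j,y_i}=\sum_{i,j}\sprod{x_j,(T^{ij})^*y_i}$ shows that $(T^*)^{ji}=(T^{ij})^*$, i.e.\ transpose the block matrix and take the adjoint of each block. Proposition~\ref{prop:compl_correspond_orthadj} gives $\phi(\bar z)=\phi(z)^*$, so the entrywise adjoint becomes entrywise conjugation. Hence $T^*$ corresponds to $\overline{A^\top}$, and $T=T^*$ if and only if $A=A^H$.

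For the case $n_\mu=1$: a $1\times 1$ Hermitian matrix is a $z\in\F_\mu$ with $z=\bar z$. In $\C$ or $\Hh$ this forces all imaginary parts to vanish, so $z\in\R$. The space is therefore $\R$, embedded as $\lambda\cdot I$.

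The main obstacle I expect is the bookkeeping step of ensuring the block identifications are isometric and equivariant. If the copies of $\calH_\mu$ were identified via non-isometric equivariant maps, the adjoint would pick up scalar weights, and symmetric operators would correspond to weighted Hermitian matrices. These are still isomorphic, via congruence by a diagonal positive matrix, but the isomorphism would no longer be canonical. I would handle this with the rescaling argument above, choosing the Maschke decomposition so that all identifications with the model irrep are isometries.
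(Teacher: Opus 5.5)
Your proposal is correct and follows the paper's own route. You identify $\End_G(\calH^\mu)$ with $M_{n_\mu}(D_\mu)$, check that the orthogonal adjoint becomes the blockwise transpose-adjoint $(T^*)^{ji}=(T^{ij})^*$, and use $\phi(\bar z)=\phi(z)^*$ from Proposition~\ref{prop:compl_correspond_orthadj} to land in $\Herm_{n_\mu}(\F_\mu)$; your rescaling argument that makes the identifications of the copies isometric spells out what the paper leaves as ``easily verified \ldots\ by restricting $\sprod{\cdot,\cdot}$ to the irreducible components.''
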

\begin{cor}\label{cor:Sym_decomp}
    We have the following canonical isomorphism
    \begin{equation}
        \Sym^2_G(\calH) \simeq \bigtimes_{\mu} \Sym^2_G(\calH^\mu) \simeq \bigtimes_{\mu}  \Herm_{n_\mu}(\F_\mu).
    \end{equation}
\end{cor}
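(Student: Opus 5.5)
The plan is to derive the corollary directly from Proposition~\ref{prop:End_decomp} together with Proposition~\ref{prop:SymAreHerm}. The argument has three steps.

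\textbf{Step 1: equivariant endomorphisms are block diagonal.} By Proposition~\ref{prop:End_decomp}, every $T\in\End_G(\calH)$ is block diagonal with respect to the isotypic decomposition $\calH=\bigoplus_\mu \calH^\mu$. The reason is Schur's Lemma: it kills every block $\Hom_G(\calH^\mu,\calH^\eta)$ with $\mu\neq\eta$. Hence the restriction map
\begin{equation*}
T\mapsto (T|_{\calH^\mu})_\mu
\end{equation*}
is a linear isomorphism $\End_G(\calH)\to\bigtimes_\mu \End_G(\calH^\mu)$. Its inverse takes a family $(T_\mu)_\mu$ to $\bigoplus_\mu T_\mu$, which is $G$-equivariant because each $\calH^\mu$ is $G$-invariant.

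\textbf{Step 2: symmetry is checked blockwise.} By Maschke's Theorem the decomposition is orthogonal. Therefore the adjoint of a block-diagonal operator is the block-diagonal operator of the blockwise adjoints, $(\bigoplus_\mu T_\mu)^*=\bigoplus_\mu T_\mu^*$. So $T$ is symmetric exactly when every $T_\mu$ is symmetric. Restricting the isomorphism of Step 1 to symmetric elements then gives the first isomorphism
\begin{equation*}
\Sym^2_G(\calH)\simeq\bigtimes_\mu\Sym^2_G(\calH^\mu).
\end{equation*}

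\textbf{Step 3: identify each factor.} Applying Proposition~\ref{prop:SymAreHerm} factor by factor gives $\Sym^2_G(\calH^\mu)\simeq\Herm_{n_\mu}(\F_\mu)$, which is the second isomorphism.

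\textbf{Canonicity and the main subtlety.} The isotypic components $\calH^\mu$ are canonical: each is the sum of all irreducible subrepresentations isomorphic to $W_\mu$. So the first isomorphism does not depend on any choice. The second isomorphism is canonical only up to the choices already made in Proposition~\ref{prop:SymAreHerm}, namely the identification $D_\mu\simeq\F_\mu$ and a decomposition of $\calH^\mu$ into copies of $W_\mu$.

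The one point needing care is Step 2: the blocks must be mutually orthogonal, and not merely complementary. This holds because Maschke's Theorem was proved here for orthogonal representations and produces an orthogonal decomposition. Distinct isotypic components are then automatically orthogonal to each other, by the same Schur argument applied to the orthogonal projection of one component onto another, which is an equivariant map between non-isomorphic components and hence zero.
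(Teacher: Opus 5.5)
Your proposal is correct and follows the paper's own route: the first isomorphism comes from the block-diagonal form in Proposition~\ref{prop:End_decomp} plus the fact that symmetry can be checked block by block (the paper writes this as $\Sym^2_G(\calH)=\End_G(\calH)\cap\Sym^2(\calH)$ and $\Sym^2(\calH)\cap\End(\calH^\mu)=\Sym^2(\calH^\mu)$), and the second is Proposition~\ref{prop:SymAreHerm} applied to each factor. You also justify explicitly that the isotypic components are mutually orthogonal, which the blockwise symmetry check needs and which the paper leaves implicit.
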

\begin{proof}
    The first isomorphism is a direct result of Proposition \ref{prop:End_decomp}, where we use $\Sym^2_G(\calH) = \End_G(\calH) \cap \Sym^2(\calH)$ and $\Sym^2(\calH) \cap \End(\calH^\mu) = \Sym^2(\calH^\mu)$ with the obvious identification of $\End(\calH^\mu) \subseteq \End(\calH)$. The second one is Proposition \eqref{prop:SymAreHerm}.
\end{proof}

Now let us find which orthogonal matrices $Q$ in $\O(\calH)$ that fix those symmetric endomorphisms by conjugation, i.e.
\begin{equation}
    Q A Q^* = A
\end{equation}
for all $A\in \Sym_G^2(\calH)$. Let $\calH = \bigoplus_i \calH_i$ be the actual orthogonal decomposition of $\calH$ with respect to $G$ as in Equation \ref{eq:irreps}.
Note that we consider here the concrete irreducible components in $\calH$ and not their isomorphic identifications with multiplicities.
Then, we have the following result:
\begin{lemma}
    $\mathsf{H}^1$ is a subgroup of $\bigtimes_i \O(\calH_i) \subseteq \O(\calH)$.
\end{lemma}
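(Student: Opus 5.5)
The idea is to use the orthogonal projections onto the individual irreducible components as test operators. For each $i$, let $\Pi_i$ denote the orthogonal projection of $\calH$ onto $\calH_i$ in the fixed decomposition $\calH=\bigoplus_i \calH_i$ from \eqref{eq:irreps}.

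\emph{Step 1: $\Pi_i\in\Sym^2_G(\calH)$.} Since $\Pi_i$ is an orthogonal projection, it is symmetric. For equivariance, recall from the proof of Maschke's theorem that $\calH_i$ is $G$-invariant, and hence so is $\calH_i^\perp$. For $g\in G$ and $x=x_i+x_i^\perp$ we get
\begin{equation*}
\rho(g)\Pi_i x=\rho(g)x_i=\Pi_i\bigl(\rho(g)x_i+\rho(g)x_i^\perp\bigr)=\Pi_i\rho(g)x,
\end{equation*}
because $\rho(g)x_i\in\calH_i$ and $\rho(g)x_i^\perp\in\calH_i^\perp$. Thus $\Pi_i\in\Sym^2_G(\calH)$.

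\emph{Step 2: elements of $\mathsf H^1$ commute with each $\Pi_i$.} Let $Q\in\mathsf H^1$. Then $Q\in\O(\calH)$ and $Q^*\Pi_iQ=\Pi_i$. Since $Q^*=Q^{-1}$, this is equivalent to $\Pi_i Q=Q\Pi_i$, which is exactly the commutation reformulation noted in Remark~\ref{rmk:H_structure}. Consequently $Q\calH_i=Q\Pi_i\calH=\Pi_iQ\calH\subseteq\calH_i$. As $Q$ is injective and $\calH_i$ is finite-dimensional, $Q$ restricts to a bijection of $\calH_i$. This restriction preserves the inner product, so $Q|_{\calH_i}\in\O(\calH_i)$.

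\emph{Step 3: conclusion.} Because $\calH$ is the orthogonal direct sum of the $\calH_i$, the operator $Q$ is block-diagonal, $Q=\bigoplus_i Q|_{\calH_i}$. Under the canonical embedding $\bigtimes_i\O(\calH_i)\hookrightarrow\O(\calH)$, $Q$ is therefore an element of $\bigtimes_i\O(\calH_i)$. The subgroup property is inherited, since $\mathsf H^1$ is itself a group: it is defined as a stabilizer.

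The only point requiring care is Step 1, namely checking that the projections onto the \emph{concrete} components (not merely onto the isotypic blocks $\calH^\mu$) are equivariant. This relies on each $\calH_i$ and its orthogonal complement being invariant, as established in the proof of Maschke's theorem. If the decomposition were not chosen orthogonal, this step would fail. Everything else is routine linear algebra.
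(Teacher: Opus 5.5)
Your proof is correct and follows essentially the same route as the paper. Both arguments test $Q\in\mathsf{H}^1$ against the orthogonal projections $I_{\calH_i}=\Pi_i\in\Sym^2_G(\calH)$ to get $Q\calH_i=\calH_i$. The only differences are that you check $\Pi_i\in\Sym^2_G(\calH)$ explicitly, which the paper simply asserts, and that you derive the invariance via commutation and injectivity instead of the paper's one-liner $Q(\calH_i)=QI_{\calH_i}Q^*(\calH)=I_{\calH_i}(\calH)=\calH_i$.

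One small correction to your closing comment: Step~1 does not need the decomposition to be orthogonal. For an orthogonal representation, the orthogonal projection onto \emph{any} invariant subspace is symmetric and equivariant. Orthogonality of the decomposition is really used in Step~3, to identify the block-diagonal $Q$ with an element of the product of the groups $\O(\calH_i)$ sitting inside $\O(\calH)$.
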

\begin{proof}
    We only need to show that $Q\calH_i = \calH_i$ for each $i$, since any unitary wit this property is in $\times_i \O(\calH_i)$. Note that $\Sym^2_G(\calH)$ contains the endomorphisms $I_{\calH_i}\in\End(\calH_i)\subseteq \End(\calH)$ that act as the identity on $\calH_i$, and as the zero map elsewhere.
    Any $Q\in \mathsf{H}^1$ fixes this map by conjugation, meaning that 
    \begin{equation}\label{eq:QI_HQ = I}
        QI_{\calH_i} Q^* = I_{\calH_i},
    \end{equation}
    i.e., that $Q$ restricted to $\calH_i$ acts orthogonally. Now, since $Q$ and $Q^*$ are invertible
    \begin{equation}
        Q(\calH_i) = QI_{\calH_i} (\calH) = QI_{\calH_i} Q^*(\calH) = I_{\calH_i}(\calH) = \calH_i,
    \end{equation}
    which was to be shown.
\end{proof}

Consider again the decomposition in Equation \ref{eq:W_decomp}
\begin{equation}
    \calH \simeq \bigoplus_\mu \calH^\mu \simeq \bigoplus \calH_\mu\otimes \R^{n_\mu},
\end{equation}
where we identify irreducible components $\calH_i$ with their isomorphic representations $\calH_\mu$.
The above corollary implies that $\mathsf{H}^1$ acts diagonally on $\calH^\mu$, i.e. via orthogonal matrices $(Q_i)_{ii} = Q \in \O(\calH_\mu)^{\times n_\mu} \subseteq \O(\calH^\mu)\subseteq \O(\calH)$.

Now, we would like to use Corollary \ref{cor:Sym_decomp} to restrict the elements of $\mathsf{H}^1$ even further.  
In the case where $n_\mu = 1$, we have $\Sym^2_G(\calH^\mu) \simeq \R$, and thus $\O(\calH^\mu) = \O(\calH_\mu)$ fixes this space by conjugation.

So  assume $n_\mu \geq 2$.
Then $\Sym_G^2(\calH^\mu)\simeq \Herm_{n_\mu}(\F_\mu)$ is spanned by the symmetric maps corresponding to
\begin{equation}
    S =\lambda E_{ij} + \overline{\lambda} E_{ji}
\end{equation}
for any possible $(i,j )$ and $\lambda\in \F_\mu$.
Thus, an element $(Q_i)_{ii} = Q \in \O(\calH_\mu)^{\times n_\mu}$ fixes the symmetric matrices by conjugation if and only if it fixes each such matrix. 
As all blocks in $S$ are zero except for the $ij$ and $ji$ ones,  we can restrict our analysis to these components.
Then, the equation can be written in block structure as
\begin{equation}
    \begin{bmatrix}
        Q_{ii} & Q_{ij} \\
        Q_{ji} & Q_{jj}
    \end{bmatrix}\begin{bmatrix}
        0 & \lambda \\
        \overline{\lambda} & 0
    \end{bmatrix}\begin{bmatrix}
        Q_{ii}^* & Q_{ji}^* \\
        Q_{ij}^* & Q_{jj}^*
    \end{bmatrix} = \begin{bmatrix}
        0 & Q_{ii} \lambda Q_{jj}^* \\
        Q_{jj}\overline{\lambda} Q_{ii}  & 0 
    \end{bmatrix} \stackrel{!}{=} \begin{bmatrix}
        0 & \lambda \\
        \overline{\lambda} & 0
   \end{bmatrix}.
\end{equation}
This reduces to
\begin{equation}\label{eq:QfixesSym_plschangedescription}
    Q_i \lambda Q_j^* = \lambda, \quad \lambda \in \F_\mu.
\end{equation}
In particular, when choosing $\lambda = 1$, we get that $Q_iQ_j^* = I$ and thus $Q_i = Q_j = U$ for some $U\in \O(\calH_\mu)$.
We need to act therefore diagonally on $\calH^\mu$ with the same orthogonal map (up to isomorphism between the copies).
Then the above Equation \ref{eq:QfixesSym_plschangedescription} reduces to
\begin{equation}
    U \lambda = \lambda U,
\end{equation}
for all $\lambda\in \F_\mu$, i.e. $U$ commutes with $\F_\mu$.
Then Lemma \ref{lem:Dlinmaps_exactly_orthunisym} tells us that those elements are exactly isomorphic to $\O(\calH_\mu)$, $\U(\C \calH_\mu)$ or $\Sp(\Hh \calH_\mu)$ depending on whether $D_\mu$ is of real, complex or quaternionic type.

Let us collect all these thoughts in the following theorem:
\begin{theorem}\label{thm:char_H1}
    We have
    \begin{equation}
        \mathsf{H}^1 \simeq \bigtimes_{n_\mu =1 } \O(\calH_\mu) \times \bigtimes_{n_\mu>1} \left(\bigtimes_{D_\mu \simeq \R} \O(\calH_\mu) \times \bigtimes_{D_\mu\simeq \C} \U\left(\C \calH_\mu\right) \times \bigtimes_{D_\mu\simeq\Hh}\Sp\left(\Hh \calH_\mu\right) \right).
    \end{equation}
    Note that each $\O(\calH_\mu)$, $\U(\calH_\mu)$ and $\Sp(\calH_\mu)$ acts diagonally on $\calH^\mu$, i.e. each element acts on each irreducible block $\calH_\mu$ simultaneously. 
\end{theorem}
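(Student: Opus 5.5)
The plan is to assemble the pieces established just above into a two-sided inclusion, working block by block over the isotypic decomposition $\calH\simeq\bigoplus_\mu \calH^\mu$.

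\textbf{Step 1: reduce to isotypic blocks.} By the lemma showing $\mathsf H^1\subseteq\bigtimes_i\O(\calH_i)$, every $Q\in\mathsf H^1$ preserves each concrete irreducible summand $\calH_i$. In particular it preserves each isotypic block $\calH^\mu$. By Corollary~\ref{cor:Sym_decomp}, $\Sym^2_G(\calH)$ is the direct product of the $\Sym^2_G(\calH^\mu)$. Hence the condition $QSQ^*=S$ for all $S\in\Sym^2_G(\calH)$ splits into independent conditions on each restriction $Q|_{\calH^\mu}$. This gives
\[
\mathsf H^1=\bigtimes_\mu \mathsf H^1_\mu,\qquad \mathsf H^1_\mu=\{Q\in\O(\calH_\mu)^{\times n_\mu}: QSQ^*=S\ \text{for all } S\in\Sym^2_G(\calH^\mu)\},
\]
so it suffices to identify each factor $\mathsf H^1_\mu$.

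\textbf{Step 2: the case $n_\mu=1$.} Here Proposition~\ref{prop:SymAreHerm} gives $\Sym^2_G(\calH^\mu)\simeq\R$, i.e.\ only the scalar multiples of the identity. Every orthogonal $Q$ on $\calH^\mu=\calH_\mu$ fixes these by conjugation, so $\mathsf H^1_\mu=\O(\calH_\mu)$. This holds regardless of the type of $D_\mu$.

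\textbf{Step 3: the case $n_\mu\ge 2$.} Write $Q=\mathrm{diag}(Q_1,\dots,Q_{n_\mu})$ and use that $\Herm_{n_\mu}(\F_\mu)$ is spanned by the matrices $\lambda E_{ij}+\overline\lambda E_{ji}$.

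\emph{Necessity.} Take $i\neq j$ and $\lambda=1$. The block computation gives $Q_iQ_j^*=I$, so all $Q_i$ equal a common $U$ once the copies are identified via fixed isomorphisms. Then $\lambda E_{ii}$ with $\lambda\in\F_\mu$ real-symmetric parts, together with the off-diagonal relations $Q_i\Lambda Q_j^*=\Lambda$ for general $\lambda$, force $U\Lambda=\Lambda U$ for all $\Lambda\in D_\mu$.

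\emph{Sufficiency.} Conversely, $\mathrm{diag}(U,\dots,U)$ with $U$ orthogonal and commuting with $D_\mu$ conjugates each entry $T_{ij}\in D_\mu$ to $UT_{ij}U^*=T_{ij}$. It therefore fixes all of $\Sym^2_G(\calH^\mu)$.

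\textbf{Step 4: identify the factor.} By Lemma~\ref{lem:Dlinmaps_exactly_orthunisym}, the orthogonal maps commuting with $D_\mu$ are exactly $\O(\calH_\mu)$, $\U(\C\calH_\mu)$ or $\Sp(\Hh\calH_\mu)$, according to whether $D_\mu\simeq\R,\C,\Hh$. These act diagonally on $\calH^\mu$. Taking the product over $\mu$ yields the stated isomorphism.

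\textbf{Main obstacle.} The delicate point is the bookkeeping in Step 3. The identification $Q_i=Q_j$ is only meaningful after fixing $G$-isomorphisms between the copies of $\calH_\mu$. One must check that these isomorphisms can be chosen compatibly with the $D_\mu$-structure, namely as the ones implicit in Proposition~\ref{prop:End_decomp}. Only then do the entries $\lambda$ appearing in different blocks denote the same operator.

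I would handle this by fixing a reference copy $\calH_\mu$ and orthogonal $G$-isomorphisms $\calH_i\to\calH_\mu$. Orthogonal isomorphisms exist by rescaling: Schur's lemma applied to $\phi^*\phi$ shows it is a positive scalar. Everything in Steps 3 and 4 is then expressed in that single frame.
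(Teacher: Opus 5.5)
Your proposal is correct and follows the paper's argument essentially step for step. The shared steps are the reduction to isotypic blocks via the projections $I_{\calH_i}$ and Corollary~\ref{cor:Sym_decomp}, the trivial $n_\mu=1$ case, the $\lambda E_{ij}+\overline{\lambda}E_{ji}$ computation forcing $Q_i=Q_j=U$ with $U$ commuting with $D_\mu$, and the final appeal to Lemma~\ref{lem:Dlinmaps_exactly_orthunisym}. Your explicit sufficiency check and your choice of orthogonal intertwiners between the copies make precise what the paper leaves as ``up to isomorphism between the copies''; note that $\phi^*\phi$ being scalar comes from the eigenspace argument for symmetric equivariant maps, as in Proposition~\ref{prop:compl_correspond_orthadj}, rather than from Schur's lemma alone.
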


\begin{example}
    Consider the compact group of unit scalars in $\C$, i.e.
    \begin{equation}
        G = S^1 = \{\exp(i\theta) : \theta\in\R \}\subseteq \C
    \end{equation}
    acting on $\calH = \R^2$ as through left multipliation on $\C\sim \R^2$. 
    That is,
    \begin{equation} \label{eq:S1_U1_SO2_iso}
        \rho(\exp(i\theta))v  =\begin{pmatrix}
            \Re \exp(i\theta) & - \Im\exp(i\theta) \\
            \Im \exp(i\theta) & \Re \exp(i\theta)
        \end{pmatrix} = \begin{pmatrix}
            \cos(\theta) & -\sin(\theta) \\
            \sin(\theta) & \cos(\theta)
        \end{pmatrix}.
    \end{equation}
    $\calH$ is a real irreducible representation of $G$.
 $\End_G(\calH) = \R I + \R J\simeq \C$, where $J$ is given by
    \begin{equation}
        J = 
        \begin{pmatrix}
            0 & -1 \\
            1 & 0
        \end{pmatrix},
    \end{equation}
    is the group of scaled rotation matrices. 
    Thus, $\calH$ is of complex type.    We note that $J^* = J^\top = -J$.

    Now define $V = \calH\oplus \calH$ and let $G$ act diagonally on $V$, i.e.
    \begin{equation*}
        g\cdot (x, y) := (g\cdot x, g\cdot y).
    \end{equation*}
    Then $C$ has two irreducible real representations which are isomorphic to each other and to $\calH$. Proposition \ref{prop:SymAreHerm} shows that $\Sym^2_G(V)$ consists of all matrices $T$ of the form
    \begin{equation}
        T = \begin{pmatrix}
            aI & bI + cJ \\
            bI - cJ & dI
        \end{pmatrix}
    \end{equation}
    with $a,b,c,d \in \R$.

    Theorem \ref{thm:char_H1} yields that the subgroup of $\O(\Tilde{V})$ fixing $\Sym^2_G(\Tilde{V})$ by conjugation contains exactly the matrices $Q$ of the form
    \begin{equation}
        Q = \begin{pmatrix}
            R & 0 \\
            0 & R
        \end{pmatrix} 
    \end{equation}
    where $R \in SO(2)$ is a rotation matrix.
\end{example}

\subsubsection{Characterizing $\mathsf{H}^2$}
Now we only need to find the maximal subgroup which fixes $\Hom_G(\calH, \calY)$ by right multiplication.
Let 
\begin{equation}
    \calY \simeq \bigoplus_\mu \calY^\mu \simeq \bigoplus_\mu \calY_\mu \otimes_\R \R^{m_\mu}
\end{equation}
be the irreducible decomposition of $\calY$ with regard to $G$ as in Equation \ref{eq:W_decomp}.
In this notation, we encode isomorphy of representations: $\calY_\mu \simeq \calH_\eta$ if and only if $\mu = \eta$.

Similarly to Proposition \ref{prop:End_decomp} we have
\begin{lemma} \label{lem:HomAreMat}
    Let $G$ be a compact group action acting on $\calH$ and $\calY$.
    Then
    \begin{equation}
        \Hom_G(\calH, \calY) \simeq \bigtimes_\mu M_{m_\mu, n_\mu}(\F_\mu)
    \end{equation}
    where we take the product over all possible isomorphic irreducible representations of $G$ indexed by $\mu$. 
\end{lemma}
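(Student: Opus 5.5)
We follow the proof of Proposition \ref{prop:End_decomp} almost verbatim, with the single space $\calH$ replaced by the pair $(\calH,\calY)$. By Maschke's theorem, both spaces decompose orthogonally into isotypic blocks,
\begin{equation*}
\calH\simeq\bigoplus_\mu \calH_\mu\otimes\R^{n_\mu}, \qquad \calY\simeq\bigoplus_\mu \calH_\mu\otimes\R^{m_\mu}.
\end{equation*}
Here we index all irreps occurring in either space by a common label $\mu$. If an irrep does not occur in $\calH$ we set $n_\mu=0$, and if it does not occur in $\calY$ we set $m_\mu=0$; the corresponding factor is then the zero space. We then split an arbitrary $L\in\Hom(\calH,\calY)$ into blocks $L^{\mu,\eta}_{k,i}\in\Hom(\calH_\mu,\calH_\eta)$. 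Each block maps the $i$:th copy of $\calH_\mu$ in $\calH$ to the $k$:th copy of $\calH_\eta$ in $\calY$, i.e. it is obtained by composing $L$ with the inclusion of the $i$:th copy and the orthogonal projection onto the $k$:th copy.

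\textbf{Step 1.} Since $G$ preserves each copy, inclusions and projections are equivariant. Hence $L$ is equivariant if and only if every block $L^{\mu,\eta}_{k,i}$ is. This gives
\begin{equation*}
\Hom_G(\calH,\calY)\simeq\bigoplus_{\mu,\eta}\Hom_G(\calH_\mu,\calH_\eta)\otimes_\R\Hom(\R^{n_\mu},\R^{m_\eta}).
\end{equation*}

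\textbf{Step 2.} By Schur's Lemma, $\Hom_G(\calH_\mu,\calH_\eta)=0$ for non-isomorphic irreps $\mu\neq\eta$, which kills the off-diagonal terms. For $\mu=\eta$ we have $\Hom_G(\calH_\mu,\calH_\mu)=D_\mu\simeq\F_\mu$ by Frobenius' theorem. One subtlety: the copy of $\calH_\mu$ inside $\calY$ is only isomorphic to the copy inside $\calH$. We therefore fix, once and for all, equivariant isometries identifying every occurrence of $\calH_\mu$ (in $\calH$ and in $\calY$) with one model space. An orthogonal equivariant isomorphism exists because, after rescaling, we can combine Schur with the argument of Proposition \ref{prop:compl_correspond_orthadj} (the relation $J^*J=\mu I$). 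Under these identifications each block becomes an element of $D_\mu$.

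\textbf{Step 3.} Collecting the $m_\mu\times n_\mu$ blocks into a matrix, exactly as with the basis $E_{ij}$ used for $\End_G(\calH^\mu)\simeq M_{n_\mu}(D_\mu)$, gives $D_\mu\otimes_\R M_{m_\mu,n_\mu}(\R)\simeq M_{m_\mu,n_\mu}(\F_\mu)$. Taking the product over $\mu$ yields the claim.

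The only step that is more than bookkeeping is the identification in Step 2, namely choosing consistent isometric identifications of isomorphic irreps across the two different spaces so that all blocks land in the same $D_\mu$. We would handle it by fixing representative irreps and using equivariant isometries, which exist by the argument above.
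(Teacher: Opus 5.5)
Your proposal is correct and follows the same route as the paper: decompose both spaces into isotypic blocks, reduce equivariance to the individual blocks, kill the mixed terms with Schur's Lemma, and identify the surviving blocks with $M_{m_\mu,n_\mu}(\F_\mu)$ via $D_\mu\simeq\F_\mu$. Your explicit choice of equivariant identifications between the copies of $\calH_\mu$ in $\calH$ and in $\calY$ (made isometric using $\varphi^*\varphi=cI$) spells out a step the paper only asserts, namely ``$\Hom_G(\calH_\mu,\calY_\eta)\simeq\End_G(\calH_\mu)$ if $\calH_\mu\simeq\calY_\eta$''. For the bare vector-space isomorphism, any equivariant isomorphism would already suffice.
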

\begin{proof}
    Similarly to the proof of Proposition \ref{prop:End_decomp}, we have
    \begin{align}
        \Hom_G(\calH, \calY) &\simeq \bigtimes_{\mu, \eta} \Hom_G(\calH^\mu, \calY^\eta) \simeq \bigtimes_{\mu, \eta} \Hom_G(\calH_\mu, \calY_\eta) \otimes_\R \Hom(\R^{n_\mu}, \R^{m_\eta}). 
    \end{align}
    Now, due to Schur's Lemma, $\Hom_G(\calH_\mu, \calY_\eta) \simeq \End_G(\calH_\mu)$ if $\calH_\mu \simeq \calY_\eta$ and $0$ otherwise.
    By definition of the decompositions, $\calH_\mu \simeq \calY_\eta$ precisely whenever $\mu = \eta$. 
    Therefore,
    \begin{align}
        \Hom_G(\calH, \calY) &\simeq \bigtimes_\mu \Hom(\calH^\mu, \calY^\eta) \simeq \bigtimes_\mu \End(\calH_\mu) \otimes_\R \Hom(\R^{n_\mu}, \R^{m_\mu}) \simeq \bigtimes_\mu M_{m_\mu, n_\mu}(\F_\mu)
    \end{align}
    where we use the identification $\End_G(\calH_\mu) \simeq \F_\mu$ as in the earlier sections.
\end{proof}

With this decomposition, whenever $n_\lambda, m_\lambda \neq 0$, the respective component in $\Hom_G(\calH, \calY)$ is not zero.
Let's restrict our attention to such a non-trivial block $\Hom_G(\calH^\mu, \calY^\mu)$.
Let $T = (t_{ij})_{ij} \in M_{m_\mu, n_\mu}(\F_\mu)$ and $Q \in \O(\calH^\mu)$ be given by
\begin{equation}
    Q = \begin{pmatrix}
    Q_{11} & \ldots & Q_{n_\mu, 1} \\
    \vdots & \ddots & \vdots \\
    Q_{1, n_\mu} & \ldots & Q_{n_\mu, n_\mu}
    \end{pmatrix}
\end{equation}
where each $Q_{k\ell}\in \End(\calH_\mu)$ maps on copy $\ell$ of $\calH_\mu$ onto the $k$:th copy.
Then, $T = TQ$ if and only if
\begin{equation}
    t_{ij} = \sum_k t_{ik} Q_{kj}
\end{equation}
for all $(i,j)$.
This is equivalent to
\begin{equation}
    t_{ij}(I - Q_{jj}) = \sum_{k\neq j}t_{ik} Q_{kj}. \label{eq:qeq}
\end{equation}
Due to Lemma \ref{lem:HomAreMat}, the $t_{ij}\in \F$ can be chosen freely. Consequently, \eqref{eq:qeq} yields that $Q_{kk} = I$ and $Q_{kl} = 0$ whenever $k\neq \ell$.
To conclude, $Q = I_{\calH^\mu}$ which obviously fixes all elements in $\Hom_G(\calH^\mu, \calY^\mu)$ by right multiplication.

As a result, we have
\begin{prop}\label{prop:char_H2}
    Let $I = \{\mu: n_\mu , m_\mu > 0\}$ be the sets of indices where $\Hom(\calH^\mu, \calY^\mu)$ doesn't vanish and $I' = \{\mu: n_\mu > 0, m_\mu = 0 \}$.
    Set $\calH^{\mathrm{fixed}} = \bigoplus_{\mu\in I} \calH^\mu$ and $\calH^{\mathrm{free}} = \bigoplus_{\mu\in I'} \calH^\mu$
    \begin{equation}
        \mathsf{H}^2 \simeq \{I_{\calH^{\mathrm{fixed}}}\} \times \O(\calH^{\mathrm{free}})
    \end{equation}
\end{prop}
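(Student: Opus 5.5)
The plan is to determine $\mathsf{H}^2$ by computing the joint kernel of $\Hom_G(\calH,\calY)$ inside $\calH$. The group $\mathsf{H}^2$ consists of the orthogonal $Q$ with $LQ=L$ for every $L\in\Hom_G(\calH,\calY)$. Equivalently, $L(Q-I)=0$ for all such $L$, that is, $\range(Q-I)\subseteq K:=\bigcap_{L}\ker L$. So the first step is to identify $K$.

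\textbf{Identifying the joint kernel.} By Lemma \ref{lem:HomAreMat} and Schur's Lemma, every equivariant $L$ vanishes on $\calH^\mu$ for $\mu\in I'$, since there is no $\calY_\mu$ for $\calH_\mu$ to map to. This gives $\calH^{\mathrm{free}}\subseteq K$. Conversely, take $\mu\in I$. The block-by-block computation preceding the proposition, using that the entries $t_{ij}\in\F_\mu$ can be chosen freely, shows that the joint kernel of $M_{m_\mu,n_\mu}(\F_\mu)$ acting on $\calH^\mu$ is zero. Concretely, I would pick an equivariant map sending the $j$-th copy of $\calH_\mu$ isomorphically onto one copy in $\calY^\mu$ and killing the other copies; this detects any nonzero vector whose $j$-th component is nonzero. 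Since the different isotypic blocks are independent, I conclude $K=\calH^{\mathrm{free}}$ and $K^\perp=\calH^{\mathrm{fixed}}$. Isotypic components with $n_\mu=0$ do not occur in $\calH$.

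\textbf{Structure of $Q$.} From $\range(Q-I)\subseteq\calH^{\mathrm{free}}$, $Q$ acts as the identity modulo $\calH^{\mathrm{free}}$. Orthogonality pins this down. Take $x\in\calH^{\mathrm{fixed}}$, so $Qx=x+k$ with $k\in\calH^{\mathrm{free}}$. Then $\|x\|^2=\|x\|^2+\|k\|^2$ because $x\perp k$, which forces $k=0$. Hence $Q$ restricted to $\calH^{\mathrm{fixed}}$ is the identity. Since $Q$ is orthogonal, it preserves the orthogonal complement $\calH^{\mathrm{free}}$, and its restriction there is an arbitrary element of $\O(\calH^{\mathrm{free}})$. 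Conversely, any $Q=I\oplus R$ with $R\in\O(\calH^{\mathrm{free}})$ satisfies $LQ=L$, because every $L$ vanishes on $\calH^{\mathrm{free}}$. This yields $\mathsf{H}^2\simeq\{I_{\calH^{\mathrm{fixed}}}\}\times\O(\calH^{\mathrm{free}})$.

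\textbf{Main obstacle.} The delicate step is showing $K\cap\calH^{\mathrm{fixed}}=0$. This requires an honest equivariant map out of each copy of $\calH_\mu$ for $\mu\in I$. Such a map exists because $m_\mu>0$ gives an isomorphic copy of $\calH_\mu$ inside $\calY$, and we can compose the projection onto the $j$-th copy with this isomorphism. One must also keep the bookkeeping straight, since the identification $\calH^\mu\simeq\calH_\mu\otimes\R^{n_\mu}$ uses fixed isomorphisms between copies. Everything else is the elementary orthogonality argument above.
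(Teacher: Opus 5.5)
Your proof is correct and uses the same ingredients as the paper. Schur's lemma kills every equivariant $L$ on $\calH^{\mathrm{free}}$. Equivariant maps out of single copies of $\calH_\mu$, $\mu\in I$, pin $Q$ down on $\calH^{\mathrm{fixed}}$. Orthogonality then splits off $\O(\calH^{\mathrm{free}})$.

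The difference is in how the middle step is organized. The paper writes $L$ as $T=(t_{ij})\in M_{m_\mu,n_\mu}(\F_\mu)$ and $Q$ in blocks $Q_{k\ell}$. It derives $t_{ij}(I-Q_{jj})=\sum_{k\neq j}t_{ik}Q_{kj}$ and uses the freedom of the $t_{ij}$ to get $Q=I$ on $\calH^\mu$. That computation is done for $Q\in\O(\calH^\mu)$, so it presupposes that $Q$ preserves the isotypic block. The proof of the proposition then states $Q|_{\calH^{\mathrm{fixed}}}=I$ and invokes orthogonality only to get invariance of the two summands.

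Your reformulation applies to an arbitrary $Q\in\O(\calH)$: $LQ=L$ for all $L$ if and only if $\range(Q-I)\subseteq\bigcap_L\ker L=\calH^{\mathrm{free}}$. Your Pythagoras step ($Qx=x+k$, $\|x\|^2=\|x\|^2+\|k\|^2$) is exactly what rules out leakage from $\calH^{\mathrm{fixed}}$ into $\calH^{\mathrm{free}}$. This step genuinely needs orthogonality. With respect to $\calH^{\mathrm{fixed}}\oplus\calH^{\mathrm{free}}$, every invertible $Q=\bigl(\begin{smallmatrix} I & 0\\ C & R\end{smallmatrix}\bigr)$ with arbitrary $C$ satisfies $LQ=L$.

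Your route is also more elementary. It needs only Schur's lemma and one equivariant embedding $\calH_\mu\hookrightarrow\calY$, composed with the equivariant projection onto a chosen irreducible summand. It does not need the full matrix description of Lemma~\ref{lem:HomAreMat}. Because you work with the concrete summands of $\calH$, the bookkeeping of identifications between copies that you flagged never arises. The paper's version, in exchange, keeps the explicit $\F_\mu$-matrix picture, which parallels its treatment of $\mathsf{H}^1$.
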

\begin{proof}
    We have an orthogonal decomposition $\calH =\calH^{\mathrm{fixed}} \oplus \calH^{\mathrm{free}}$.
    In particular, we have
    \begin{equation}
        \Hom_G(\calH, \calY) \simeq \Hom_G(\calH^{\mathrm{fixed}}, \calY) \times \{0_{\calH^{\mathrm{free}}, \calY}\}
    \end{equation}
    So, our elements in $\mathsf{H}^2$ only need to fix elements in $\Hom_G(\calH^{\mathrm{fixed}})$ and can act freely on $\calH^{\mathrm{free}}$.
    
    We have shown above that the elements $Q$ fixing $\Hom_G(\calH^{\mathrm{fixed}})$ by right multiplication are those that fix $\calH^{\mathrm{fixed}}$, i.e. $Q|_{\calH^{\mathrm{fixed}}} = I_{\calH^{\mathrm{fixed}}}$.
    Because $Q$ is orthogonal, this implies that $Q(\calH^{\mathrm{fixed}}) = \calH^{\mathrm{fixed}}$ and $Q(\calH^{\mathrm{free}}) = \calH^{\mathrm{free}}$.
    In particular, $Q = I_{\calH^{\mathrm{fixed}}} \times Q'$ for some $Q' \in \O(\calH^{\mathrm{free}})$ and every element of this form obviously fixes $\Hom_G(\calH, \calY)$.
\end{proof}

\subsubsection{Characterizing $\mathsf{H}$}
Now combining Theorem \ref{thm:char_H1} and Proposition \ref{prop:char_H2}, we get
\begin{theorem}
    Let $I = \{\mu: n_\mu , m_\mu > 0\}$ be the sets of indices where $\Hom(\calH^\mu, \calY^\mu)$ doesn't vanish and $I' = \{\mu: n_\mu > 0, m_\mu = 0 \}$.
    Set $\calH^{\mathrm{fixed}} = \bigoplus_{\mu\in I} \calH^\mu$.
    Then
    \begin{equation}
        \mathsf{H} \simeq \{I_{\calH^{\mathrm{fixed}}}\} \times \bigtimes_{\mu\in I' } \left(\bigtimes_{n_\mu =1 } \O(\calH_\mu) \times \bigtimes_{n_\mu>1} \left(\bigtimes_{D_\mu \simeq \R} \O(\calH_\mu) \times \bigtimes_{D_\mu\simeq \C} U\left(\C \calH_\mu\right) \times \bigtimes_{D_\mu\simeq\Hh}Sp\left(\Hh \calH_\mu\right)\right) \right),
    \end{equation}
    where we note that each $\O(\calH_\mu)$, $\U(\C\calH_\mu)$ and $\Sp(\Hh\calH_\mu)$ acts diagonally on $\calH^\mu$, i.e. each element acts on each irreducible block $\calH_\mu$ simultaneously. 
\end{theorem}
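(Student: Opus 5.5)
The plan is to obtain the theorem from the identity $\mathsf H=\mathsf H^1\cap\mathsf H^2$, using Theorem~\ref{thm:char_H1} for $\mathsf H^1$ and Proposition~\ref{prop:char_H2} for $\mathsf H^2$. The identity itself was noted above: since $I\in\Sym^2_G(\calH)$, every $B\in\mathsf H$ is orthogonal, and for orthogonal $B$ the condition $BSB^*=S$ is the same as $Q^*SQ=S$ with $Q=B^*$. The group $\mathsf H$ is closed under inverses, so replacing $B$ by $B^*$ is harmless.

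The key structural fact is that both groups preserve the isotypic decomposition $\calH=\bigoplus_\mu\calH^\mu$. For $\mathsf H^1$ this follows from the lemma that $\mathsf H^1\subseteq\bigtimes_i\O(\calH_i)$. For $\mathsf H^2$, Proposition~\ref{prop:char_H2} gives that its elements fix $\calH^{\mathrm{fixed}}$ pointwise and act orthogonally on $\calH^{\mathrm{free}}$. Hence the intersection splits block by block, and we treat three kinds of blocks separately.

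\begin{itemize}
\item \emph{Blocks with $\mu\in I$.} Here $\mathsf H^2$ forces $Q|_{\calH^\mu}=I$. The identity lies in every factor of $\mathsf H^1$, so the intersection is trivial on these blocks. This gives the factor $\{I_{\calH^{\mathrm{fixed}}}\}$.
\item \emph{Blocks with $\mu\in I'$.} Here $\mathsf H^2$ imposes only orthogonality on $\calH^{\mathrm{free}}$, and any element of $\mathsf H^1$ restricted to these blocks is already orthogonal. The intersection therefore equals the restriction of $\mathsf H^1$ to $\calH^{\mathrm{free}}$. By Theorem~\ref{thm:char_H1}, this is the product over $\mu\in I'$ of $\O(\calH_\mu)$ when $n_\mu=1$, and otherwise of $\O$, $\U$ or $\Sp$ according to the type of $D_\mu$. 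Each factor acts diagonally on the copies of $\calH_\mu$.
\item \emph{Blocks with $n_\mu=0$.} These are absent from $\calH$ and contribute nothing.
\end{itemize}

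One must check that an element of the product group built from these block components genuinely lies in both $\mathsf H^1$ and $\mathsf H^2$. This holds because the conditions defining each group decouple across isotypic blocks. For $\mathsf H^1$ this uses Corollary~\ref{cor:Sym_decomp}, which shows $\Sym^2_G(\calH)$ is a direct product over $\mu$. For $\mathsf H^2$ it uses Lemma~\ref{lem:HomAreMat}, which gives the analogous product structure for $\Hom_G(\calH,\calY)$.

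The main obstacle I expect is exactly this decoupling step. I need to be sure that no condition in $\mathsf H^1$ couples different isotypic blocks, and that $\mathsf H^2$ places no constraint on $\calH^{\mathrm{free}}$ beyond orthogonality. Both reduce to Schur's lemma: there are no nonzero equivariant maps between non-isomorphic irreps. A second point needs care. An element of $\mathsf H^2$ equal to the identity on $\calH^{\mathrm{fixed}}$ must preserve $\calH^{\mathrm{free}}$, and this holds by orthogonality, as in the proof of Proposition~\ref{prop:char_H2}. With these two points settled, the product formula follows directly.
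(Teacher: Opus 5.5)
Your proposal is correct and follows the paper's route. The paper obtains this theorem in one line by intersecting $\mathsf H^1$ (Theorem~\ref{thm:char_H1}) with $\mathsf H^2$ (Proposition~\ref{prop:char_H2}), and your block-by-block bookkeeping just makes that intersection explicit.

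One minor imprecision: $\mathsf H^2\simeq\{I_{\calH^{\mathrm{fixed}}}\}\times\O(\calH^{\mathrm{free}})$ does not preserve the individual isotypic blocks inside $\calH^{\mathrm{free}}$, only the splitting $\calH^{\mathrm{fixed}}\oplus\calH^{\mathrm{free}}$. This does no harm, because $\mathsf H^1$ preserves every block, so the intersection still splits exactly as your bullet points use it.
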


\newpage

\section{\texorpdfstring{Calculation of $\mathsf{H}$ for some concrete actions}{Calculation of H for some concrete actions}}
 Let us calculate the $\mathsf{H}$-group in two simple cases -- the action of $C_3$ on $\R^3$ from Example \ref{ex:linearmodel}, and the $C_4$-group rotating pixel images in $2D$.

\paragraph{$C_3$ acting on $\R^3$}\label{sec:C3actingR3}
We are following Example \ref{ex:linearmodel} giving a quick run down of the classification and following it up with a more detailed discussion.

Let $C_3 = \{e, r, r^2\}$ be the cyclic group acting on $\R^3$ through permutation of the indices, i.e. $re_i := e_{(i+1)\, \mathrm{mod} \, 3}$.
This action is obviously orthogonal with respect to the standard inner product on $\R^3$. 
We obtain the following orthogonal irreducible components
\begin{equation}
    \R^3 = \R \one \oplus (\R \one )^\perp,
\end{equation}
with $\R \one $ being of real type and $(\R\one )^\perp$ being of complex type.
As a result, $\mathsf{H}^1$ is given by
\begin{equation}
    \mathsf{H}^1 \simeq \O(\R \one ) \times \U\left( \C (\R \one )^\perp \right) \simeq \{\pm I_{\R\one }\} \times \SO\left( (\R \one )^\perp \right).
\end{equation}

Furthermore, since we act trivially on $\calY$ and also on $\R\one $, we have
\begin{equation}
    \mathsf{H}^2 = \{ I_{\R\one } \}\times \SO\left(\left(\R\one \right)^\perp\right). 
\end{equation}
Therefore, we have
\begin{equation}
    \mathsf{H} = \mathsf{H}^1\cap \mathsf{H}^2 = \{I_{\R\one }\} \times \SO\left(\left(\R\one \right)^\perp\right) \simeq \SO\left(\left(\R\one \right)^\perp\right).
\end{equation}
Now, let us add the details to the claims above.
Let us start with how we classified the real decomposition of $\R^3$.
As $C_3$ is generated by the element $r$, its complex irreducible components are given by the eigenspaces of $r$.
The characteristic polynomial of $r$ is $X^3 - 1$, so we have three distinct eigenvalues 
\begin{equation}
    1, \xi_3= \exp(2\pi i/3), \  \xi_3^2 = \exp(4\pi i/3),
\end{equation}
the cube roots of unity.
The real eigenspace of $r$ to $1$ is $\R\one $, while the complex ones lie in $\left((\R \one )^\perp\right)_\C$ (
the complex span of the vectors in $(\R\one)^\perp$).
The primitive roots $\xi_3, \xi_3^2$ are complex conjugates of each other, and so are their (complex) eigenspaces complex conjugates too.

Let $w$ be an eigenvector of $\xi_3$ in $\left((\R \one )^\perp\right)_\C$.
Then $\overline{w}$ is an eigenvector to $\xi_3^2$ and we get a real basis of $\left((\R \one )^\perp\right)_\C$ as 
\begin{equation}
    \Re w = w + \overline{w} , \  \Im w = \ j\overline{w} - jw.
\end{equation}
Define $J: \left((\R \one )^\perp\right)_\C \to \left((\R \one )^\perp\right)_\C$ by acting as multiplication of $j$ on the eigenspace of $\xi_3$ and $-j$ on the eigenspace of $\xi_3^2$.
Then
\begin{align}
    \overline{J(w + \overline{w})} = \overline{jw - j\overline{w}} = jw - j\overline{w} = J(w + \overline{w}) \\
    \overline{J(j\overline{w} - jw)} = \overline{w + \overline{w}} = w + \overline{w} = J(j\overline{w} - jw)
\end{align}
so $J$ is real valued on $(\R \one )^\perp$.
In our case, the real and imaginary parts of the normalized eigenvectors, and thus a basis of $(\R \one )^\perp$ are given by
\begin{align}
    v_\Re = \frac{\sqrt{6}}{6}\begin{bmatrix}
    2 & -1 & -1
\end{bmatrix}^T, \ v_\Im = \frac{\sqrt{2}}{2}\begin{bmatrix}
    0 & - 1 & 1
\end{bmatrix}^T
\end{align}
Therefore, $J$ has the following form
\begin{equation}
    J = \begin{bmatrix}
         v_\Re & v_\Im
    \end{bmatrix} \begin{bmatrix}
         0 & -1 \\
         1 & 0
    \end{bmatrix} \begin{bmatrix}
        v_\Re & v_\Im
    \end{bmatrix}^\top = \frac{\sqrt{3}}{3} \begin{bmatrix}
        0 & 1 & -1 \\
        -1 & 0 & 1 \\
        1 & -1 & 0
    \end{bmatrix}
\end{equation}
which is circulant and therefore commutes with $C_3$.
On $(\R \one )^\perp$ it has the standard form
\begin{equation}
    J = \begin{bmatrix}
        0 & -1 \\
        1 & 0
    \end{bmatrix}
\end{equation}
with respect to the basis $v_\Re$ and $v_\Im$.

Using the same basis, we have the following
\begin{align}
    \mathsf{H} \simeq \left\{ \begin{bmatrix}
        \frac{\sqrt{3}}{3}\one & v_\Re & v_\Im
    \end{bmatrix} \begin{bmatrix}
          1 & 0 & 0 \\
         0 & a & \mp b \\
         0 & b &  \pm a
    \end{bmatrix} \begin{bmatrix}
        \frac{\sqrt{3}}{3}\one & v_\Re & v_\Im
    \end{bmatrix}^\top \, \Big\vert \, a^2 + b^2 = 1\right\}.
\end{align}

\paragraph{Rotating an image grid}
Consider $C_4 = \{e, r, r^2, r^3\}$ acting on an image grid of size $2D\times 2D$. The rotation action naturally decomposes the grid into $D$ orbits of size 4
(cf. Figure \ref{fig:rotationonagrid}). Hence, it is fruitful to think about the space of images as $\R^4\otensor \R^{D}$. $C_4$ then acts via cyclic permutation onto the $\R^4$ component, i.e.
\begin{equation*}
    r (e_i \otensor x) := e_{(i+1)\, \mathrm{mod} \, 4} \otensor x
\end{equation*}
for all $x\in \R^D$. In particular, each component $\R^4 \otensor_\R e_i$ for $i = 1, \ldots, D$ is fixed by the $C_4$-action.
The components further decompose into irreps as follows 
\begin{equation} \label{eq:orbit_decomp}
    \R^4  = \R\one \oplus \R \one _{\mathrm{alt}} \oplus \mathrm{span}(\psi_1,\psi_2),
\end{equation}
where $\one_{\mathrm{alt}}= [1,-1,1,-1]^\top$, and $\psi_1=[1,0,-1,0]^\top$ and $\psi_2= [0,1,0,-1]^\top$, see also Figure \ref{fig:gridactionrepresentation}. The spaces $\R\one$ and $\R\one_{\mathrm{alt}}$ are real representations, whereas $V:=\mathrm{span}(\psi_1,\psi_2)$ is complex. 

One proves this similarly to the $C_3$ example: The characteristic polynomial of $r$ is now
\begin{equation}
    X^4 - 1 = (X+1)(X-1)(X^2 + 1),
\end{equation}
so that there are two real eigenspaces/irreps $\one$ and $\one_{\mathrm{alt}}$, and two complex ones, yielding the complex irrep $V:= \mathrm{span}(\psi_1,\psi_2)$. 

\begin{figure}
    \centering
    \resizebox{0.6\textwidth}{!}{
    \begin{tikzpicture}
        \newcommand\xMin{-5}
        \newcommand\xMax{5}
        \newcommand\yMin{-5}
        \newcommand\yMax{5}
        \newcommand\xRec{2}
        \newcommand\yRec{1}
        
        \foreach \i in {\xMin,...,\xMax} {
            \draw [very thin,gray] (\i,\yMin) -- (\i,\yMax)  node [below] at (\i, 0) {$\i$};
        }
        \foreach \i in {\yMin,...,\yMax} {
            \draw [very thin,gray] (\xMin,\i) -- (\xMax,\i) node [left] at (0,\i) {$\i$};
        }

        \fill[blue!20] (\xRec, \yRec) rectangle (\xRec+1, \yRec+1);
        \fill[yellow!20] (-\yRec, \xRec) rectangle (-\yRec-1, \xRec+1);
        \fill[red!20] (-\xRec, -\yRec) rectangle (-\xRec-1, -\yRec-1);
        \fill[green!20] (\yRec, -\xRec ) rectangle (\yRec+1, -\xRec -1);
    
        \draw[thick] (\xRec, \yRec) rectangle (\xRec+1, \yRec+1);
        \draw[thick] (-\yRec, \xRec) rectangle (-\yRec-1, \xRec+1);
        \draw[thick] (-\xRec, -\yRec) rectangle (-\xRec-1, -\yRec-1);
        \draw[thick] (\yRec, -\xRec ) rectangle (\yRec+1, -\xRec -1);
    
        \draw[->, thick, >=Stealth] (\xRec + 0.5,\yRec + 1) 
            to[out=90,in=0] (-\yRec, \xRec + 0.5);
    
        \draw[->, thick, >=Stealth] (-\yRec - 1, \xRec + 0.5) 
            to[out=180,in=90] (-\xRec - 0.5, -\yRec);
    
        \draw[->, thick, >=Stealth] (-\xRec, -\yRec - 0.5) 
            to[out=270,in=180] (\yRec, -\xRec - 0.5);
    
        \draw[->, thick, >=Stealth] (\yRec + 1, -\xRec - 0.5) 
            to[out=0,in=270] (\xRec + 0.5, \yRec);
    \end{tikzpicture}
    }
    \caption{Rotation orbit of a grid cell.}
    \label{fig:rotationonagrid}
\end{figure}

\begin{figure}[ht]
    \centering
    \begin{subfigure}{0.2\textwidth}
    \resizebox{\linewidth}{!}{
        \begin{tikzpicture}[scale=1]
            \newcommand\xMin{-5}
            \newcommand\xMax{5}
            \newcommand\yMin{-5}
            \newcommand\yMax{5}
            \newcommand\xRec{2}
            \newcommand\yRec{1}
            
            \foreach \i in {\xMin,...,\xMax} {
                \draw [very thin,gray] (\i,\yMin) -- (\i,\yMax)  node [below] at (\i, 0) {$\i$};
            }
            \foreach \i in {\yMin,...,\yMax} {
                \draw [very thin,gray] (\xMin,\i) -- (\xMax,\i) node [left] at (0,\i) {$\i$};
            }

            \fill[blue!20] (\xRec, \yRec) rectangle (\xRec+1, \yRec+1);
            \fill[blue!20] (-\yRec, \xRec) rectangle (-\yRec-1, \xRec+1);
            \fill[blue!20] (-\xRec, -\yRec) rectangle (-\xRec-1, -\yRec-1);
            \fill[blue!20] (\yRec, -\xRec ) rectangle (\yRec+1, -\xRec -1);
        
            \draw[thick] (\xRec, \yRec) rectangle (\xRec+1, \yRec+1);
            \draw[thick] (-\yRec, \xRec) rectangle (-\yRec-1, \xRec+1);
            \draw[thick] (-\xRec, -\yRec) rectangle (-\xRec-1, -\yRec-1);
            \draw[thick] (\yRec, -\xRec ) rectangle (\yRec+1, -\xRec -1);
            
            \node[font=\Huge,align=center] at (\xRec+0.5, \yRec+0.5) {1};
            \node[font=\Huge,align=center] at (-\yRec-0.5, \xRec+0.5) {1};
            \node[font=\Huge,align=center] at (-\xRec-0.5, -\yRec-0.5) {1};
            \node[font=\Huge,align=center] at (\yRec+0.5, -\xRec-0.5) {1};
            
            \draw[->, thick, >=Stealth] (\xRec + 0.5,\yRec + 1) 
                to[out=90,in=0] (-\yRec, \xRec + 0.5);
            \draw[->, thick, >=Stealth] (-\yRec - 1, \xRec + 0.5) 
                to[out=180,in=90] (-\xRec - 0.5, -\yRec);
            \draw[->, thick, >=Stealth] (-\xRec, -\yRec - 0.5) 
                to[out=270,in=180] (\yRec, -\xRec - 0.5);
            \draw[->, thick, >=Stealth] (\yRec + 1, -\xRec - 0.5) 
                to[out=0,in=270] (\xRec + 0.5, \yRec);
        \end{tikzpicture}
        }
        \caption{$\one $}
        \label{fig:real1}
    \end{subfigure}%
    \begin{subfigure}{0.2\textwidth}
    \resizebox{\linewidth}{!}{
        \begin{tikzpicture}[scale=1]
            \newcommand\xMin{-5}
            \newcommand\xMax{5}
            \newcommand\yMin{-5}
            \newcommand\yMax{5}
            \newcommand\xRec{2}
            \newcommand\yRec{1}
            
            \foreach \i in {\xMin,...,\xMax} {
                \draw [very thin,gray] (\i,\yMin) -- (\i,\yMax)  node [below] at (\i, 0) {$\i$};
            }
            \foreach \i in {\yMin,...,\yMax} {
                \draw [very thin,gray] (\xMin,\i) -- (\xMax,\i) node [left] at (0,\i) {$\i$};
            }

            \fill[blue!20] (\xRec, \yRec) rectangle (\xRec+1, \yRec+1);
            \fill[red!20] (-\yRec, \xRec) rectangle (-\yRec-1, \xRec+1);
            \fill[blue!20] (-\xRec, -\yRec) rectangle (-\xRec-1, -\yRec-1);
            \fill[red!20] (\yRec, -\xRec ) rectangle (\yRec+1, -\xRec -1);
        
            \draw[thick] (\xRec, \yRec) rectangle (\xRec+1, \yRec+1);
            \draw[thick] (-\yRec, \xRec) rectangle (-\yRec-1, \xRec+1);
            \draw[thick] (-\xRec, -\yRec) rectangle (-\xRec-1, -\yRec-1);
            \draw[thick] (\yRec, -\xRec ) rectangle (\yRec+1, -\xRec -1);
        
            \node[font=\Huge,align=center] at (\xRec+0.5, \yRec+0.5) {1};
            \node[font=\Huge,align=center] at (-\yRec-0.5, \xRec+0.5) {-1};
            \node[font=\Huge,align=center] at (-\xRec-0.5, -\yRec-0.5) {1};
            \node[font=\Huge,align=center] at (\yRec+0.5, -\xRec-0.5) {-1};
        
            \draw[->, thick, >=Stealth] (\xRec + 0.5,\yRec + 1) 
                to[out=90,in=0] (-\yRec, \xRec + 0.5);
            \draw[->, thick, >=Stealth] (-\yRec - 1, \xRec + 0.5) 
                to[out=180,in=90] (-\xRec - 0.5, -\yRec);
            \draw[->, thick, >=Stealth] (-\xRec, -\yRec - 0.5) 
                to[out=270,in=180] (\yRec, -\xRec - 0.5);
            \draw[->, thick, >=Stealth] (\yRec + 1, -\xRec - 0.5) 
                to[out=0,in=270] (\xRec + 0.5, \yRec);
        
        \end{tikzpicture}
        }
        \caption{$\one _{alt}$}
    \label{fig:real-1}
    \end{subfigure}%
    \begin{subfigure}{0.2\textwidth}
        \resizebox{\linewidth}{!}{
        \begin{tikzpicture}[scale=1]
            \newcommand\xMin{-5}
            \newcommand\xMax{5}
            \newcommand\yMin{-5}
            \newcommand\yMax{5}
            \newcommand\xRec{2}
            \newcommand\yRec{1}
            
            \foreach \i in {\xMin,...,\xMax} {
                \draw [very thin,gray] (\i,\yMin) -- (\i,\yMax)  node [below] at (\i, 0) {$\i$};
            }
            \foreach \i in {\yMin,...,\yMax} {
                \draw [very thin,gray] (\xMin,\i) -- (\xMax,\i) node [left] at (0,\i) {$\i$};
            }

            \fill[blue!20] (\xRec, \yRec) rectangle (\xRec+1, \yRec+1);
            \fill[red!20] (-\xRec, -\yRec) rectangle (-\xRec-1, -\yRec-1);
        
            \draw[thick] (\xRec, \yRec) rectangle (\xRec+1, \yRec+1);
            \draw[thick] (-\yRec, \xRec) rectangle (-\yRec-1, \xRec+1);
            \draw[thick] (-\xRec, -\yRec) rectangle (-\xRec-1, -\yRec-1);
            \draw[thick] (\yRec, -\xRec ) rectangle (\yRec+1, -\xRec -1);
        
            \node[font=\Huge,align=center] at (\xRec+0.5, \yRec+0.5) {1};
            \node[font=\Huge,align=center] at (-\yRec-0.5, \xRec+0.5) {0};
            \node[font=\Huge,align=center] at (-\xRec-0.5, -\yRec-0.5) {-1};
            \node[font=\Huge,align=center] at (\yRec+0.5, -\xRec-0.5) {0};
        
            \draw[->, thick, >=Stealth] (\xRec + 0.5,\yRec + 1) 
                to[out=90,in=0] (-\yRec, \xRec + 0.5);
            \draw[->, thick, >=Stealth] (-\yRec - 1, \xRec + 0.5) 
                to[out=180,in=90] (-\xRec - 0.5, -\yRec);
            \draw[->, thick, >=Stealth] (-\xRec, -\yRec - 0.5) 
                to[out=270,in=180] (\yRec, -\xRec - 0.5);
            \draw[->, thick, >=Stealth] (\yRec + 1, -\xRec - 0.5) 
                to[out=0,in=270] (\xRec + 0.5, \yRec);
        
        \end{tikzpicture}}
        \caption{$\psi_1$}
        \end{subfigure}
         \begin{subfigure}{0.2\textwidth}
        \resizebox{\linewidth}{!}{
        \begin{tikzpicture}[scale=1]
            \newcommand\xMin{-5}
            \newcommand\xMax{5}
            \newcommand\yMin{-5}
            \newcommand\yMax{5}
            \newcommand\xRec{2}
            \newcommand\yRec{1}
            
            \foreach \i in {\xMin,...,\xMax} {
                \draw [very thin,gray] (\i,\yMin) -- (\i,\yMax)  node [below] at (\i, 0) {$\i$};
            }
            \foreach \i in {\yMin,...,\yMax} {
                \draw [very thin,gray] (\xMin,\i) -- (\xMax,\i) node [left] at (0,\i) {$\i$};
            }

            \fill[blue!20] (-\yRec, \xRec) rectangle (-\yRec-1, \xRec+1);
            \fill[red!20] (\yRec, -\xRec ) rectangle (\yRec+1, -\xRec -1);
        
            \draw[thick] (\xRec, \yRec) rectangle (\xRec+1, \yRec+1);
            \draw[thick] (-\yRec, \xRec) rectangle (-\yRec-1, \xRec+1);
            \draw[thick] (-\xRec, -\yRec) rectangle (-\xRec-1, -\yRec-1);
            \draw[thick] (\yRec, -\xRec ) rectangle (\yRec+1, -\xRec -1);
        
            \node[font=\Huge,align=center] at (\xRec+0.5, \yRec+0.5) {0};
            \node[font=\Huge,align=center] at (-\yRec-0.5, \xRec+0.5) {1};
            \node[font=\Huge,align=center] at (-\xRec-0.5, -\yRec-0.5) {0};
            \node[font=\Huge,align=center] at (\yRec+0.5, -\xRec-0.5) {-1};
        
            \draw[->, thick, >=Stealth] (\xRec + 0.5,\yRec + 1) 
                to[out=90,in=0] (-\yRec, \xRec + 0.5);
            \draw[->, thick, >=Stealth] (-\yRec - 1, \xRec + 0.5) 
                to[out=180,in=90] (-\xRec - 0.5, -\yRec);
            \draw[->, thick, >=Stealth] (-\xRec, -\yRec - 0.5) 
                to[out=270,in=180] (\yRec, -\xRec - 0.5);
            \draw[->, thick, >=Stealth] (\yRec + 1, -\xRec - 0.5) 
                to[out=0,in=270] (\xRec + 0.5, \yRec);
        
        \end{tikzpicture}}
        \caption{$\psi_2$}
        \end{subfigure}
    \label{fig:complex} 
    \caption{The four vectors in Equation \ref{eq:orbit_decomp}. Note that rotating $\one$ fixes it, $\one_{\mathrm{alt}}$ brings it to its negative, $\psi_1$ onto $\psi_2$ and $\psi_2$ onto $-\psi_1$.}
    \label{fig:gridactionrepresentation}

\end{figure}

Now note that within each orbit, the alternating representation on $\R\one_{\mathrm{alt}}$ is not isomorphic to the trivial one on $\R\one$. We do have isomorphy across the orbits, however. Theorem \ref{thm:char_H1} therefore yields
\begin{equation}
    \mathsf{H}^1 \simeq \{\pm I_{\R\one}\}\times \{\pm I_{\R\one_{alt}}\} \times \U(\C V).
\end{equation}
Note that the action of $\U(\C V)$ can be represented conveniently in the basis $\psi_1,\psi_2$ -- the $J$-operator corresponds to the matrix
\begin{align*}
    \begin{bmatrix}
        0 & -1 \\
        1 & 0 
    \end{bmatrix}.
\end{align*}

Now, if we assume that $G$ acts trivially on $\calY$, Proposition \ref{prop:char_H2} yields
\begin{equation}
    \mathsf{H}^2 = \{I_{\R\one}\otensor I_{\R^D}\} \times \O\left(\left(V\oplus \R\one_{alt}\right)\otensor \R^D\right) .
\end{equation}
As a result,
\begin{equation}
    \mathsf{H} = \mathsf{H}^1\cap \mathsf{H}^2 = \left(\{I_{\R\one}\}\times \{\pm I_{\R\one_{alt}}\} \times \U(\C V)\right)\otensor I_{\R^D} \simeq \{\pm 1\}\times \U(\C V).
\end{equation}

\end{document}